\documentclass{article}

\usepackage{microtype}
\usepackage{graphicx}
\usepackage{subcaption}
\usepackage{booktabs} 

\usepackage{hyperref}

\usepackage[accepted]{icml2026}

\usepackage{amsmath}
\usepackage{amssymb}
\usepackage{mathtools}
\usepackage{amsthm}

\usepackage[capitalize,noabbrev]{cleveref}

\theoremstyle{plain}
\newtheorem{theorem}{Theorem}[section]
\newtheorem{proposition}[theorem]{Proposition}
\newtheorem{lemma}[theorem]{Lemma}
\newtheorem{corollary}[theorem]{Corollary}
\theoremstyle{definition}
\newtheorem{definition}[theorem]{Definition}
\newtheorem{assumption}[theorem]{Assumption}
\theoremstyle{remark}
\newtheorem{remark}[theorem]{Remark}

\usepackage{thmtools}

\usepackage{xurl}

\usepackage[textsize=tiny]{todonotes}

\icmltitlerunning{How Many Different Outputs Can a Transformer Generate?}

\begin{document}

\twocolumn[
  \icmltitle{How Many Different Outputs Can a Transformer Generate?}



  \icmlsetsymbol{equal}{*}

  \begin{icmlauthorlist}
    \icmlauthor{Maxime Meyer}{equal,nus_maths,IPAL}
    \icmlauthor{Mario Michelessa}{equal,IPAL,nus_comp}
    \icmlauthor{Caroline Chaux}{IPAL,marseille}
    \icmlauthor{Vincent Y. F. Tan}{nus_maths,nus_eng}
  \end{icmlauthorlist}

  \icmlaffiliation{nus_maths}{Department of Mathematics, National University of Singapore, Singapore, 117543}
  \icmlaffiliation{IPAL}{IPAL, IRL2955, Singapore}
  \icmlaffiliation{nus_comp}{School of Computing, National University of Singapore, Singapore, 117543}
  \icmlaffiliation{marseille}{Aix Marseille Univ, CNRS, I2M, Marseille, France}
  \icmlaffiliation{nus_eng}{Department of Electrical and Computer Engineering, National University of Singapore}

  \icmlcorrespondingauthor{Maxime Meyer}{maxime.meyer@u.nus.edu}

  \icmlkeywords{Machine Learning, ICML, transformers, expressivity, accessible sequences, copying, cramming, scaling, long context}

  \vskip 0.3in
]



\printAffiliationsAndNotice{\icmlEqualContribution}

\begin{abstract}
  We study how we can leverage only a handful of characteristics of a transformer's architecture to closely predict the number of different sequences it can output, both qualitatively and quantitatively.
  We provide an upper bound depending on the length of the prompt, which we show empirically to be tight up to a factor less than 10, across architectures and model sizes.
  Our analysis also provides a theoretical explanation for previously observed empirical failures of transformers on simple sequence tasks---such as copying and cramming.
  Formally, we prove that (i) the maximal length of accessible sequences (those that the transformer can output for some prompt) grows linearly with the prompt length, (ii) beyond a critical threshold, the proportion of accessible sequences decays exponentially with sequence length, and (iii) the linear coefficient relating prompt length to accessible sequence length admits a theoretical upper bound.
  Notably, these results hold even with unbounded context and computation time.
\end{abstract}

\section{Introduction}

Transformers~\citep{Vaswani_A_2017_p-nips-OG} have become the dominant architecture in modern deep learning, achieving state-of-the-art results across domains such as natural language processing~\citep{wolf-etal-2020-transformers} and computer vision~\citep{dosovitskiy2021an, zhao2021point, zhai2022scaling}. Despite their success, these models fail on surprisingly simple tasks---such as copying or repeating sequences \citep{Jelassi2024RepeatAM}---once the input exceeds a certain length. These failures persist even for large models specifically trained on those tasks, suggesting that they may reflect intrinsic architectural constraints.

\vspace{1em}
This paper characterizes such constraints by identifying structural limitations that hold for all transformers, independently of model size, training data, or available computation:
\begin{itemize}
    \item We show that every transformer can output only a finite set of output sequences. That is, most sequences are fundamentally inaccessible: no choice of prompt, context length, or inference strategy can cause the model to generate them. This impossibility result provides a principled explanation for well-documented failures of transformers on simple algorithmic tasks, such as exact copying.
    \item Moreover, we show that for a fixed prompt length, the proportion of accessible sequences of length $n$ decays exponentially fast with $n$ beyond a model-dependent threshold. This phenomenon explains a puzzling empirical behavior observed in practice, where transformers perform nearly perfectly up to a critical length and then fail abruptly rather than gradually.
    \item Finally, we derive an explicit formula to predict those thresholds as functions of the transformer architecture being used. Remarkably, this theoretical result closely matches empirical performance observed across a wide range of transformer architectures and model sizes.
\end{itemize}

To establish these results, we analyze how the finite precision and the geometry of the internal representations of a transformer constrain the set of sequences it can represent or generate. We formalize this constraint through the notion of \emph{accessible sequences}, defined as outputs that can arise from some input prompt (Section~\ref{sec:embed}). The main theoretical analysis, which yields the results stated above, is conducted in Section~\ref{sec:main}. We also devise experiments that validate our theoretical predictions in Section~\ref{sec:expe}\footnote{Code is available at \href{https://github.com/mario-michelessa/transformers\_accessibility}{github.com/mario-michelessa/transformers\_accessibility}}.

\begin{remark}[Broader Applicability of Our Results]\label{rmk:mamba}
    Our bounds extend beyond transformers and apply to any architecture with bounded internal representations and finite precision. This may include state-space models such as Mamba~\citep{gu2024mamba}. However, the possibility of diverging hidden states~\citep{lu2026mamba} makes the analysis more challenging, and we leave this direction for future work.
\end{remark}

\subsection{Related Works}

\paragraph{Approximation Theory of Transformers.}
The representational power of transformers has been extensively explored through the lens of approximation theory. In contrast, we study how finite numerical precision constrains this expressivity—showing that, as rounding errors accumulate, even simple functions such as the identity map become difficult to approximate in practice. \citet{Yun_C_2020_p-iclr_continuous} established the first universal approximation property for continuous functions, later extended to sparse attention transformers~\citep{a8b4c6da7b064322905cf50d4103d356} and sparse target functions~\citep{edelman2022inductive}. Subsequent works refined these results for one-layer architectures~\citep{kajitsuka2024are, Jiang_H_2024_p_nips_Jackson-type} and examined approximation under compact output constraints~\citep{kratsios2022universal}. Architectural factors such as relative positional encodings~\citep{luo2022your} have also been analyzed. A complementary line of work investigates approximation properties under prompt tuning~\citep{Wang_Y_2023_p-nips_prompt-og, Hu_J_2025_p-iclr_prompt-smaller}. Among those, \citet{meyer2025memorylimitationsprompttuning} uses a proof strategy similar to ours, leveraging a packing number-type argument.

\paragraph{Empirical Analyses of Sequence Accessibility.}
Empirical studies have also examined the ability of transformers to reproduce or memorize sequences. \citet{barbero2024transformers} showed that transformers struggle to reliably copy long sequences, but their analysis focuses on highly synthetic settings—sequences differing by only a few tokens or alternating between two symbols ({\it e.g.} 0 and 1).
In contrast, our work studies the capacity of transformers to generate, and therefore copy, arbitrary sequences. In this line of work, \citet{Jelassi2024RepeatAM} showed that even transformers trained on the specific task of copying struggle to reliably execute this task as the sequence length increases. The work most closely related to ours is~\citet{kuratov-etal-2025-cramming}, who introduced the “cramming’’ procedure as an exact test of accessibility. For any target sequence $[\mathbf{x}_1,\ldots,\mathbf{x}_n]$, a set of trainable memory vectors $[\mathbf{mem}]=[\mathbf{y}_1,\ldots,\mathbf{y}_m]$ is optimized so that the model generates $[\mathbf{x}_1,\ldots,\mathbf{x}_n]$ when conditioned on $[\mathbf{mem}]$. This approach effectively searches over all possible prompts—including highly unnatural ones—to determine whether a sequence is accessible. Cramming thus provides an operational definition of accessibility: a sequence is accessible if and only if a corresponding encoding $[\mathbf{mem}]$ exists. Our work provides the first rigorous theoretical explanation for the empirical findings in~\citet{kuratov-etal-2025-cramming}, and extends the experimental analysis beyond the original setting. We describe this further in Section~\ref{sec:expe}.

\paragraph{Mean-Field Transformers.}
To analyze prompts of arbitrary length, it is useful to interpret a transformer as a mapping between probability measures rather than finite-length sequences, following the mean-field formulation introduced by~\citet{pmlr-v151-sander22a} and defined in Section~\ref{sec:mean-field}. This perspective has been further developed in several directions. \citet{Castin_V_2024_p-icml_lipschitz} employed it to study the Lipschitz continuity of transformers, while~\citet{furuya2025transformers} extended the approximation theory of transformers to the space of probability distributions. More recently,~\citet{meyer2025memorylimitationsprompttuning} leveraged the mean-field framework to characterize the memory limitations of transformers. Our analysis builds upon this formulation to derive accessibility bounds that hold independent of the prompt.

\paragraph{Representations of the Embedding Space.}
Prior work has visualized and probed LLM embedding spaces in three main ways. First, transformer representations have been shown to be highly anisotropic, with a large fraction of the variance concentrated along a small number of directions~\citep{mu2018all, ethayarajh-2019-contextual}. 
Second, probing methods have shown that certain syntactic and semantic properties are linearly recoverable from transformer hidden representations~\citep{alain2016understanding,hewitt2019structural,kornblith2019similarity}.
Third, decoding intermediate activations by projecting them through the decoder matrix to logits reveals layerwise sharpening of next-token beliefs~\citep{nostalgebraist2020logit,belrose2023eliciting}.
Inspired by this decoding view, we use the same decoder matrix to find hyperplanes that partition the embedding space into next-token argmax cells in Section~\ref{sec:embed}.

\paragraph{Expressive Power of Transformers.}
Complementary to the line of work on approximation theory, the expressivity of transformers can also be studied in terms of the formal languages they can express~\citep{strobl-etal-2024-formal}. Transformers have been shown to represent Turing machines~\citep{NEURIPS2022_4ebf1d74} and arbitrary computer programs~\citep{pmlr-v202-giannou23a}. The impact of chain-of-thought reasoning~\citep{merrill2024the} and precision~\citep{chiang2025transformers} have also been studied. Most relevant to our work is the work of~\citet{huang2025a}, who prove the impossibility of copying arbitrary sequences for transformers with absolute positional encodings and under some idealized assumptions. We extend those results by removing those assumptions and providing both a qualitative and quantitative analysis of the performance of transformers in the copying task.

\subsection{Notations}

Bold lowercase letters (e.g., $\mathbf{x}$) denote vectors, and bold uppercase letters (e.g., $\mathbf{W}$) denote matrices. For a matrix $\mathbf{W}$, we write $\mathbf{W}_{i,j}$, $\mathbf{W}_{i,:}$, and $\mathbf{W}_{:,j}$ to refer to its $(i,j)$-th element, $i$-th row, and $j$-th column, respectively. We can use $i=-1$ and $j=-1$ to denote the last row and column, respectively. The concatenation of $k$ matrices with the same number of  rows  $d$, $\mathbf W^1\in\mathbb R^{d\times m_i},\dots,\mathbf W^k\in\mathbb R^{d\times m_k}$ is denoted as $[\mathbf W^1,\dots,\mathbf W^k]\in\mathbb R^{d\times\sum_{i=1}^km_i}$.

We write $\sigma$ to denote the softmax function. The rectified linear unit is defined as $\mathrm{ReLU}(v) = \max(v, 0)$, where the maximum is applied elementwise. We use $|\mathcal A|$ to denote both the cardinality of a finite set, and the volume of a set $\mathcal A\subset\mathbb R^n$ for some $n\in\mathbb N$ (we take $\mathbb N$ to be the set of strictly positive integers). We denote by $\mathcal V=\{t_1,\ldots,t_{|\mathcal V|}\}$ the vocabulary (that is the set of all distinct tokens) and $d$ the embedding dimension. The probability simplex over the vocabulary is denoted as $\Delta^{\mathcal{V}}$.

Throughout, $\|\cdot\|$ denotes a norm, which by default may be taken as the standard $\ell_\infty$ or $\ell_p$ norm for vectors. Unless otherwise specified, our results hold for all of these norms. Let $\Pi(\mu, \nu) $ be the set of all joint distributions with marginals $\mu $ and $\nu$; this is also called the set of {\em couplings} of $\mu$ and $\nu$. When considering distributions, we use the Wasserstein distance $$W_q(\mu, \nu) = \left( \inf_{\pi \in \Pi(\mu, \nu)} \int \|x - y\|^q \, \mathrm{d}\pi(x, y) \right)^{1/q}.$$ This distance is further discussed in Appendix~\ref{app:wasserstein}.

\section{The Transformer Architecture}

\subsection{Standard Transformers}

We consider transformer networks~\citep{Vaswani_A_2017_p-nips-OG} composed of repeated self-attention and MLP layers, together with an embedding layer and a final linear projection.

\begin{definition}[One-Hot Representation]
Let $\mathcal V=\{t_1,\ldots,t_{|\mathcal V|}\}$ denote the vocabulary.  
For a token $t_i\in\mathcal V$, the \emph{one-hot representation} of $t_i$ is the canonical vector 
\[
\mathbf c_i=(\mathbf1_{i=j})_{j\in[|\mathcal V|]}\in\mathbb R^{|\mathcal V|}.
\]
\end{definition}

\begin{definition}[Embedding Layer]
Let $\mathbf{E}\in\mathbb R^{d\times |\mathcal{V}|}$ be the embedding matrix.  
For a sequence of $m$ tokens $\mathbf{t}=(t_{i_1},\ldots,t_{i_m})\in\mathcal{V}^m$, the embedding layer maps the one-hot representation $[\mathbf c_{i_1},\ldots,\mathbf c_{i_m}]\in\mathbb R^{|\mathcal V|\times m}$ to
\[
\mathbf{X}_{\mathbf t} = \mathbf{E}[\mathbf c_{i_1},\ldots,\mathbf c_{i_m}]\in\mathbb R^{d\times m}.
\]
\end{definition}

\begin{definition}[Transformer with Hard Prompt Inputs]\label{def:hard_prompt}
    An $l$-layer transformer with vocabulary $\mathcal V$ and unembedding matrix $\mathbf F\in\mathbb R^{|\mathcal{V}|\times d}$ is the mapping
    \[
    \tau:\bigcup_{m=1}^{+\infty}\mathcal{V}^m \longrightarrow \Delta^{\mathcal{V}},
    \]
    defined as
    \begin{align*}
        &\tau(\mathbf{t}) = \sigma\big( \mathbf F (\mathbf{X}_l)_{:,-1} \big),\quad\mbox{where}\\
    &\mathbf{X}_l = \mathrm L(\mathbf{X}_{\mathbf t})\quad\mbox{is the output of the final transformer layer,}\\
    &\mathbf{X}_{\mathbf t}\quad\mbox{is the embedded input with positional encodings.}
    \end{align*}

    The only characteristic of the transformer layers $\mathrm L$ required in this paper is that they keep the embedding space bounded. The formal proof and empirical validation, together with the formal definition of $\mathrm L$ are deferred to Appendix~\ref{app:transformer_layer}.
\end{definition}

A transformer can also be seen as mapping soft prompt inputs from $\bigcup_{m=1}^{+\infty}\mathbb{R}^{d\times m}$ (which may or may not correspond to sequences of tokens).

\begin{definition}[Transformer with Soft Prompt Inputs]\label{def:soft_prompt}
     An $l$-layer transformer with embedding dimension $d$ is the mapping
    \[
    \tau:\bigcup_{m=1}^{+\infty}\mathbb R^{d\times m} \longrightarrow \Delta^{\mathcal{V}},
    \]
    defined as
    \begin{align*}
        &\tau(\mathbf{X}) = \sigma\Big( \mathbf F\big( \mathrm L(\mathbf{X})\big)_{:,-1} \Big).
    \end{align*}
\end{definition}

We consider Definition~\ref{def:soft_prompt} in our analysis. However, since a transformer with soft prompt inputs is more expressive than its hard prompt counterpart, all of our results directly extend to Definition~\ref{def:hard_prompt}.

\subsection{Mean-Field Transformers}\label{sec:mean-field}

To handle arbitrary prompt lengths, we can interpret transformers as maps over probability measures~\citep{pmlr-v151-sander22a,NEURIPS2023_b2b3e1d9}. This is possible because of two key properties of attention (and hence of a transformer as defined in Definition~\ref{def:soft_prompt}):
\begin{itemize}
    \item Attention layers are permutation equivariant.
    \item For any attention block $\mathrm L$ and input $\mathbf X$, $\mathrm L(\mathbf X)_{:,i}=\mathrm L([\mathbf X,\mathbf X])_{:,i}$.
\end{itemize}

We can thus replace every sequence \( \mathbf X \) of length $m$ with its associated empirical measure
\[
\mathrm{M}(\mathbf X) := \frac{1}{m} \sum_{i=1}^m \delta_{\mathbf X_{:,i}},
\]

\begin{figure*}[!ht]
    \centering
    \includegraphics[width=0.6\textwidth]{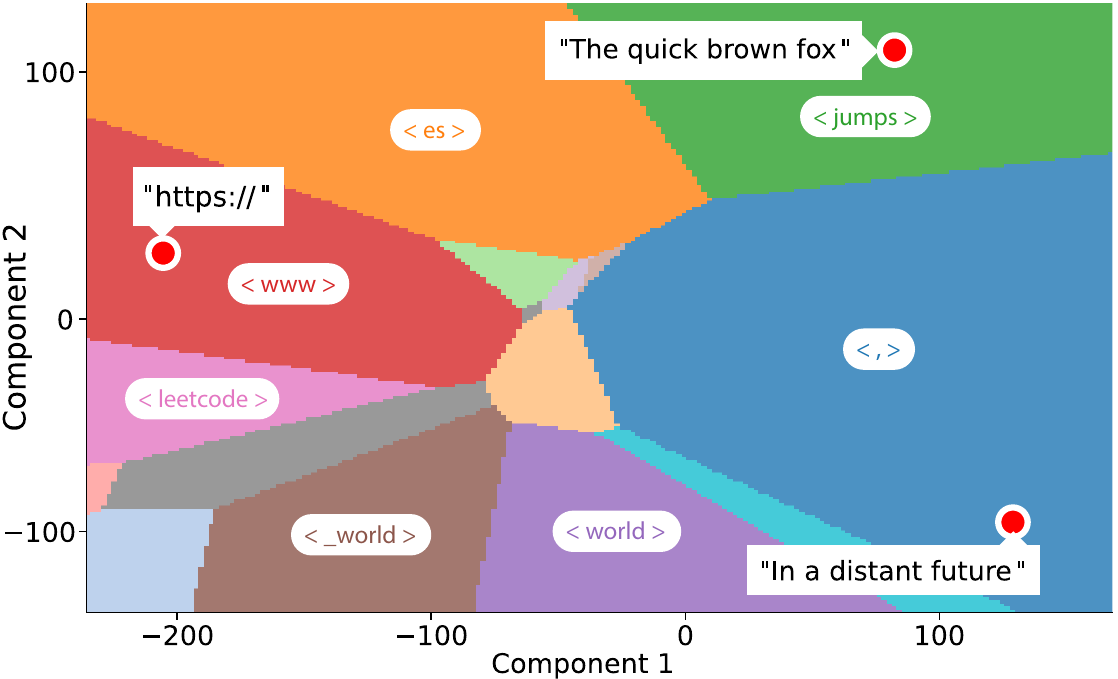}
    \caption{Plane cut of the embedding space $\mathcal{E}$ of Qwen-2~(0.5B)~\citep{yang2024qwen2}, passing through the final token embeddings of “The quick brown fox”, “https://”, and “In a distant future” (red markers).
    Colors encode, for each pixel, the most probable next token via the model’s output projection (decoder readout). This induces regions $\mathcal{E}_{\mathrm t}$ annotated by $<\mathrm t>$. Only tokens whose regions have the largest areas are annotated for readability. The three embeddings fall respectively in $\mathcal{E}_{\text{jumps}}$, $\mathcal{E}_{\text{www}}$, and $\mathcal{E}_{,}$, {\it i.e.} the predicted next tokens are $<$jumps$>$, $<$www$>$, and $<$,$>$.}
    \label{fig:next_token}
\vspace{-0.5em}
\end{figure*}

so that a transformer can be viewed as a transformation of probability measures supported on \( \mathbb{R}^d \), independent of the ordering of tokens. This is called the mean-field extension of transformers, and we describe it further in Appendix~\ref{app:mean_field}.

\begin{remark}[Positional Encodings and Masked Attention]
    The mean-field framework can also be extended to most types of positional encodings and to masked attention, as discussed in Appendix~\ref{app:mean_field}.
\end{remark}

\section{Representing the Embedding Space}\label{sec:embed}

\subsection{Partitioning the Embedding Space by the most Likely Next Token}

Let us fix a transformer $\tau$ of embedding dimension $d$. We follow empirical evidence~\citep{brody-etal-2023-expressivity} in assuming that the embedding support---that is the fraction of the embedding space being utilized by $\tau$---of any token at any layer of $\tau$ is a subset of $B^d(0,r):=\{\mathbf x\in\mathbb R^{d},\|\mathbf x\|<r\}$. We refer to $r$ as the embedding radius. Then we can partition the embedding support of the final tokens at the last layer $\mathcal E$ depending on the most likely next token $\mathcal E=\bigcup_{t_i\in\mathcal V}\mathcal E_i$, where for each token $t_i\in\mathcal V$, $$\mathcal E_i=\{\mathbf x\in\mathcal E,\forall j\in[|\mathcal V|], (\mathbf F\mathbf x)_i\geq (\mathbf F\mathbf x)_j\},$$ corresponds to the set of last embedding vectors that most likely lead to the next token $t_i$. 

Figure~\ref{fig:next_token} shows an illustration of the regions corresponding to the most likely next token in the embedding space. Similarly, we can also study the most likely next two tokens, next three tokens, \emph{etc}. As we consider the most likely next $n$ tokens, the regions become exponentially small with $n$. At some point, these regions therefore become inaccessible for a transformer. This is the intuition behind our analysis, which we formalize in the remainder of the paper.

\begin{remark}
    An important thing to note is that in order to predict the next $n$ tokens, we need to consider the embedding of the whole prompt.
    This contrasts with simply predicting the next token, where only the embedding of the last token at the last layer is needed. This is why in the following, we denote $\mathcal E\subset\mathbb R^{d\times m}$ the embedding space of the whole prompt of size $m$.
\end{remark}

\subsection{Accessing these regions through Prompts}

A natural factor affecting which sequences of tokens can be accessed by a transformer is the prompt length: if the prompt length is limited, a transformer is able to access fewer outputs than if the prompt length is arbitrarily large. To study this, we introduce a second partition of the embedding space.

For every sequence of tokens $\mathbf t=(t_{i_1},\ldots,t_{i_n})\in\mathcal{V}^n$ and prompt length $m\in\mathbb N$, we can write $\mathcal E^m_{\mathbf t}$ the set of soft prompts of length $m$ that lead to the sequence $\mathbf t$ under greedy decoding, that is
\begin{align*}
    \mathcal E^m_{\mathbf t}&=\Bigg\{\mathbf X\in B^{d\times m}(0,r)\colon\forall j\in[n],\\&\operatorname{argmax}\bigg(\mathbf F \mathrm L \Big([\mathbf{X},\mathbf E\mathbf t_{i_1},\ldots,\mathbf E\mathbf t_{i_{j-1}}]\Big)_{-1}\bigg)={i_j}\Bigg\},
\end{align*}
Then $B^{d\times m}(0,r)=\bigcup_{\mathbf t\in\mathcal V^n}\mathcal E^m_{\mathbf t}$, where we can replace $\mathcal E^m_{\mathbf t}$ with its closure $\overline{\mathcal E^m_{\mathbf t}}$ depending on how we handle cases where the final vector admits multiple argmax values. In the following, we simply write $\mathcal E_{\mathbf t}$, as $m$ is already implied through $\mathbf t$.

\begin{remark}[Other Decoding Strategies]
    A direct corollary of our results is that if a sequence is not accessible under greedy decoding, then it cannot be generated with probability greater than 50\% under any stochastic decoding strategy (e.g., beam search or Top-$K$ sampling). As a result, our findings for the copying task extend to discrete decoding strategies, as they provide an upper bound on the success probability.
\end{remark}

\section{The Sequences that are Accessible to a Transformer}\label{sec:main}

With the partitions of the embedding space defined in Section~\ref{sec:embed}, we are now equipped to formalize our problem. Specifically, we aim to study the set of sequences that are accessible to a given transformer $\tau$. A given sequence $\mathbf t$ is said to be accessible if there exists a prompt for which the output under greedy decoding is $\mathbf t$.

\begin{definition}[Accessible Sequence]
    A sequence of tokens $\mathbf t\in\mathcal V^{n}$ is said to be accessible by a transformer $\tau$ of precision $\varepsilon$ with prompt length $m$ if there exists a sequence $\mathbf X\in\mathbb R^{d\times m}$ such that $B(\mathbf X,\frac\varepsilon2)\subset\mathcal E^m_{\mathbf t}$. A sequence is said to be accessible if it is accessible with prompt length $m$ for some $m\in\mathbb N$.
\end{definition}

This notion of accessible sequences enables us to formally state our contributions.

\begin{itemize}
    \item In Section~\ref{sec:finite}, we show that the maximal length $n$ of accessible sequences scales linearly with the prompt length~$m$.
    \item In Section~\ref{sec:wasserstein}, we show that for every transformer $\tau$, there exists a certain threshold $N\in\mathbb N$, above which almost all sequences of length $n\ge N$ are not accessible by~$\tau$.
\end{itemize}

Our proofs require the notion of packing number~\citep{Vershynin_R_2018_b_covering-number}, which quantifies how many disjoint balls of a given radius one can pack in a space.

\begin{restatable}[Packing Number]{definition}{defpacking}
    Let \( (T, \|\cdot\|) \) be a vector space and let \( \varepsilon > 0 \).

    The \emph{\( \varepsilon \)-packing number} of \( T \), denoted \( \mathcal{P}(T, \|\cdot\|, \varepsilon) \), is the maximal number of disjoint open balls of radius \( \frac\varepsilon2 \) that can be placed in \( T \), or equivalently, the largest cardinality of an \( \varepsilon \)-separated subset of \( T \). That is,
    \begin{align*}
        \mathcal{P}(T, \|\cdot\|, \varepsilon)=\max\{ M \in \mathbb{N} \colon&\exists (x_i)_{i\in[M]}\in T^M,\\&\forall i \neq j, \|x_i-x_j\| > \varepsilon\}.
    \end{align*}
\end{restatable}

We conduct further analysis on the packing number of the spaces we are studying in Appendix~\ref{app:packing}.

\subsection{Finite Prompt Length}\label{sec:finite}

Our study of the finite prompt length setting is based on the trivial assumption that any transformer has limited precision. That is, it cannot distinguish between two inputs that are too close to each other.

\begin{assumption}\label{ass:finite}[Finite Precision]
Every transformer $\tau$ admits a finite precision $\varepsilon>0$. That is $\mathbb{R}^d$ can be partitioned into axis-aligned cubes of side length $\varepsilon$, and $\tau$ is constant within each cube:
\[
\tau(\mathbf X) = \tau\bigg(\Big\lfloor \frac{\mathbf X}  {\varepsilon} \Big\rfloor \varepsilon\bigg)
\quad\text{for all }\mathbf X.
\]
\end{assumption}

\begin{remark}[Extension to Finite $\ell_p$ Precision]
    We used the $\ell_\infty$ norm in Assumption~\ref{ass:finite} as it is the most intuitive. Note that this directly generalizes to $\ell_p$ precision as $\|\mathbf X\|_\infty\le\|\mathbf X\|_p$ implies that a transformer with precision $\varepsilon$ is also constant on each ball of a grid of $\ell_p$ balls of radius $\varepsilon$.
\end{remark}

\begin{remark}[Uniform Grid Assumption]\label{rem:uniform_precision}
    Assumption~\ref{ass:finite} assumes a uniform grid for numerical precision, which simplifies the non-uniform nature of standard floating-point arithmetic (e.g., fp16), where the spacing between representable values depends on the magnitude of the exponent. We discuss how to extend our analysis to non-uniform precision in Section~\ref{sec:uniform_precision}.
\end{remark}

Under Assumption~\ref{ass:finite}, we show that the maximal length $n$ of accessible sequences scales linearly with the prompt size~$m$.

\begin{theorem}\label{lem:finite}
    A transformer of precision $\varepsilon$ and embedding radius $r$ with prompt length $m$ can only access a finite number of distinct sequences: $\big(1+\frac {2r} {\varepsilon}\big)^{dm}$.
\end{theorem}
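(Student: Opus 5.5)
The plan is a packing argument in the prompt space $B^{d\times m}(0,r)\subset\mathbb R^{dm}$. By definition, each accessible sequence $\mathbf t$ of length $n$ with prompt length $m$ comes with a witness $\mathbf X_{\mathbf t}\in\mathbb R^{d\times m}$ such that $B(\mathbf X_{\mathbf t},\frac\varepsilon2)\subset\mathcal E^m_{\mathbf t}$. For fixed $n$, the sets $\mathcal E^m_{\mathbf t}$, $\mathbf t\in\mathcal V^n$, are pairwise disjoint, because greedy decoding is deterministic: a given prompt produces exactly one length-$n$ continuation. Hence distinct accessible sequences of the same length give pairwise disjoint open balls of radius $\frac\varepsilon2$, all contained in $B^{d\times m}(0,r)$. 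Their number is therefore at most the packing number $\mathcal P(B^{d\times m}(0,r),\|\cdot\|,\varepsilon)$.

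The second step bounds this packing number by the standard volumetric argument. Let $M$ be the number of disjoint balls. The enlarged region $B^{dm}(0,r+\frac\varepsilon2)$ contains all of them, so
\[
M\,|B(0,\tfrac\varepsilon2)|\le |B(0,r+\tfrac\varepsilon2)|.
\]
In dimension $dm$, volumes scale as $\rho^{dm}$ for any norm. This gives $M\le\big(\frac{r+\varepsilon/2}{\varepsilon/2}\big)^{dm}=\big(1+\frac{2r}{\varepsilon}\big)^{dm}$, which is the stated bound. It holds for every norm, in particular the $\ell_\infty$ norm of Assumption~\ref{ass:finite}.

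I also want the bound to cover sequences of all lengths at once, so that the set of accessible sequences is finite overall. I would use Assumption~\ref{ass:finite} here. On $B^{d\times m}(0,r)$, the transformer and hence the whole greedy decoding trajectory is constant on each grid cell of side $\varepsilon$. The number of grid cells meeting the ball is at most about $(\frac{2r}{\varepsilon}+1)^{dm}$, and each cell determines a single infinite greedy output. So every accessible sequence is a prefix of one of at most $(1+\frac{2r}{\varepsilon})^{dm}$ infinite outputs.

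The main obstacle is making this final step rigorous, together with the convention for boundary and tie cases. First, the ball $B(\mathbf X,\frac\varepsilon2)$ in the accessibility definition must contain an entire precision cell, or at least pin down one, so that each accessible sequence is charged to a cell. Second, cells cut by the sphere of radius $r$ must be counted carefully. I would handle both by working in $\ell_\infty$: a ball of radius $\frac\varepsilon2$ there is itself a cube of side $\varepsilon$. I would then count cells in $[-r,r]^{dm}$ directly, which gives $\lceil 2r/\varepsilon\rceil+1\le 1+\frac{2r}{\varepsilon}$ per coordinate, up to rounding that I would absorb into the statement.
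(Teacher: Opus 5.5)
Your steps 1--2 are essentially the paper's proof. The paper bounds the count by $\mathcal P(B^{d\times m}(0,r),\|\cdot\|,\varepsilon)$ and then applies the volumetric bound of Proposition~\ref{prp:packing_finite}. Your derivation of pairwise disjoint $\frac\varepsilon2$-balls directly from the accessibility definition, for a fixed length $n$, justifies this more cleanly than the paper's appeal to how many inputs the model ``can distinguish''. Since those balls already lie in $B^{d\times m}(0,r)$, you could even skip the enlargement and get $(2r/\varepsilon)^{dm}$.

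Step 3 should be dropped, for three reasons. First, the bound is meant per output length $n$, which is how Corollary~\ref{thm:finite} uses it. Second, the accessible set over all lengths is in fact infinite. Under Assumption~\ref{ass:finite}, a precision cell lying inside $B^{d\times m}(0,r)$ contains an $\frac\varepsilon2$-ball and sits in some $\mathcal E^m_{\mathbf t}$ with $\mathbf t\in\mathcal V^n$ for every $n$, so there are accessible sequences of every length. Third, your grid count only gives $(\lceil 2r/\varepsilon\rceil+1)^{dm}$, which exceeds $(1+\frac{2r}{\varepsilon})^{dm}$ whenever $2r/\varepsilon$ is not an integer. So that rounding cannot be absorbed without changing the statement.
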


\begin{proof}
    Let us fix a transformer $\tau$ with embedding radius $r$ and precision $\epsilon$. Clearly, this transformer can only distinguish between at most $\mathcal{P}(B^{d\times m}(0,r),\|\cdot\|,\varepsilon)$ different inputs of size $m$. Hence, the number of distinct accessible sequences of tokens with prompt length $m$ is upper bounded by $\mathcal{P}(B^{d\times m}(0,r),\|\cdot\|,\varepsilon)$, and thus by $\big(1+\frac {2r} {\varepsilon}\big)^{dm}$~[Proposition~\ref{prp:packing_finite}].
\end{proof}

\begin{corollary}\label{thm:finite}
    When $n>\Bigg(d\dfrac{\operatorname{ln}(1+\frac {2r}{\varepsilon})}{\operatorname{ln}(|\mathcal V|)}\Bigg)m$, there exist some sequences of length at most $n$ that are not accessible by a transformer of precision $\varepsilon$ and embedding radius $r$ with prompt length $m$. Moreover, the proportion of accessible token sequences decays exponentially fast with $n$ when it exceeds the previous threshold, with rate $\dfrac{(1+\frac {2r}{\varepsilon})^{dm}}{|\mathcal V|^n}\in O(\frac1{|\mathcal V|^n})$.
\end{corollary}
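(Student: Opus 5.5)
The plan is a counting argument comparing Theorem~\ref{lem:finite} with the number of candidate token sequences. Write $A_m := \big(1+\frac{2r}{\varepsilon}\big)^{dm}$. By Theorem~\ref{lem:finite}, at most $A_m$ distinct sequences are accessible with prompt length $m$. I will first fix the length exactly at $n$: each accessible soft prompt $\mathbf X$ produces, under greedy decoding, exactly one output sequence of length $n$. Indeed, the regions $\mathcal E^m_{\mathbf t}$ for $\mathbf t\in\mathcal V^n$ partition the ball, with ties broken by any fixed rule. So the number of accessible sequences in $\mathcal V^n$ is at most $A_m$, while $|\mathcal V^n| = |\mathcal V|^n$.

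The threshold step is to take logarithms. The condition
\[
n>\frac{d\ln(1+\frac{2r}{\varepsilon})}{\ln|\mathcal V|}\,m
\]
is equivalent to $n\ln|\mathcal V| > dm\ln(1+\frac{2r}{\varepsilon})$, that is, $|\mathcal V|^n > A_m$. By pigeonhole, some sequence of length exactly $n$, and hence of length at most $n$, is not accessible.

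For the decay claim, I bound the proportion of accessible sequences of length $n$ by $A_m/|\mathcal V|^n$. Since $m,d,r,\varepsilon$ are fixed, $A_m$ is a constant in $n$, so this ratio is $O(|\mathcal V|^{-n})$. It is below $1$ exactly past the threshold and decays geometrically with ratio $1/|\mathcal V|$.

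The step needing the most care is the bookkeeping of ``length at most $n$'' versus ``exactly $n$''. The cleanest route is to state the count for length exactly $n$, where the partition argument gives the injection from precision cells to outputs directly. If one wanted the proportion over all lengths up to $n$, the denominator $\sum_{k\le n}|\mathcal V|^k$ is of order $|\mathcal V|^n$ and the numerator is at most $nA_m$. That still gives decay of order $n/|\mathcal V|^n$, but I would present the exact-length version, which matches the stated rate.
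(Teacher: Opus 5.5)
Your proposal is correct and is essentially the paper's proof. You combine the bound $(1+\frac{2r}{\varepsilon})^{dm}$ from Theorem~\ref{lem:finite} with the count $|\mathcal V|^n$ of length-$n$ sequences, take logarithms to get the threshold, and read off $(1+\frac{2r}{\varepsilon})^{dm}/|\mathcal V|^n$ as the decay rate. Your remark that the theorem's count must be read per fixed output length $n$ is a useful clarification the paper leaves implicit: each precision cell yields exactly one greedy output of that length, so there are at most as many accessible length-$n$ outputs as cells.
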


\begin{proof}
   Combine Theorem~\ref{lem:finite} with the fact that the number of distinct possible output token sequences of size $n$ is given by $|\mathcal V|^n$.
\end{proof}

\begin{remark}
    Although there is no strict maximal length beyond which no sequence is accessible, the proportion of accessible sequences decreases exponentially once the critical length is exceeded. 
    In practice, this rapid decay renders the distinction negligible: beyond that threshold, almost all sequences become effectively inaccessible.
\end{remark}

Note that we made two crucial assumptions in this section:
\begin{itemize}
    \item the embedding support is a fully used ball of radius $r$.
    \item each token sequence $\mathbf t$ corresponds to a region $\mathcal{E}_{\mathbf t}$ of equal volume in the embedding space. 
\end{itemize}

Those assumptions consider the worst case setting, in the sense that a model not satisfying those assumptions has a smaller (and hence tighter) theoretical upper bound on the slope. Indeed, if the embedding support does not fully occupy the ball of radius $r$, then it is smaller and $\tau$ can access less sequences. And if some tokens correspond to smaller regions $\mathcal{E}_t$, then those tokens are harder to access, and the threshold for which some sequences are inaccessible is smaller.

We describe in Section~\ref{sec:expe} how we can relax these assumptions by better approximating the embedding support $\mathcal{E}$ (Section~\ref{sec:support}) and by empirically estimating the distribution of cell volumes $|\mathcal{E}_t|$ (Section~\ref{sec:cells}). A more formal analysis is provided in Appendix~\ref{app:shapes}.

\subsection{Arbitrary Prompt Length via the Mean-Field Generalization}\label{sec:wasserstein}

Our goal in this section is to obtain an upper bound on the length of the sequences accessible by a transformer, independent of the input length. Notice that there is a straightforward way to do this if Assumption~\ref{ass:finite} can be generalized to Wasserstein distance.

\begin{assumption}\label{ass:wasserstein}
Every transformer $\tau$ admits a finite Wasserstein precision. 
That is, the space of inputs can be partitioned into finitely many $W_q$-balls of radius $\varepsilon$, and $\tau$ is constant within each ball.

Formally, there exist $\varepsilon > 0$ and $q \ge 1$ such that
\[
\tau(\mathbf X) 
= \tau\!\left(\operatorname*{argmin}_{\mathbf Z \in \mathcal{G}_\varepsilon}
  W_q\big(\mathrm M(\mathbf X), \mathrm M(\mathbf Z)\big)\right)
\quad\text{for all }\mathbf X,
\]
where $\mathcal{G}_\varepsilon$ is a finite $\varepsilon$-net of reference sequences under $W_q$.
\end{assumption}

Under this assumption (the validity of which is further discussed in Appendix~\ref{app:wasserstein}), we show that for every transformer $\tau$, there exists a certain threshold $N\in\mathbb N$, above which almost all sequences of length $n\ge N$ are not accessible by~$\tau$.

\begin{theorem}\label{lem:wass}
    A transformer of precision $\varepsilon$ and embedding radius $r$ can only access a finite number of distinct sequences: $\big(e + \frac{e \, (2r)^q}{\varepsilon^q}\big)^{\big(1+\frac{2r}{\varepsilon}\big)^d}$.
\end{theorem}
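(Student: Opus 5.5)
We follow the same route as Theorem~\ref{lem:finite}, with the Euclidean packing argument replaced by a packing argument in Wasserstein space. Under Assumption~\ref{ass:wasserstein}, the output of $\tau$ on any prompt, of any length $m$, depends only on which element of the finite net $\mathcal G_\varepsilon$ is nearest to $\mathrm M(\mathbf X)$ in $W_q$. Hence the number of distinct accessible sequences is at most the number of $W_q$-distinguishable measures. We will bound this by the $\varepsilon$-packing number $\mathcal P(\mathcal P(B^d(0,r)),W_q,\varepsilon)$ of the set of probability measures supported on the embedding ball. The whole task is then to show
\[
\mathcal P\big(\mathcal P(B^d(0,r)),W_q,\varepsilon\big)\le \Big(e+\tfrac{e(2r)^q}{\varepsilon^q}\Big)^{(1+\frac{2r}{\varepsilon})^d}.
\]

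To bound this packing number, we discretize the support. First take a maximal $\varepsilon$-separated set $\{\mathbf z_1,\dots,\mathbf z_K\}$ in $B^d(0,r)$. By Proposition~\ref{prp:packing_finite} with $m=1$, $K\le(1+\frac{2r}{\varepsilon})^d$, and by maximality this set is also an $\varepsilon$-net. Map each measure $\mu$ to a measure $\hat\mu$ on the finite set $\{\mathbf z_k\}$ by sending each point to its nearest $\mathbf z_k$. This transport moves mass by at most $\varepsilon$, so $W_q(\mu,\hat\mu)\le\varepsilon$.

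Next, quantize the weights of $\hat\mu$ to a grid with step $1/N$, i.e.\ multiples of $1/N$ summing to one. The mass that must be moved is at most about $K/N$, over distances at most $2r$, which costs $W_q^q\lesssim (2r)^q K/N$. Choosing $N\approx K(2r)^q/\varepsilon^q$ makes this cost of order $\varepsilon^q$. The number of such quantized measures is $\binom{N+K-1}{K-1}$. Using $\binom{a}{b}\le(ea/b)^b$, this is at most $\big(e(1+N/K)\big)^K=\big(e+e(2r)^q/\varepsilon^q\big)^K$, which is exactly the claimed base raised to the power $K\le(1+\frac{2r}{\varepsilon})^d$.

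The main obstacle is the constant bookkeeping. The discretization and quantization errors add up to roughly $2\varepsilon$ rather than $\varepsilon$, and a packing bound needs separation, not just a covering. To resolve this, we would argue at the level of radius $\varepsilon/2$: two $\varepsilon$-separated measures cannot map to the same grid measure if each is within $\varepsilon/2$ of its image. That requires running the construction with half the scale, or absorbing the resulting factors into the stated bound. We would first try to set the parameters so that the covering radius is $\varepsilon/2$ and check that the exponents come out as $(1+2r/\varepsilon)^d$. If they do not, we would rely on the loose (non-tight) form of the statement to absorb the constants, possibly deferring the details to the appendix on packing numbers (Appendix~\ref{app:packing}).
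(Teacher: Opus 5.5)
Your route is the paper's. You reduce, via Assumption~\ref{ass:wasserstein}, to the $W_q$-packing number of measures supported on $B^d(0,r)$, and then bound that packing number. The paper does the second step by citing Proposition~\ref{prp:packing_infinite}, which rests on Lemma~4.b of Nguyen together with Proposition~\ref{prp:packing_finite}. Your support-net, weight-quantization and stars-and-bars count is essentially the standard proof of that lemma, so you are unpacking the citation rather than taking a different path. Your counting, including $\binom{N+K-1}{K-1}\le\binom{N+K}{K}\le(e+eN/K)^K$, is fine.

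The step that does not work is absorbing the constants. Your construction gives a $W_q$-cover of radius $2\varepsilon$ with cardinality $\big(e+\frac{e(2r)^q}{\varepsilon^q}\big)^{(1+\frac{2r}{\varepsilon})^d}$. That bounds the packing number at separation $4\varepsilon$, not $\varepsilon$. Rescaling gives $\big(e+\frac{e(8r)^q}{\varepsilon^q}\big)^{(1+\frac{8r}{\varepsilon})^d}$ at separation $\varepsilon$. The theorem's bound is explicit, and the discrepancy sits in the exponent, a factor of order $4^d$, so it cannot be absorbed.

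This is exactly the form of the lemma the paper quotes, $\ln\mathcal P(\mathcal G,W_q,4\varepsilon)\le\mathcal P(\Theta,\|\cdot\|,\varepsilon)\ln\big(e+\frac{e\,\mathrm{Diam}(\Theta)^q}{\varepsilon^q}\big)$. The paper applies it at scale $\varepsilon$ without rescaling. Its own Proposition~\ref{prp:packing_infinite} is stated at separation $2\varepsilon$ with exponent $(1+\frac{4r}{\varepsilon})^d$, and Corollary~\ref{thm:wass} uses $(1+\frac{4r}{\varepsilon})^d$.

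So your argument establishes the same thing the paper's citation supports: finiteness, with the displayed bound holding once $\varepsilon$ is replaced by $\varepsilon/4$. State the result that way rather than claiming the constants can be absorbed.
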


\begin{proof}
    Under Assumption~\ref{ass:wasserstein}, let us fix a transformer $\tau$ with embedding radius $r$ and precision $\epsilon$. Clearly, this transformer can only distinguish between at most $\mathcal P(\mathcal{G}, W_q,\varepsilon)$ different inputs, with $\mathcal G:=\operatorname{Im}(\mathrm{M})=\{\mathrm M(\mathbf X)\colon\mathbf X\in\mathbb R^{d\times m},m\in\mathbb N\}$ the set of discrete probability measures on the token embeddings of dimension $d$ as defined in Section~\ref{sec:mean-field}. Hence the number of distinct accessible sequences of tokens is upper bounded by $\mathcal P(\mathcal{G}, W_q,\varepsilon)$, and thus by $\big(e + \frac{e \, (2r)^q}{\varepsilon^q}\big)^{\big(1+\frac{2r}{\varepsilon}\big)^d}$~[Proposition~\ref{prp:packing_infinite}].
\end{proof}

\begin{corollary}\label{thm:wass}
    When $n> \big(1+\frac{4r}{\varepsilon}\big)^d\frac{\operatorname{ln}\big(e + \frac{e \, (2r)^q}{\varepsilon^q}\big)}{\operatorname{ln}(|\mathcal V|)}$, there exist some sequences of length at most $n$ that are not accessible by a transformer of precision $\varepsilon$ and embedding radius $r$. Moreover, the proportion of accessible token sequences decays exponentially fast with $n$ when it exceeds the previous threshold, at rate $\frac{\big(e + \frac{e \, (2r)^q}{\varepsilon^q}\big)^{\big(1+\frac{4r}{\varepsilon}\big)^d}}{|\mathcal V|^n}\in O(\frac1{|\mathcal V|^n})$.
\end{corollary}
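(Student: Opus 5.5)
The plan is to mirror the proof of Corollary~\ref{thm:finite}, replacing Theorem~\ref{lem:finite} with Theorem~\ref{lem:wass}. Write
\[
K:=\Big(e+\tfrac{e(2r)^q}{\varepsilon^q}\Big)^{(1+\frac{4r}{\varepsilon})^d}.
\]
First, note that $(1+\frac{2r}{\varepsilon})^d\le(1+\frac{4r}{\varepsilon})^d$. Since the base $e+\frac{e(2r)^q}{\varepsilon^q}>1$, Theorem~\ref{lem:wass} shows that the total number of sequences accessible by $\tau$, over all prompt lengths $m\in\mathbb N$, is at most $K$. The $4r$ in the statement presumably reflects the authors' version of Proposition~\ref{prp:packing_infinite}, perhaps a covering/packing conversion at radius $\varepsilon/2$. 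In any case it only weakens the bound, so the argument goes through with the larger quantity $K$.

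Second, I count candidate outputs. There are $|\mathcal V|^n$ token sequences of length exactly $n$. The accessible ones form a subset of the at most $K$ accessible sequences. The hypothesis is
\[
n>\Big(1+\tfrac{4r}{\varepsilon}\Big)^d\,\frac{\ln\big(e+\frac{e(2r)^q}{\varepsilon^q}\big)}{\ln|\mathcal V|}.
\]
Exponentiating, this is equivalent to $|\mathcal V|^n>K$. By pigeonhole, some sequence of length $n$ (in particular, of length at most $n$) is not accessible. This gives the first claim.

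Third, for the decay rate, the proportion of accessible sequences among length-$n$ sequences is at most $K/|\mathcal V|^n$. Since $K$ depends only on $(d,r,\varepsilon,q)$ and not on $n$ or $m$, this ratio lies in $O(|\mathcal V|^{-n})$. It is below $1$ exactly past the threshold, and it decays geometrically with ratio $1/|\mathcal V|$ thereafter.

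The only delicate point is making sure the bound of Theorem~\ref{lem:wass} is uniform over all prompt lengths. Accessibility is defined with an existential over $m$, so I must check that the count there is indeed a bound on the union over $m$. The proof of Theorem~\ref{lem:wass} handles this by working on $\mathcal G=\operatorname{Im}(\mathrm M)$, which contains the empirical measures for all $m$ simultaneously. So I would simply cite that and note the monotonicity adjustment from $2r$ to $4r$.
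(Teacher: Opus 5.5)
Your proposal is correct and is the paper's proof: Theorem~\ref{lem:wass} bounds the number of accessible sequences, comparing that bound with the $|\mathcal V|^n$ sequences of length $n$ and taking logarithms gives the threshold and the $O(|\mathcal V|^{-n})$ proportion, and your remark that $(1+\frac{2r}{\varepsilon})^d\le(1+\frac{4r}{\varepsilon})^d$ usefully reconciles the exponent in Theorem~\ref{lem:wass} with the corollary's, which the paper leaves implicit. One wording fix: read $K$ as a bound on accessible sequences of each fixed length $n$ (one greedy output of length $n$ per distinguishable input), not on accessible sequences of all lengths together, and that per-length bound is all your pigeonhole step needs.
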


\begin{proof}
    Combine Theorem~\ref{lem:wass} with the fact that the number of distinct possible output token sequences of size $n$ is given by $|\mathcal V|^n$.
\end{proof}

\begin{remark}[Arbitrary Prompt Length via Elementary Operations]
    Contrary to Assumption~\ref{ass:finite}, Assumption~\ref{ass:wasserstein} is non-trivial. We show in Appendix~\ref{app:eo} that it can be replaced by a much weaker assumption, pertaining to what we call elementary operations.
\end{remark}

\section{Experiments}\label{sec:expe}

Corollary~\ref{thm:finite} makes three concrete predictions. 
First, for a fixed prompt budget $m$, accessibility stays high up to a length threshold $n^\star(m)$, and then the fraction of accessible sequences decays exponentially fast as $n$ increases. 
Second, this threshold grows linearly with the prompt budget, i.e., $n^\star(m)=Cm$ for some $C>0$. 
Third, the corollary provides an explicit upper bound on $C$. 

We first test the first two predictions through the cramming task~\citep{kuratov-etal-2025-cramming} in Section~\ref{sec:cramming}: we measure whether a frozen transformer can generate a target sequence using optimized soft prompts. 
We next refine the theoretical upper bound and compare it to the empirical slope (Section~\ref{sec:support}, ~\ref{sec:cells}).

Finally, we illustrate the saturation regime predicted in Corollary~\ref{thm:wass} using the (easier to compute) copying task.

\begin{figure*}[t]
    \centering
    \includegraphics[width=\textwidth]{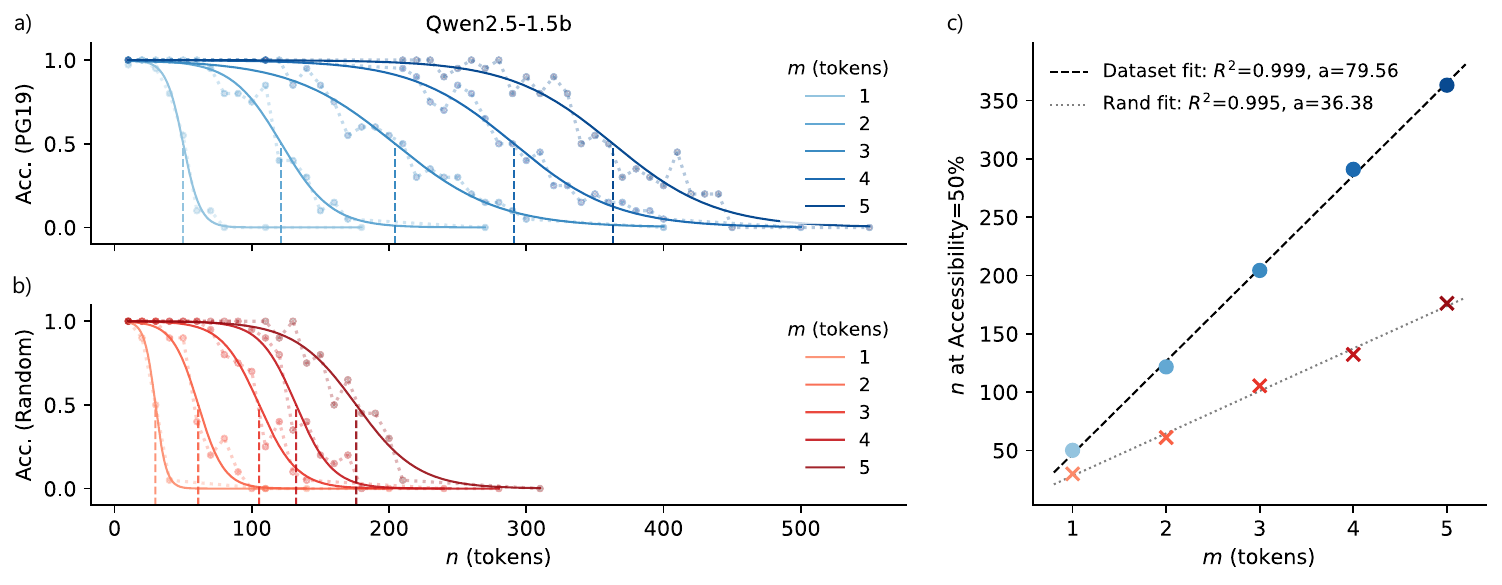}
    \caption{Mean accessibility for a) PG19 and b) random target sequences of length $n$ as a function of the number of trainable memory vectors $m$. For each $m$, we fit a sigmoid (solid) and mark $n_{50}$ where the fit crosses 0.5 (vertical dashed line). (c) $n_{50}(m)$ for PG19 (blue) and random (red), with linear fits (dashed).}
    \label{fig:exp1}
\end{figure*}

\subsection{The Cramming Task}
\label{sec:cramming}

We want to test whether a target sequence is accessible, i.e., whether there exists some prompt that makes a model generate it. 
Directly searching over discrete prompts is intractable, so we optimize a length-$m$ soft prompt and check for an exact match.
Given a pretrained transformer $\tau$ and a target token sequence $x_{1:n}$, we say the sequence is accessible with prompt length $m$ if there exists a sequence of $m$ soft prompt vectors $Y \in \mathbb{R}^{d \times m}$ such that greedy decoding from $Y$ generates exactly $x_{1:n}$. 
To search for such a prompt, we use the cramming setup of~\citet{kuratov-etal-2025-cramming}: we freeze the model weights and optimize only $Y$ to maximize the likelihood of the target under teacher forcing. Concretely, we minimize the cross-entropy
\begin{equation}
\mathcal{L}(Y; x_{1:n})
= -\sum_{i=1}^{n} \log p_{\tau}\!\left(x_i \mid [Y, x_{1:i-1}]\right),
\label{eq:cramming_loss}
\end{equation}

After optimization, we evaluate accessibility by generating exactly $n$ tokens greedily from the learned prompt $Y$:
\begin{equation}
\hat{x}_i = \arg\max_{t \in \mathcal{V}} p_{\tau}\!\left(t \mid [Y, \hat{x}_{1:i-1}]\right),
\qquad i=1,\dots,n,
\label{eq:greedy_decode}
\end{equation}
and we count a success if $\hat{x}_{1:n}=x_{1:n}$. For each $(n,m)$ we estimate accessibility as the mean success rate over $20$ target strings.
We consider two target sources: i) contiguous token spans of length $n$ sampled from PG19~\citep{rae2019compressive} using each model's tokenizer, and ii) random strings formed by sampling tokens uniformly i.i.d.\ from a subset of the model vocabulary. 

We test whether, for fixed $m$, success stays high up to a length threshold and then drops quickly as $n$ increases; and if this threshold scales roughly linearly with $m$. 
We run the same procedure across several model families and sizes: Pythia (160M, 410M, 1B, 1.4B, 2.8B)~\citep{pmlr-v202-biderman23a}, Qwen-2.5 (0.5B, 1.5B)~\citep{yang2024qwen2}, Llama-3.2 (1B, 3B)~\citep{llama}, and Gemma-3 (270M, 1B)~\citep{gemma}.

\begin{table*}[ht]
\centering
\caption{Ratio between the theoretical upper bound on the slope (using the infinity norm) and the empirical one for natural texts (PG19), across various models. Rows correspond to different enclosure strategies, and columns to the evaluated models. The detailed theoretical analysis is provided in Appendix~\ref{app:shapes}.}
\begin{tabular}{lccccccc}
\toprule
 & \multicolumn{3}{c}{Pythia} & \multicolumn{2}{c}{Qwen-2.5} & Llama-3.2 & Gemma-3 \\[-0.5ex]
\cmidrule(r{2pt}l{2pt}){2-4}\cmidrule(r{2pt}l{2pt}){5-6}\cmidrule(r{2pt}l{2pt}){7-7}\cmidrule(r{2pt}l{2pt}){8-8}
 & 160M & 410M & 1B & 0.5B & 1.5B & 1B & 270M \\[-0.5ex]
\midrule
Ball & 9.24 & 9.79 & 7.77 & 14.1 & 20.4 & 14.3 & 11.52 \\
Cone & 9.10&  9.60&  7.70&  14.01& 20.34& 13.98& 11.24 \\ 
Ellipsoid & 7.92&  8.15&  6.12&  10.96& 15.30& 11.86& 11.12 \\
Ellipsoid + Non-uniform Cells & \textbf{6.66} & \textbf{5.99} & \textbf{4.56} & \textbf{7.92} & \textbf{10.82} & \textbf{10.71} & \textbf{8.79} \\ 
Ellipsoid + variable $\varepsilon$& 8.65 & 9.83 & 7.71 & 12.32 & 18.81 & 14.63 & 13.42 \\

\bottomrule
\end{tabular}
\label{tbl:slope}
\end{table*}


We report Qwen-2.5 (1.5B) results in Figure~\ref{fig:exp1}, results on different model sizes of the same architecture in Appendix~\ref{fig:cram_pythia_suite} and results on different architectures in Appendix~\ref{fig:cram_architectures}.

\begin{remark}[Similar Previous Work]
    It has come to our attention that \citep{fort2025scaling} considers a task closely related to cramming, in which an attacker controls a subset of model activations to steer the subsequent output. They report qualitatively similar experimental results to those in Figure~\ref{fig:exp1} (see Figure~3 of their paper). While we have not investigated this connection in detail, we expect that our theoretical analysis could be adapted to their setting as well.
\end{remark}

Figure~\ref{fig:exp1}a reports mean accessibility for natural-language targets (PG19 spans) of length $n$, using memory lengths $m\in\{1,\ldots,5\}$. 
For each fixed $m$, accessibility stays high up to a critical length and then drops sharply. We summarize each curve by fitting a sigmoid; the fits are consistently good (minimum $R^2=0.88$).

Figure~\ref{fig:exp1}b shows the same experiment for random token sequences. Despite being out of distribution, we observe the same pattern: beyond a critical length, accessibility falls rapidly with $n$, consistent with the exponential decay predicted by Corollary~\ref{thm:finite} for fixed $m$.

Figure~\ref{fig:exp1}c plots the length $n$ at which accessibility crosses $0.5$. The scaling is close to linear in both domains ($R^2_{\text{PG19}}=0.999$, $R^2_{\text{rand.}}=0.995$), matching the linearity prediction of Corollary~\ref{thm:finite}. The slope differs between domains: PG19 targets remain accessible for longer than random strings at the same $m$. A likely reason is that natural text has structure the model can exploit, so the effective amount of information to store is smaller; a similar gap was reported by~\citet{kuratov-etal-2025-cramming}.

In addition, the theoretical slope from Corollary~\ref{thm:finite} is a close upper bound on the slopes estimated from Figure~\ref{fig:exp1}c (first row of Table~\ref{tbl:slope}), as the factor is between $5$ and $10$. 
The small gap is expected: Corollary~\ref{thm:finite} is an upper bound and relies on simplifying geometric assumptions. 
In particular, it assumes that 
i) the embedding support is a fully used ball of radius $r$, and 
ii) each token sequence $\mathbf t$ corresponds to a region $\mathcal{E}_{\mathbf t}$ of equal volume in the embedding space. 

In the following, we relax these worst-case assumptions by better approximating the embedding support $\mathcal{E}$ (Section~\ref{sec:support}) and by empirically estimating the distribution of cell volumes $|\mathcal{E}_{\mathbf t}|$ (Section~\ref{sec:cells}).

\subsection{Support Refinement}
\label{sec:support}

Prior work shows that transformer embeddings occupy a small and anisotropic subset of the $d$-dimensional ball of radius $r$, rather than filling it uniformly~\citep{rudman-etal-2022-isoscore}. 
This matters for our bound: if the effective support is smaller than the full ball volume, the accessibility threshold $n^\star(m)$ should be reached at smaller $n$, which lowers the predicted slope.

We upper bound the support by enclosing it within two simple shapes that capture anisotropy: an axis-aligned ellipsoid and a cone. We compute the corresponding packing numbers for each shape and plug them into the slope bound; derivations are deferred to Appendix~\ref{app:shapes}. 
Computing packing numbers requires the radius $r$ for the ball, per-dimension radius $r_i$ for the ellipsoid, and the smallest opening angle that contains all sampled embeddings for the cone. We estimate these support-dependent parameters by Monte Carlo using 10K prompts of length at most $\ell$, generated by sampling tokens i.i.d.\ from the model vocabulary and concatenating them, and mapping each prompt to an embedding using the same extraction procedure as in the main cramming setup. Experimental details are in Appendix~\ref{app:support}, and the effect of $\ell$ on the resulting upper bounds is shown in Figure~\ref{fig:support}. In practice, $\ell \approx 1000$ tokens is sufficient to obtain a stable upper-bound estimate.

Using cone or ellipsoid supports, we recompute the theoretical slope and compare it to the empirical slope from Figure~\ref{fig:exp1}c. The updated ratios are reported in Table~\ref{tbl:slope} (rows 2 and 3). The gap shrinks for all models, while remaining slightly larger for Qwen-2.5-1.5B.

\subsection{Cell Volume Distribution}
\label{sec:cells}

\begin{figure}[t]
    \centering
    \includegraphics[width=0.40\textwidth]{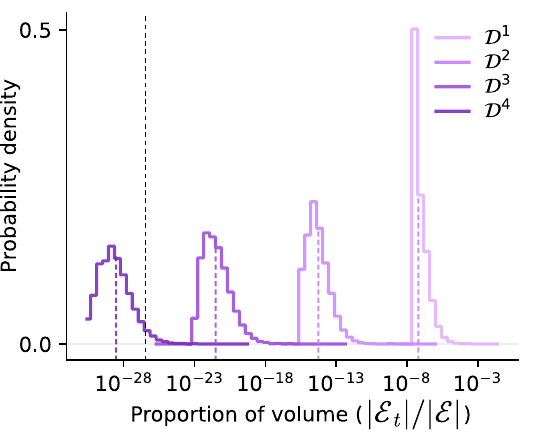}
    \caption{Conceptual example of using the cell-volume distribution to tighten the upper bound. Rather than assuming equal-volume cells (Dirac mass), we take the $n$-fold convolution of the empirical one-step volume distribution $\mathcal{D}^1$ (light violet) and track when the median of $\mathcal{D}^n$ (violet dashed) drops below the threshold (black dashed).}
    \label{fig:convolution}
\vspace{-1.5em}
\end{figure}

Corollary~\ref{thm:finite} makes an equal-volume approximation: each token (or sequence) corresponds to a region of the embedding space $\mathcal{E}_{\mathbf t}$ of equal size. In practice this is unlikely. Common tokens should occupy larger regions under the model's decoder, while rare or specialized tokens may correspond to much smaller regions. 
Here we measure how uneven these decoder cell volumes are by considering the empirical distribution $\mathcal D$ of the cell volumes of every token $t$ in terms of proportion $\frac{\lvert \mathcal{E}_t \rvert}{|\mathcal E|}$.

Concretely, we first approximate the distribution $\mathcal D$ corresponding to the next token, then study the $n$-fold multiplicative convolution $\mathcal D^{n}$ to model the distribution of the next $n$ tokens. We compute the smallest $n$ such that the median of $\mathcal D^{n}$ falls below $\frac1{\mathcal{P}(\mathcal E,\|\cdot\|,\varepsilon)}$, that is when at least half of the sequences become inaccessible (see Figure~\ref{fig:convolution} for a conceptual example). We describe how we obtain the volumes $\lvert \mathcal{E}_{\mathbf t} \rvert$ in Appendix~\ref{app:cell_vol}.

\begin{remark}
    In the uniform setting underlying Corollary~\ref{thm:finite}, the distribution $\mathcal D$ collapses to a Dirac mass at $\frac{1}{ \lvert \mathcal V \rvert}$, reflecting equal-volume cells for all tokens.
    Note that the procedure described above exactly recovers the bound of Corollary~\ref{thm:finite}, while nontrivial dispersion in $\mathcal D$ yields strictly sharper, data-dependent limits.
\end{remark}

Combining this with the ellipsoid support, we recompute the theoretical slope and compare it to the empirical slope from Figure~\ref{fig:exp1}c. The updated ratios are reported in Table~\ref{tbl:slope} (row 4). The gap shrinks for all models.

\begin{remark}
    The assumption of uniform precision $\varepsilon$ may be too restrictive. We therefore provide a more refined analysis, whose results are reported in the last row of Table~\ref{tbl:slope}. Details are given in Appendix~\ref{app:epsilon}. Although the resulting upper bounds remain valid, they do not account for the fact that most internal representations lie far from $0$, which leads to looser bounds in practice.
\end{remark}

\subsection{Copying-Length Generalization.}



\begin{figure}[t]
    \centering
    \includegraphics[width=0.45\textwidth]{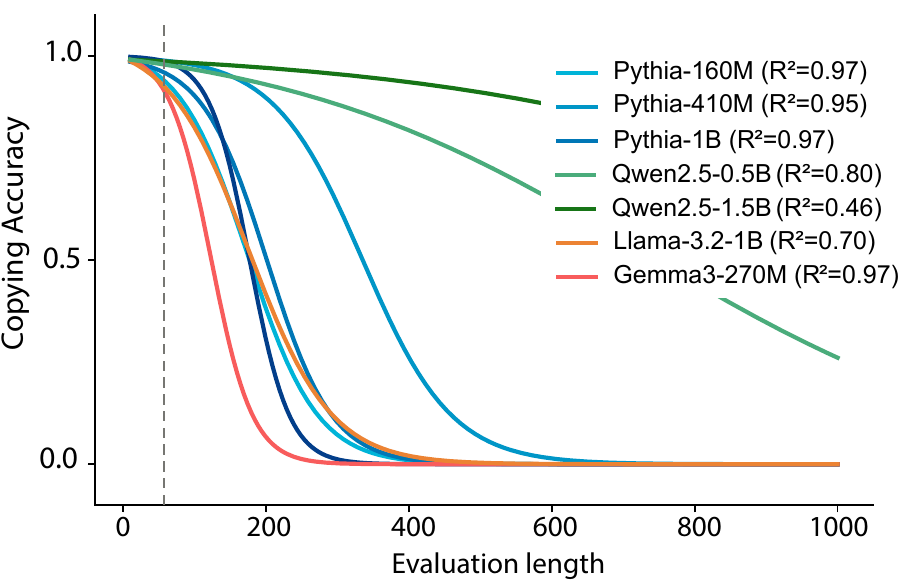}
    \caption{Models are trained to copy strings up to a maximum length (grey dashed) and evaluated on generating the exact copy of longer strings. We report exact-match copying accuracy versus string length. For each model, we fit a sigmoid to the accuracy curve (continuous lines) and report the corresponding $R^2$.}
\vspace{-2.5em}
    \label{fig:copying}
\end{figure}

The first part of this section tested the predictions of Corollary~\ref{thm:finite} using the cramming task. As prompt length $m$ grows, cramming becomes prohibitively expensive, so we evaluate Corollary~\ref{thm:wass} through the copying task instead. As it predicts a length threshold beyond which most sequences are inaccessible to a fixed transformer $\tau$, it should in particular be impossible for $\tau$ to copy such sequences. Moreover, as for Corollary~\ref{thm:finite}, a sharp decline in proportion of accessible sequences is predicted past this threshold.

We test this by adding a copying-length generalization experiment following~\citet{meyer2025memorylimitationsprompttuning}, avoiding expensive cramming experiments. We fine-tune pretrained 
models on a synthetic exact string-copying task using next-token cross-entropy loss. 
Training examples are random strings $x_{1:n}$ of length at most $50$, formatted as $x_{1:n}\mid x_{1:n}$, where the model is conditioned on the prefix $x_{1:n}\mid$ and trained to generate the target copy $x_{1:n}$. 
Training stops at $100\%$ exact-match accuracy on length $50$ or after $10$K optimization steps.
We then evaluate exact-match copying accuracy on longer unseen strings within the model context window.

Figure~\ref{fig:copying} shows a sharp length-dependent transition. Accuracy stays near 100\% on short strings, then falls abruptly once string length exceeds a model-specific threshold. Sigmoid fits capture this transition well, with median $R^2 = 0.95$ across models.

These results suggest that the linear regime observed in Figure~\ref{fig:exp1}b does not continue indefinitely; every transformer admits an accessibility limitation, independent of the prompt length $m$.

\section{Conclusion}

We established fundamental limits of transformers, showing that accessible sequence length grows at most linearly with prompt size and that most sequences become inaccessible beyond a critical threshold. These results hold even with unbounded context and computation, explaining the empirical failure of transformers to reproduce long sequences. More notably, we theoretically predict the performance of transformers on the cramming task across several architectures, using only the structure of the embedding space.

Importantly, this close characterization of transformers’ capacities relies solely on the geometry of the embedding space. This suggests that the embedding space, despite representing only a small fraction of a model’s parameters, carries meaningful information about its capabilities.

\section{Future Work}

Besides cramming and copying, another setting closely related to our theoretical results is test-time training, as described in Section 3.1 (``Memory as Context'') of \citet{titans}. The goal of this approach is to mitigate the quadratic cost of attention with respect to context length by compressing a long context into a short sequence of vectors $h$, using a neural network $\mathcal M$. In this setting, $\mathcal M$ is trained to iteratively encode the context into $h$ via gradient-based updates. Our results provide direct insight into this process: they characterize how the size of $h$ must scale with the length of the context in order to preserve accessibility.

Moreover, while Table~\ref{tbl:slope} empirically evaluates the tightness of Theorem~\ref{lem:finite}, we leave an empirical evaluation of Theorem~\ref{lem:wass}, as well as the derivation of theoretical lower bounds, to future work.

As discussed in Remark~\ref{rmk:mamba}, our results can be extended beyond the transformer architecture. In particular, it would be interesting to further investigate their implications for state-space models such as Mamba.

\newpage

\section*{Impact Statement}

This paper presents work whose goal is to advance the field of machine learning. There are many potential societal consequences of our work, none of which we feel must be specifically highlighted here.

\section*{Acknowledgements}

This research/project is supported by the National Research Foundation, Singapore under its AI Singapore Programme (AISG Award No: AISG3-PhD-2026-01-068T). V.~Y.~F.~Tan is also supported by
a Singapore Ministry of Education (MOE) AcRF Tier 2 grant under grant number A-8004062-00-00.
This research/project is supported by the National Research Foundation, Singapore under its AI Singapore Programme (AISG Award No: AISG2-PhD-2023-01-040-J), and is part of the programme DesCartes which is supported by the National Research Foundation, Prime Minister’s Office, Singapore under its Campus for Research Excellence and Technological Enterprise (CREATE) programme.

We also thank Yuri Kuratov for helpful correspondence. In particular, his suggestion that his work and ours might be complementary motivated the investigation that led to this paper.

\newpage

\bibliography{references}
\bibliographystyle{icml2026}

\appendix

\onecolumn

\section{Transformer Layers}\label{app:transformer_layer}

\subsection{Definition}

\begin{definition}[Self-Attention Layer]\label{def:attention}
A $h$-head self-attention layer maps every query token $\mathbf{x}$ within a context $\mathbf{X}\in\mathbb R^{d\times m}$ to
\begin{equation*}
\operatorname{Att}(\mathbf{x}, \mathbf{X}) = \sum_{i=1}^{h} \mathbf{W}_{\mathrm o}^i \mathbf{W}_{\mathrm v}^i \mathbf{X} \cdot \sigma\!\big((\mathbf{W}_{\mathrm k}^i \mathbf{X})^\top \mathbf{W}_{\mathrm q}^i \mathbf{x} \big),
\end{equation*}
where $\mathbf{W}^i_{\mathrm q},\mathbf{W}^i_{\mathrm k}\in\mathbb R^{s\times d},\mathbf{W}^i_{\mathrm v}\in\mathbb R^{s'\times d},\mathbf{W}^i_{\mathrm o}\in\mathbb R^{d\times s'}$, $s$ is the hidden dimension (typically $s=s'=\tfrac d h$), and the normalization factor $1/\sqrt{s}$ is absorbed into $\mathbf{W}_{\mathrm k}^i$.  
The results for each token are concatenated to produce the output
\begin{align*}
f(\mathbf X)  \colon &=\operatorname{Att}(\mathbf{X}, \mathbf{X}) \\&= 
[
\operatorname{Att}(\mathbf{X}_{:,1}, \mathbf{X}),\;
\ldots,\;
\operatorname{Att}(\mathbf{X}_{:,m}, \mathbf{X})
].
\end{align*}
\end{definition}

\begin{definition}[Transformer Layers]
    A transformer layer 
    \[
    \mathrm L_i:\bigcup_{m=1}^{+\infty}\mathbb R^{d\times m}\longrightarrow\bigcup_{m=1}^{+\infty}\mathbb R^{d\times m},
    \]
    is defined as
    $$\mathbf X'=\mathbf{X} + f\big(\mathrm{Norm}(\mathbf{X})\big), $$
    \[\mathrm L_i(\mathbf{X}) = \mathbf X' + \operatorname{MLP}_i\big( \mathrm{Norm}(\mathbf X') \big),\] 
    with
    \begin{align*}
        \operatorname{MLP}(\mathbf{X}) &= \left[\mathbf{W}_2 \mathrm{ReLU}(\mathbf{W}_1 \mathbf{X}_{: , 1} + \mathbf{b}_1) + \mathbf{b}_2 + \mathbf{X}_{: , 1}, \right. \\
        &\quad \left. \ldots, 
        \mathbf{W}_2 \mathrm{ReLU}(\mathbf{W}_1 \mathbf{X}_{: , m} + \mathbf{b}_1) + \mathbf{b}_2+\mathbf{X}_{: , m}\right],
    \end{align*}
    where $\mathbf W_1\in\mathbb R^{d_{\mathrm{ff}}\times d},\mathbf W_2\in\mathbb R^{d\times d_{\mathrm{ff}}},\mathbf b_1\in\mathbb R^{d_{\mathrm{ff}}},\mathbf b_2\in\mathbb R^d$, $d_{\mathrm{ff}}$ is the hidden dimension of the MLP and $\mathrm{Norm}$ is a normalization operator applied column-wise such that $\lVert\mathrm{Norm}(\mathbf X)\rVert_\infty$ is bounded for any $\mathbf X\in\mathbb R^d$.
    An $l$-layer transformer uses the composition of $l$ transformer layers \[\mathrm L=\mathrm L_l \circ \cdots \circ \mathrm L_1.\]
\end{definition}

For simplicity, the positional encoding and masking operation are omitted, following~\citep{Kim_H_2021_p-icml_not-lipshitz}. Note however that including those operations would not modify any theorem of this paper.

\subsection{Transformer Layers Preserve Bounded Embeddings}

For an $l$-layer transformer, a natural upper bound of the output is therefore $\lVert \mathbf X^l\rVert_\infty\leq(I+kl(A+M))\lVert E_f\rVert$, where the maximal initial norm $I=\sup\lVert \mathbf X^0\rVert_\infty$ is entirely determined by the initial embedding matrix and positional encoding, $N=\lVert\mathrm{Norm}(\mathbf X)\rVert_\infty$, $A=\sup_{\lVert \mathbf X\rVert_\infty\leq N}{\lVert\mathrm{Attention}(\mathbf X)\rVert_\infty}$, $M=\sup_{\lVert \mathbf X\rVert_\infty\leq N}{\lVert\mathrm{MLP}(\mathbf X)\rVert_\infty}$, and $E_f$ is the final embedding matrix (with the convention $\lVert \mathbf X\rVert_\infty=\max_{i}\lVert \mathbf X_i\rVert$).

We validate this empirically following the same experimental design as in Appendix~\ref{app:support} in Figure~\ref{fig:radius}.

\begin{figure}
    \centering
    \includegraphics[width=0.7\linewidth]{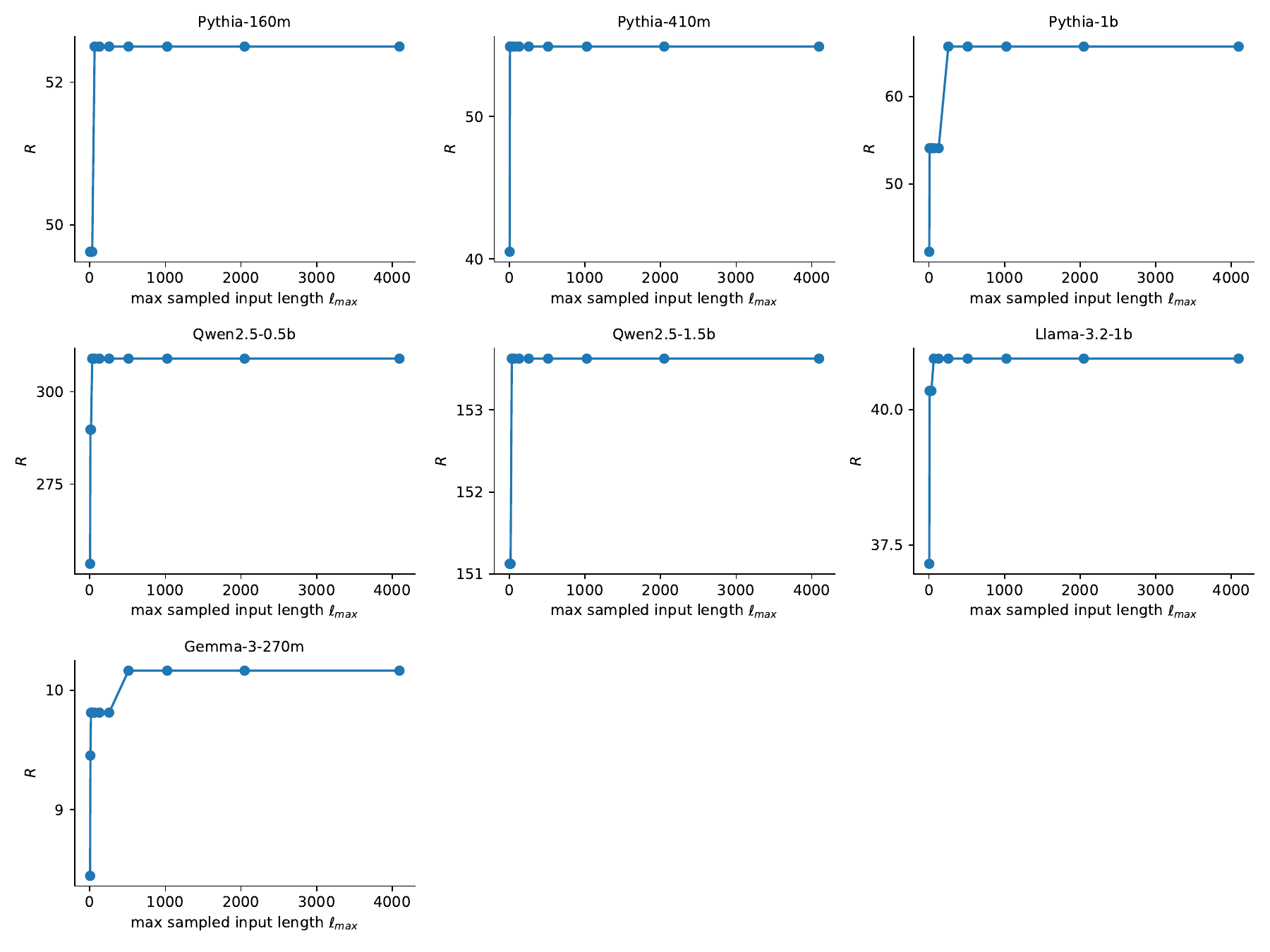}
    \caption{Maximal radius $R$ of the internal representations of the transformer for various input lengths.}
    \label{fig:radius}
\end{figure}

\section{Mean-Field Transformers}\label{app:mean_field}

To extend the action of transformers beyond empirical measures to general probability distributions, we introduce the notion of pushforward. 
\begin{definition}[\citet{santambrogio2015optimal}]
    Given a probability measure $\mu$ on \(\mathbb{R}^d \) and a measurable map \( \varphi: \mathbb{R}^d \to \mathbb{R}^d \), the \emph{pushforward} of \( \mu \) by \( \varphi \), denoted by \( \varphi_{\sharp} \mu \), is the probability   measure defined on Borel sets \( B \subset \mathbb{R}^d \) by
\[
(\varphi_{\sharp} \mu)(B) := \mu(\varphi^{-1}(B)).
\]
\end{definition}

Intuitively, \( \varphi_{\sharp} \mu \) is obtained by transporting mass from each point \( x \) to its image \( \varphi(x) \), preserving total measure. We now define a mean-field transformer, denoting \( \mathcal{P}_{\mathrm c}(\mathbb{R}^d) \) the set of compactly supported probability measures on \( \mathbb{R}^d \).

\begin{definition}[Mean-Field Self-Attention~\citep{Castin_V_2024_p-icml_lipschitz}]
\label{def:meanfield_attention}
The mean-field generalization $F$ of any self-attention layer---parameterized by projection matrices $\mathbf{W}_{\mathrm q}$, $\mathbf{W}_{\mathrm k}$, $\mathbf{W}_{\mathrm v}$, and $\mathbf{W}_{\mathrm o}$ as defined in Definition~\ref{def:attention}---is defined by
\[
F: \mu \in \mathcal{P}_{\mathrm c}(\mathbb{R}^d) \mapsto (\Gamma_\mu)_{\sharp} \mu \in \mathcal{P}_{\mathrm c}(\mathbb{R}^d),\quad\mbox{where}
\]
\[
\Gamma_\mu(\mathbf x) := \sum_{i=1}^{h}\frac{\int  \mathbf{W}_{\mathrm o}^i \mathbf{W}_{\mathrm v}^i \mathbf y \cdot \exp\big((\mathbf{W}_{\mathrm k}^i \mathbf{y})^\top \mathbf{W}_{\mathrm q}^i \mathbf{x} \big) \, \mathrm{d} \mu(\mathbf y)}{\int \exp\big((\mathbf{W}_{\mathrm k}^i \mathbf{y})^\top \mathbf{W}_{\mathrm q}^i \mathbf{x} \big) \, \mathrm{d} \mu(\mathbf y)}.
\]

Mean-field self-attention $F$ generalizes discrete self-attention $\operatorname{Att}$ in the sense that for any input \(\mathbf X \in \mathbb{R}^{d\times m} \), we have \( F(\mathrm{M}(\mathbf X)) = \mathrm{M}(\operatorname{Att}(\mathbf X,\mathbf X)) \).
\end{definition}

\begin{definition}[Mean-Field Transformer Layer]

Similarly, any transformer layer $\tau$, with MLP layer $\operatorname{MLP}$ and mean-field self-attention layer defined by $\Gamma_\cdot$, has the following mean-field generalization $T$.
\[
T: \mu \in \mathcal{P}_{\mathrm c}(\mathbb{R}^d) \mapsto (\Delta_\mu)_{\sharp} \mu \in \mathcal{P}_{\mathrm c}(\mathbb{R}^d),\quad\mbox{where}
\]
\[
\Delta_\mu(\mathbf x) := \operatorname{MLP}(\Gamma_\mu(\mathbf x)+\mathbf x).
\]

Similarly to mean-field self-attention, a mean-field transformer layer $T$ generalizes a discrete transformer layer $\tau$ in the sense that for any input \(\mathbf X \in \mathbb{R}^{d\times m} \), we have \( T(\mathrm{M}(\mathbf X)) = \mathrm{M}(\tau(\mathbf X)) \)~\citep{meyer2025memorylimitationsprompttuning}. With a slight abuse of notation, we use the same symbol $\tau$ to denote a transformer and its mean-field counterpart.

\begin{remark}[Mean-Field Generalization of Masked Attention]\label{rem:mean-field}
    We can easily generalize the mean-field framework to masked attention by augmenting the support of the probability measures to $[0,1]\times\mathbb R^d$, as adding $[0,1]$ allows to encode the position of every token~\citep{Castin_V_2024_p-icml_lipschitz}.
\end{remark}

\end{definition}

\section{Packing Number}\label{app:packing}

Recall the definition of packing number.

\defpacking*

In this section, we provide upper and lower bounds on the packing number of the sets we are studying.

\subsubsection*{The finite prompt length setting}

We use the following main result to analyze the setting of bounded length prompts.

\begin{restatable}{proposition}{prppackingfinite}\label{prp:packing_finite}
    Let $d\in\mathbb N$, $r>0$, and \( \varepsilon > 0 \). Then
    \[
        \Big(\frac{r}\varepsilon\Big)^d\leq\mathcal{P}(B^d(0,r), \|\cdot\|,\varepsilon)\leq\Big(1+\frac{2r}\varepsilon\Big)^d
    \]
\end{restatable}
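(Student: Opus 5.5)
The plan is the standard volumetric argument, using the packing number together with its companion, the covering number. For both bounds we use that $\mathrm{vol}(B^d(x,\rho)) = \rho^d\,|B^d(0,1)|$ for the fixed norm $\|\cdot\|$, by translation invariance and homogeneity of Lebesgue measure. Throughout, $|B^d(0,1)|$ denotes the volume of the unit ball of that norm.

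\textbf{Upper bound.} Let $(x_i)_{i\in[M]}\subset B^d(0,r)$ be $\varepsilon$-separated, so $\|x_i-x_j\|>\varepsilon$ for $i\neq j$. By the triangle inequality, the open balls $B^d(x_i,\varepsilon/2)$ are pairwise disjoint. Each of them lies in $B^d(0,r+\varepsilon/2)$. Comparing volumes gives
\[
M\Big(\frac{\varepsilon}{2}\Big)^d|B^d(0,1)|\le\Big(r+\frac{\varepsilon}{2}\Big)^d|B^d(0,1)|,
\]
that is, $M\le(1+2r/\varepsilon)^d$. Taking the maximum over $M$ yields the right-hand inequality.

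\textbf{Lower bound.} Take a maximal $\varepsilon$-separated subset $\{x_1,\dots,x_M\}$ of $B^d(0,r)$. It exists and is finite by the upper bound. By maximality, every $y\in B^d(0,r)$ satisfies $\|y-x_i\|\le\varepsilon$ for some $i$. Hence the closed balls $\bar B^d(x_i,\varepsilon)$ cover $B^d(0,r)$, and
\[
r^d|B^d(0,1)|\le M\varepsilon^d|B^d(0,1)|.
\]
This gives $M\ge(r/\varepsilon)^d$, and $\mathcal P\ge M$ proves the left-hand inequality.

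\textbf{Main obstacle.} There is no real difficulty. The points needing care are bookkeeping ones:
\begin{itemize}
\item the strict-versus-nonstrict separation in the definition of $\mathcal P$, which is handled by using open balls for disjointness and closed balls for the cover, since boundaries have measure zero;
\item the fact that the volume scaling holds for any norm, so the result is norm-independent as the paper claims.
\end{itemize}
In the application to $B^{d\times m}(0,r)$ in Theorem~\ref{lem:finite}, we apply the proposition with $d$ replaced by $dm$. We view the matrix as a vector in $\mathbb R^{dm}$ with the chosen norm.
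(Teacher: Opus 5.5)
Your proof is correct and takes the same volumetric route as the paper, which obtains the proposition by applying Lemma~\ref{lem:Vershynin} (Vershynin's Proposition~4.2.10) to $K=B^d(0,r)$, using $K+(\varepsilon/2)B^d(0,1)=B^d(0,r+\varepsilon/2)$. You reprove that lemma inline for the ball, using disjoint $\varepsilon/2$-balls inside $B^d(0,r+\varepsilon/2)$ for the upper bound and a maximal $\varepsilon$-separated set as an $\varepsilon$-cover for the lower bound, which also makes explicit that the bounds hold for any norm.
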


\subsubsection*{The mean-field framework}

We obtain similar results for the study of prompts of arbitrary length in the mean-field framework. Let $\mathcal G:=\operatorname{Im}(\mathrm{M})=\{\mathrm M(\mathbf X),\mathbf X\in\mathbb R^{d\times m},m\in\mathbb N\}$ be the set of discrete probability measures on the token embeddings of dimension $d$ as defined in Section~\ref{sec:mean-field}.

\begin{proposition}\label{prp:packing_infinite}
    For any $q\ge1,\varepsilon>0$, there exists $C>0$ such that
    \[\operatorname{ln}\Big(\frac1C\Big)+\frac1{\varepsilon^d}\le\operatorname{ln} \mathcal P(\mathcal{G}, W_q,2\varepsilon) \leq \Big(1+\frac{4r}\varepsilon\Big)^d \operatorname{ln}\Big(e + \frac{e \, (2r)^q}{\varepsilon^q}\Big),\]
\end{proposition}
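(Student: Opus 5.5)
For the upper bound, I would discretize the support and then count measures on the resulting grid. Let $\mathcal N$ be a maximal $\varepsilon/2$-separated subset of $B^d(0,r)$. By Proposition~\ref{prp:packing_finite}, together with the standard fact that a maximal packing is a covering, it has cardinality $K\le(1+\frac{4r}{\varepsilon})^d$ and is an $\varepsilon/2$-net (up to constants, which I would absorb into the choice of radius). Let $\pi$ be the nearest-point projection onto $\mathcal N$. For every $\mu\in\mathcal G$ we have $W_q(\mu,\pi_\sharp\mu)\le\varepsilon/2$, using the coupling $(\mathrm{id},\pi)_\sharp\mu$.

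Next, I would quantize the weights. Replace the weights $(w_1,\dots,w_K)$ of $\pi_\sharp\mu$ by multiples of $1/N$ that sum to $1$, for example by rounding cumulative sums. This moves at most $K/N$ of the mass (more precisely, total variation at most $K/N$ or so). Since the diameter is $2r$, this costs at most $2r(K/N)^{1/q}$ in $W_q$. Choosing $N\approx K(2r)^q/\varepsilon^q$ makes this at most $\varepsilon/2$. Every $\mu\in\mathcal G$ is then within $\varepsilon$ of an element of the finite set $\mathcal Q$ of $N$-type measures on $\mathcal N$. In a $2\varepsilon$-separated family, two measures cannot share the same representative, so $\mathcal P(\mathcal G,W_q,2\varepsilon)\le|\mathcal Q|=\binom{N+K-1}{K-1}\le(e(N+K)/K)^K$. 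This gives $\ln\mathcal P\le K\ln\big(e+e\,(2r)^q/\varepsilon^q\big)$ after matching constants.

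The main obstacle is that constant bookkeeping. The weight-rounding step must produce exactly $N/K\approx(2r/\varepsilon)^q$, so that the bound is $e(1+N/K)$. I would first try a per-atom $W_q$ bound instead of the crude TV bound: move each rounding error $\le1/N$ of mass over distance $\le2r$. If this forces a slightly different $N$, I would adjust $\varepsilon/2$ versus $\varepsilon/4$ in the net radius, which is where the $4r$ in the exponent comes from.

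For the lower bound, I would construct many well-separated measures. Take $M\approx \varepsilon^{-d}$ points $x_1,\dots,x_M$ (after rescaling, assuming $r$ of order $1$, with $C$ absorbing the constants) that are pairwise at distance at least $c$ for a suitable constant $c$. For each subset $S$ of size $M/2$, form the uniform empirical measure $\mu_S$ on $\{x_i:i\in S\}$; all of these lie in $\mathcal G$. If $|S\triangle S'|\ge M/4$, then at least $1/4$ of the mass must travel a distance $\ge c$, so $W_q(\mu_S,\mu_{S'})\ge c\,4^{-1/q}$, which exceeds $2\varepsilon$ for the appropriate scaling. The Gilbert–Varshamov bound supplies $e^{\kappa M}$ such subsets with pairwise symmetric difference $\ge M/4$. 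Hence $\ln\mathcal P\ge\kappa M\ge\varepsilon^{-d}-\ln C$ after absorbing the constants into $C$. This last step is loose as stated, so I would simply let $C$ and the scale constants absorb everything.
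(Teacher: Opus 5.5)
Your upper bound follows the paper's route, made explicit. The paper gives no argument of its own. It combines Nguyen's Lemma~4(b), whose proof is precisely your net-plus-quantized-simplex count, with Proposition~\ref{prp:packing_finite} at scale $\varepsilon/2$, which is where $(1+4r/\varepsilon)^d$ comes from. For the lower bound it cites Proposition~3.6 of Meyer et al., and you substitute a Gilbert--Varshamov construction.

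\textbf{Upper bound: the constants do not close as written.} The bookkeeping you flag fails, and your proposed remedy adjusts the wrong parameter. At $N\approx K(2r/\varepsilon)^q$ your quantization cost $2r(K/N)^{1/q}$ equals $\varepsilon$, not $\varepsilon/2$. Representatives are then only $3\varepsilon/2$-close, so two members of a $2\varepsilon$-separated family can share one, and the injectivity argument breaks. Pushing that cost down to $\varepsilon/2$ needs $N/K\ge 2^{q-1}(2r/\varepsilon)^q$, even with $\mathrm{TV}\le K/(2N)$. So for $q>1$, no net radius compatible with the exponent $(1+4r/\varepsilon)^d$ rescues a triangle-inequality split. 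Shrinking the radius to $\varepsilon/4$ only turns the exponent into $(1+8r/\varepsilon)^d$.

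The fix is to bound $W_q^q(\mu,\rho)$ with a single coupling. Keep $\min(w_i,w_i')$ of the mass of cell $i$ at $y_i$, at cost at most $(\varepsilon/2)^q$ per unit. Send the excess mass anywhere in the ball, at cost at most $(2r)^q$ per unit. That excess is $\mathrm{TV}=\tfrac12\sum_i|w_i-w_i'|\le (K-1)/(2N)$ if you round cumulative sums to the nearest multiple of $1/N$. Then $W_q^q\le(\varepsilon/2)^q+(2r)^q(K-1)/(2N)\le\varepsilon^q$ as soon as $N\ge (K-1)(2r/\varepsilon)^q$, because $2^{-q}\le\tfrac12$. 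Take $N=\lceil (K-1)(2r/\varepsilon)^q\rceil\le K(2r/\varepsilon)^q$; if $2r<\varepsilon$, then $\mathcal P=1$ trivially. Combined with $\binom{N+K-1}{K-1}\le\bigl(e(N+K)/K\bigr)^K$, this gives exactly the stated right-hand side. Done this way, your self-contained argument is slightly sharper than the paper's citation chain: the cited lemma applied at scale $\varepsilon/2$ literally puts $(4r)^q$ inside the logarithm.

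\textbf{Lower bound: the last step cannot be absorbed into $C$.} Your construction is sound once the separation is taken to be $c\asymp\varepsilon$, concretely $c>2\cdot 4^{1/q}\varepsilon$. A ball cannot hold $\varepsilon^{-d}$ points at a fixed mutual distance. What it proves is $\ln\mathcal P(\mathcal G,W_q,2\varepsilon)\ge c_{d,q}(r/\varepsilon)^d$ for $\varepsilon$ small relative to $r$.

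When the coefficient $c_{d,q}r^d$ of $\varepsilon^{-d}$ is below $1$, it cannot be traded for an additive $\ln C$ with $C$ independent of $\varepsilon$. Moreover, the $\varepsilon$-uniform form, with coefficient $1$ and no dependence on $r$, is false in general. For $d=q=1$, $W_1$ is the $L^1$ distance between CDFs on an interval of length $2r$. The $L^1$-entropy of monotone $[0,1]$-valued functions then gives $\ln\mathcal P(\mathcal G,W_1,2\varepsilon)\le Ar/\varepsilon$ for a universal $A$, so the claim fails as $\varepsilon\to0$ whenever $r<1/A$.

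Under the quantifiers as written, $C$ may depend on $\varepsilon$. Then the inequality follows from $\mathcal P\ge1$ by taking $C=e^{\varepsilon^{-d}}$, and your construction is not needed at all. The paper's proof has the same character: the cited lower bound applied at scale $2\varepsilon$ gives only $(2\varepsilon)^{-d}$, and closing that gap also requires an $\varepsilon$-dependent $C$. State your lower bound as $c_{d,q}(r/\varepsilon)^d$; that is the meaningful content, and your argument establishes it.
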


\subsection{Proofs}

A central result we need is the volume of the $d$-dimensional ball.

\begin{lemma}[\citet{vol_unit_ball,vol_ball}]\label{lem:Wang}
Assume $p_1, \ldots, p_d > 0$. 
The volume of the generalized unit ball in $\mathbb{R}^d$
\[
B_{p_1 p_2 \cdots p_d}^d
= \left\{ \mathbf{x} = (x_1, \ldots, x_d) \;:\;
|x_1|^{p_1} + \cdots + |x_d|^{p_d} \leq 1 \right\},
\]  is equal to
\[
\operatorname{Vol}\big(B_{p_1 p_2 \cdots p_d}^d\big)=2^d \, \frac{\Gamma\!\left(1+\tfrac{1}{p_1}\right)\cdots \Gamma\!\left(1+\tfrac{1}{p_d}\right)}
{\Gamma\!\left(\tfrac{1}{p_1} + \tfrac{1}{p_2} + \cdots + \tfrac{1}{p_d} + 1\right)} \, .
\]
Note that we can take some of the $p_i$'s to be $+\infty$ with the convention $\frac1{+\infty}=0$.

In particular, the volume of the $l_p$ ball of radius $r$ is
\[
\operatorname{Vol}\!\big(B_p^d(0,r)\big)
= (2r)^d\frac{ \, \Gamma\!\left(\tfrac{p+1}{p}\right)^d}{\Gamma\!\left(\tfrac{p+d}{p}\right)}.
\]
And the one corresponding to the $l_\infty$ norm is
\[
\operatorname{Vol}\!\big(B_\infty^d(0,r)\big)
= (2r)^d.
\]
\end{lemma}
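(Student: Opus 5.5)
We compute the Gaussian-type integral $I:=\int_{\mathbb R^d}\exp\big(-\sum_{i=1}^d|x_i|^{p_i}\big)\,\mathrm d\mathbf x$ in two ways, once as a product and once by slicing along level sets, and compare the results. Write $f(\mathbf x)=\sum_i|x_i|^{p_i}$, $s=\sum_i \frac1{p_i}$ and $V=\operatorname{Vol}(B^d_{p_1\cdots p_d})$. For $p_i=+\infty$ we interpret $|x_i|^{+\infty}$ as $0$ if $|x_i|\le 1$ and $+\infty$ otherwise. With this convention the $i$-th constraint is just $|x_i|\le1$ and $e^{-|x_i|^{\infty}}=\mathbf 1_{|x_i|\le1}$ almost everywhere.

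\emph{Product computation.} By Fubini, $I=\prod_{i=1}^d\int_{\mathbb R}e^{-|x|^{p_i}}\mathrm dx$. For finite $p_i$, the substitution $u=x^{p_i}$ on $(0,\infty)$ gives
\[
\int_{\mathbb R}e^{-|x|^{p_i}}\mathrm dx=\frac{2}{p_i}\Gamma\!\Big(\frac1{p_i}\Big)=2\,\Gamma\!\Big(1+\frac1{p_i}\Big).
\]
For $p_i=\infty$ the factor equals $2=2\Gamma(1)$, which matches the convention $\frac1\infty=0$. Hence $I=2^d\prod_i\Gamma(1+\frac1{p_i})$.

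\emph{Level-set computation.} By the layer-cake formula, $e^{-f(\mathbf x)}=\int_0^\infty e^{-t}\mathbf 1_{f(\mathbf x)\le t}\,\mathrm dt$, so Tonelli gives
\[
I=\int_0^\infty e^{-t}\operatorname{Vol}(\{f\le t\})\,\mathrm dt.
\]
The anisotropic scaling $x_i\mapsto t^{1/p_i}x_i$ maps $\{f\le 1\}$ bijectively onto $\{f\le t\}$. For $p_i=\infty$ the scaling factor is $1$, and the constraint $|x_i|\le1$ is indeed preserved. The Jacobian of this map is $t^{s}$, so $\operatorname{Vol}(\{f\le t\})=t^{s}V$. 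Therefore
\[
I=V\int_0^\infty t^{s}e^{-t}\,\mathrm dt=V\,\Gamma(s+1).
\]
Equating the two expressions for $I$ yields the general formula.

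\emph{Special cases.} For the $\ell_p$ ball of radius $r$, set all $p_i=p$, so $s=d/p$, and scale by $r$, which contributes a factor $r^d$. This gives
\[
(2r)^d\,\frac{\Gamma\!\big(\frac{p+1}{p}\big)^d}{\Gamma\!\big(\frac{p+d}{p}\big)}.
\]
For $p=\infty$ all Gamma factors equal $\Gamma(1)=1$, giving $(2r)^d$.

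\emph{Main obstacle.} The only delicate point is the $p_i=+\infty$ convention: we need both the one-dimensional integral and the scaling step to remain valid. Defining $e^{-|x|^\infty}$ as the indicator of $|x|\le 1$ handles both uniformly. Alternatively, one can obtain these cases by letting $p_i\to\infty$ and using monotone or dominated convergence on the volumes.
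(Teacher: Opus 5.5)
Your proof is correct. The paper gives no proof of this lemma of its own; it only cites the literature. Your argument is Dirichlet's classical device. You evaluate $\int_{\mathbb R^d}e^{-f}$ once as the product $\prod_i 2\Gamma(1+\tfrac1{p_i})$. You evaluate it again via the layer-cake formula and the quasi-homogeneity $f(\phi_t\mathbf x)=t\,f(\mathbf x)$, where $\phi_t(\mathbf x)=(t^{1/p_1}x_1,\ldots,t^{1/p_d}x_d)$. This gives $\operatorname{Vol}\{f\le t\}=t^{s}V$ and hence $I=V\,\Gamma(s+1)$.

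The other standard route is induction on $d$ by slicing along $x_d$:
\begin{itemize}
\item the same scaling fact shows the slice at height $x_d$ has $(d-1)$-volume $(1-|x_d|^{p_d})^{s_{d-1}}V_{d-1}$, with $s_{d-1}=\sum_{i<d}\tfrac1{p_i}$;
\item one computes $\int_{-1}^{1}(1-|x|^{p_d})^{s_{d-1}}\,\mathrm dx=\tfrac{2}{p_d}B(\tfrac1{p_d},s_{d-1}+1)$;
\item the identity $B(a,b)=\Gamma(a)\Gamma(b)/\Gamma(a+b)$ makes the product telescope.
\end{itemize}
Your version avoids both the induction and the Beta--Gamma identity, which is itself usually proved by the same product-of-Gamma-integrals trick you use. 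Slicing, in exchange, produces cross-sectional volumes explicitly, and that is the technique the paper relies on for its truncated-cone volume computation.

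Your handling of $p_i=+\infty$ is sound. Taking $e^{-|x|^{\infty}}$ to be the indicator of $[-1,1]$ (off a null set) preserves the one-dimensional factor $2=2\Gamma(1)$. It also preserves the identity $f(\phi_t\mathbf x)=t\,f(\mathbf x)$ for $t>0$, with scaling factor $1$ in those coordinates. This is exactly what gives the convention $\tfrac1{+\infty}=0$ a rigorous meaning. The limiting argument you mention as an alternative also works, with domination by the indicator of $[-1,1]^d$ and continuity of $\Gamma$ at the relevant points.
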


Proposition~\ref{prp:packing_finite} can be proved using known bounds for the packing number of a general bounded set $K$, depending on the volume of $K$.

\begin{lemma}[{\citet[Proposition~4.2.10]{Vershynin_R_2018_b_covering-number}}]\label{lem:Vershynin}
Let $d\in\mathbb N$, \( K \subset \mathbb{R}^d \) and \( \varepsilon > 0 \). Then,
\begin{align*}
    \frac{|K|}{|\varepsilon B^d(0,1)|} 
\ \leq\ \mathcal{P}(K, \|\cdot\|, \varepsilon),\quad\mbox{and}\\ 
\ \ \mathcal{P}(K, \|\cdot\|, \varepsilon) 
\ \leq\ \frac{|K + (\varepsilon/2)B^d(0,1)|}{|(\varepsilon/2) B^d(0,1)|}.
\end{align*}
\end{lemma}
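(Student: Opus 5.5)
The plan is the standard volumetric argument: prove the two inequalities separately, comparing volumes of unions of balls. Let $\{x_1,\dots,x_M\}\subset K$ be an $\varepsilon$-separated subset of maximal cardinality, so $M=\mathcal P(K,\|\cdot\|,\varepsilon)$ and $\|x_i-x_j\|>\varepsilon$ for $i\neq j$. Throughout, $K$ is assumed Lebesgue measurable. I also use translation invariance of Lebesgue measure and the scaling identity $|\lambda B^d(0,1)|=\lambda^d|B^d(0,1)|$. Open and closed balls of the same radius have the same volume, since the boundary of a norm ball is a null set, so the distinction can be ignored in volume computations.

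\textbf{Upper bound.} For $i\neq j$, the open balls $x_i+(\varepsilon/2)B^d(0,1)$ and $x_j+(\varepsilon/2)B^d(0,1)$ are disjoint. Indeed, a common point $z$ would give $\|x_i-x_j\|\le\|x_i-z\|+\|z-x_j\|<\varepsilon$ by the triangle inequality, contradicting separation. Each such ball lies in $K+(\varepsilon/2)B^d(0,1)$ because $x_i\in K$. By additivity of measure over the disjoint union,
\[
M\,\big|(\varepsilon/2)B^d(0,1)\big|\le\big|K+(\varepsilon/2)B^d(0,1)\big|,
\]
which is the claimed upper bound. This also shows that $M$ is finite whenever the right-hand side is finite, for instance when $K$ is bounded. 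More precisely, any finite $\varepsilon$-separated set has cardinality at most this ratio, so a maximal one exists.

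\textbf{Lower bound.} If $M=+\infty$ there is nothing to prove, so assume $M<\infty$. The key step is that maximality turns the packing into a covering. Suppose some $y\in K$ satisfied $\|y-x_i\|>\varepsilon$ for all $i$. Then $\{x_1,\dots,x_M,y\}$ would be a larger $\varepsilon$-separated subset of $K$, a contradiction. Hence $K\subset\bigcup_{i=1}^M\big(x_i+\varepsilon\overline{B^d(0,1)}\big)$. Subadditivity of measure then gives $|K|\le M\,|\varepsilon B^d(0,1)|$, which is the lower bound.

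There is no substantial obstacle here; the only points needing care are the following. First, the strict inequality ``$>\varepsilon$'' in the definition is exactly what makes the open radius-$\varepsilon/2$ balls disjoint, and the non-strict complement is what yields covering by closed radius-$\varepsilon$ balls. Second, one needs the existence of a maximal separated set, which follows from the finite upper bound when $K$ is bounded, with the unbounded case trivial for the lower inequality.
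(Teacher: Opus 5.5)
Your proof is correct and takes essentially the same approach as the paper. The paper gives no proof of its own and simply cites Vershynin's Proposition~4.2.10, whose proof is this same volumetric comparison: disjoint radius-$\varepsilon/2$ balls inside $K+(\varepsilon/2)B^d(0,1)$ for the upper bound, and a maximal $\varepsilon$-separated set being a covering for the lower bound (which Vershynin routes through the covering number, $|K|\le\mathcal N(K,\varepsilon)\,|\varepsilon B^d(0,1)|$ with $\mathcal N\le\mathcal P$). Your handling of measurability, open versus closed balls, existence of a maximal separated set and the infinite case is sound, and nothing in the argument depends on the Euclidean norm, so it covers the general norm used in the paper's statement.
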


We can then deduce Proposition~\ref{prp:packing_infinite} from the following two lemmas.

\begin{lemma}[{\citet[Lemma~4.b]{Nguyen}}]
    For any $q\ge1,\varepsilon>0$,
    \[\operatorname{ln} \mathcal P(\mathcal{G}, W_q,4\varepsilon) \leq \mathcal P(\Theta, \|\cdot\|_q, \varepsilon) \operatorname{ln}\Big(e + \frac{e \, \mathrm{Diam}(\Theta)^q}{\varepsilon^q}\Big),\]
    with $\Theta=B^d(0,r)$ and $e=\exp(1)$.
\end{lemma}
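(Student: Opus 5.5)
The plan is to bound the packing number of $\mathcal G$ by a covering number and to build an explicit cover in two stages: first quantize the \emph{locations} of the atoms, then quantize the \emph{weights}. Throughout, $\Theta=B^d(0,r)$, $D=\mathrm{Diam}(\Theta)$, and $K=\mathcal P(\Theta,\|\cdot\|_q,\varepsilon)$.

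\textbf{Step 1: quantize locations.} Take a maximal $\varepsilon$-separated family $\theta_1,\dots,\theta_K$ in $\Theta$. By maximality it is also an $\varepsilon$-net: every $x\in\Theta$ satisfies $\|x-\theta_i\|\le\varepsilon$ for some $i$. Let $T$ be a measurable nearest-center map. For any $\mu\in\mathcal G$ supported in $\Theta$, the coupling $(\mathrm{id},T)_\sharp\mu$ gives
\[
W_q(\mu,T_\sharp\mu)\le\varepsilon .
\]
So every $\mu$ lies within $\varepsilon$ of a measure $\sum_i p_i\delta_{\theta_i}$ with $p$ in the simplex $\Delta_K$. Moreover $p$ has rational entries, since $\mu$ is an empirical measure.

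\textbf{Step 2: quantize weights.} For two measures $\sum p_i\delta_{\theta_i}$ and $\sum p'_i\delta_{\theta_i}$ on the same atoms, use the coupling that keeps the common mass $\min(p_i,p'_i)$ at $\theta_i$ and transports the remaining mass arbitrarily. Every pair of atoms is at distance at most $D$, so this gives
\[
W_q^q\le D^q\cdot\tfrac12\|p-p'\|_1 .
\]
Now fix $N=\lceil K D^q/\varepsilon^q\rceil$ and round $p$ to a point $p'$ of the grid $\{p'\in\Delta_K: Np'\in\mathbb Z^K\}$ with $\|p-p'\|_1\le K/N$. The standard largest-remainder rounding achieves this. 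Then $W_q^q\le D^qK/(2N)\le\varepsilon^q$. Grid measures have rational weights, so they are themselves empirical measures and belong to $\mathcal G$.

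\textbf{Step 3: count and conclude.} The grid has $\binom{N+K-1}{K-1}\le\binom{N+K}{K}\le\big(e(N+K)/K\big)^K$ points. With $N/K\approx D^q/\varepsilon^q$ this gives
\[
\ln(\#\text{grid})\le K\ln\!\big(e+eD^q/\varepsilon^q\big).
\]
The ceiling in $N$ shifts this by only a lower-order amount, and choosing $N=\lfloor\cdot\rfloor$ together with the slack in the $1/2$ factor should absorb it exactly. By the triangle inequality, the grid measures form a $2\varepsilon$-cover of $\mathcal G$. A $4\varepsilon$-separated set can contain at most one point in each closed $2\varepsilon$-ball, so $\mathcal P(\mathcal G,W_q,4\varepsilon)$ is at most the cover size, which is the claim.

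\textbf{Main obstacle.} I expect the delicate point to be the constant bookkeeping in Step 3: matching $N/K$ to exactly $D^q/\varepsilon^q$ despite integer rounding, while keeping the rounding error in $\ell_1$ at most $2\varepsilon^q/D^q$. If $N=KD^q/\varepsilon^q$ is not an integer, my first attempt will be to use the factor $\tfrac12$ in the total-variation bound. That factor allows $N=\lfloor KD^q/(2\varepsilon^q)\rfloor$ or so, which leaves room inside $\ln(e+eD^q/\varepsilon^q)$.
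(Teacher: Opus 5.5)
Your proof is correct, and the paper gives no proof to compare it with: it simply imports the lemma from Nguyen. Your argument is the standard proof of that lemma: quantize atoms onto a maximal $\varepsilon$-packing (which is an $\varepsilon$-net), round weights to a simplex lattice, count via $\binom{N+K-1}{K-1}\le(e+eN/K)^K$, and pass from a $2\varepsilon$-cover to a $4\varepsilon$-packing.

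The bookkeeping you flagged does close. Take $N=\lceil KD^q/(2\varepsilon^q)\rceil$. Then the weight-rounding cost is $\tfrac{D^q}{2}\cdot\tfrac{K}{N}\le\varepsilon^q$. Moreover $N/K\le D^q/(2\varepsilon^q)+1/K\le D^q/\varepsilon^q$ whenever $K\ge 2$, because two $\varepsilon$-separated points force $D>\varepsilon$. The case $K=1$ is trivial: the simplex grid is then a single point, and the left-hand side is $\ln 1=0$.
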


\begin{lemma}[{\citet[Proposition~3.6]{meyer2025memorylimitationsprompttuning}}]\label{prp:klo}
   For any $q\ge1,\varepsilon>0$, there exists $C>0$ such that
    \[
    \mathcal{P}(\mathcal G, W_q,\varepsilon)\ge\frac1C\exp(\frac1{\varepsilon^d}).
    \]
\end{lemma}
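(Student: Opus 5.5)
The statement to prove is Lemma~\ref{prp:klo}: the $\varepsilon$-packing number of $\mathcal G$ under $W_q$ is at least $\frac1C\exp(\varepsilon^{-d})$. The plan is an explicit construction of many well-separated empirical measures, using subsets of a grid.

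\textbf{Step 1: the grid.} Inside $B^d(0,r)$, take a grid $\{\mathbf z_1,\dots,\mathbf z_K\}$ of points that are pairwise at distance at least $c\varepsilon$. By Proposition~\ref{prp:packing_finite} we can choose $K\ge (r/(c\varepsilon))^d$. We absorb $r$ and $c$ into the constants, normalizing as the statement implicitly does, so that $K\gtrsim \varepsilon^{-d}$.

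\textbf{Step 2: the family of measures.} For each nonempty subset $S\subset[K]$, let $\mu_S$ be the uniform empirical measure on $\{\mathbf z_i: i\in S\}$. This lies in $\mathcal G$, since it equals $\mathrm M(\mathbf X)$ with $\mathbf X$ listing those points. To keep a clean separation we restrict to subsets of a fixed size $k=\lfloor K/2\rfloor$. By the Gilbert--Varshamov bound, there is a subfamily $\mathcal S$ whose pairwise symmetric differences have size at least $\alpha k$, with $|\mathcal S|\ge 2^{\beta K}$ for absolute constants $\alpha,\beta>0$.

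\textbf{Step 3: separation (the main obstacle).} We must show $W_q(\mu_S,\mu_{S'})>\varepsilon$ for distinct $S,S'\in\mathcal S$.
\begin{itemize}
\item Any coupling has to move mass $|S\setminus S'|/k\ge\alpha/2$ from points of $S\setminus S'$.
\item Those points are not in the support of $\mu_{S'}$, so this mass travels a distance at least $c\varepsilon$.
\item Hence $W_q^q\ge(\alpha/2)(c\varepsilon)^q$, that is, $W_q\ge(\alpha/2)^{1/q}c\,\varepsilon$.
\end{itemize}
Choosing the grid spacing constant $c$ large enough makes this exceed $\varepsilon$. The price is only a constant-factor loss in $K$, since $K\asymp (r/(c\varepsilon))^d$. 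The delicate part is that this rescaling changes the exponent to $\gamma\varepsilon^{-d}$ for some constant $\gamma$, not exactly $\varepsilon^{-d}$.

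\textbf{Step 4: conclusion.} The separation gives $\mathcal P(\mathcal G,W_q,\varepsilon)\ge|\mathcal S|\ge\exp(\gamma'\varepsilon^{-d})$ with $\gamma'=\beta\ln 2\cdot(r/c)^d$. To land on the stated form $\frac1C\exp(\varepsilon^{-d})$, we have to handle the exponent constant $\gamma'$. Two routes are available.
\begin{itemize}
\item If $\gamma'\ge1$, which holds when $r$ is large relative to the constants, the stated bound follows directly with $C=1$.
\item Otherwise, we would read the statement as holding up to rescaling $\varepsilon$ by a constant, and take $C$ to absorb the behaviour at bounded $\varepsilon$.
\end{itemize}
Reconciling this constant, so that the result is genuinely of the form $\frac1C e^{1/\varepsilon^d}$ rather than $e^{\gamma/\varepsilon^d}$, is where I expect the most care to be needed. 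Failing that, I would state the bound with $\gamma$ explicit.
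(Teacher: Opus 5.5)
The paper gives no argument for this lemma; it imports it from Proposition~3.6 of the cited prompt-tuning paper. Your Steps~1--3 give a correct, self-contained proof of its substantive content, by the standard route. Uniform measures on equal-size subsets of a $c\varepsilon$-separated grid in $B^d(0,r)$ lie in $\mathcal G$. A constant-weight Gilbert--Varshamov code gives $2^{\beta K}$ of them with $|S\setminus S'|\ge \alpha k/2$. Every point of $S\setminus S'$ is at distance at least $c\varepsilon$ from $\operatorname{supp}\mu_{S'}$, so $W_q^q\ge(\alpha/2)(c\varepsilon)^q$, which exceeds $\varepsilon^q$ once $c>(2/\alpha)^{1/q}$. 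Compared with the citation, this gives an explicit exponent constant $\gamma=\beta\ln 2\,(r/c)^d$ and shows how it depends on $r$, $d$ and $q$.

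In Step~4 you should commit rather than hedge, because the difficulty lies in the statement, not in your proof.

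Read literally ($C$ chosen after $\varepsilon$), the lemma is trivial: $\mathcal P\ge 1$, so $C=e^{\varepsilon^{-d}}$ works.

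Read as intended ($C$ independent of $\varepsilon$), the exponent constant $1$ cannot hold for every $r$. Take $d=1$. Then $W_q$ between measures on $[-r,r]$ is the $L^q(0,1)$ distance between their quantile functions, which are monotone with values in $[-r,r]$. The classical $O(1/\delta)$ metric entropy of bounded monotone functions in $L^q$ then gives $\ln\mathcal P(\mathcal G,W_q,\varepsilon)\le A_q r/\varepsilon$ for a constant $A_q$. When $A_q r<1$, this is incompatible with $\ln\mathcal P\ge \varepsilon^{-1}-\ln C$ as $\varepsilon\to0$, whatever $C$.

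So the correct form is $\mathcal P(\mathcal G,W_q,\varepsilon)\ge\frac1C\exp(\gamma\varepsilon^{-d})$ with $\gamma=\gamma(d,q,r)>0$, where $C$ absorbs the range of $\varepsilon$ in which $K$ is too small for the code bound. That absorption is also needed in your first bullet: $C=1$ fails for large $\varepsilon$, e.g.\ $\varepsilon\ge 2r$, where $\mathcal P=1$. This $\gamma$-form is exactly what you have proved. Drop the ``rescale $\varepsilon$'' route, which changes the statement without saying so, and state the result with $\gamma$ explicit. Nothing downstream is affected, since Theorem~\ref{lem:wass} only uses the upper bound in Proposition~\ref{prp:packing_infinite}.
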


\subsection{Impact of the Uniform Precision Assumption on the Packing Number}\label{sec:uniform_precision}

As discussed in Remark~\ref{rem:uniform_precision}, Assumption~\ref{ass:finite} assumes a uniform grid for numerical precision, which simplifies the non-uniform nature of standard floating-point arithmetic (e.g., fp16), where the spacing between representable values depends on the magnitude of the exponent. We discuss how to extend our analysis to non-uniform precision in this Section.

We provide a refined bound on the packing number that accounts for non-uniform precision in the one dimensional case. We can then extend the bound to several dimensions by writing the embedding space as a product of one-dimensional spaces.

Let $\mathcal E = [r_{\min}, r_{\max}]$ be the embedding space, where $0 < r_{\min} \le r_{\max}$ (by symmetry, we can easily extend to the setting where $\mathcal E=[-r_{\mathrm{max}}^{-},-r_{\mathrm{min}}^{-}]\cup[r_{\mathrm{min}}^{+},r_{\mathrm{max}}^{+}]$).

Define
$$
a = \lceil \log_2 r_{\min} \rceil,
\qquad
b = \lceil \log_2 r_{\max} \rceil.
$$
Then
$$
2^{a-1} < r_{\min} \le 2^a,
\qquad
2^{b-1} < r_{\max} \le 2^b.
$$

And
$$
[r_{\min}, r_{\max}]=
[r_{\min},2^a]
\cup
[2^a,2^{a+1}]
\cup \cdots \cup
[2^{b-1},r_{\max}].
$$
Hence, by subadditivity of the packing number,
$$
\mathcal P(\mathcal E)
\le
\mathcal P([r_{\min},2^a])
+\sum_{j=a}^{b-2}\mathcal P([2^j,2^{j+1}])
+\mathcal P([2^{b-1},r_{\max}]).
$$

Using the precision at scale $2^j$,
$
\varepsilon_j = 2^{j-12},
$
we obtain
 - $
\mathcal P([r_{\min},2^a])
\le
\frac{2^{a+1}-2r_{\min}}{2^{a-12}},
$
 - $
\mathcal P([2^j,2^{j+1}])
\le
\frac{2^{j+1}-2^j}{2^{j-12}}
=2^{12},
$
 - $
\mathcal P([2^{b-1},r_{\max}])
\le
\frac{2r_{\max}-2^b}{2^{b-12}}.
$

Therefore,
$$
\mathcal P(\mathcal E)
\le
\frac{2^{a+1}-2r_{\min}}{2^{a-12}}
+
2^{12}(b-a-1)
+
\frac{2r_{\max}-2^b}{2^{b-12}}.
$$
After simplification,
$$
\mathcal P(\mathcal E)
\le
2^{12}(b-a)+2^{13-b}r_{\max}-2^{13-a}r_{\min}.
$$

Thus, we obtain the upper bound
$$
\boxed{
\mathcal P([r_{\min},r_{\max}])
\le
2^{12}(b-a)+2^{13-b}r_{\max}-2^{13-a}r_{\min}
}.
$$

These updated bounds induce changes of $\pm$50\% in the numerical results.

\section{Permutation-Invariant Norms and Wasserstein Distances}\label{app:wasserstein}

In this section, we discuss the Wasserstein distance, and specifically how the Wasserstein distance between two empirical measures $\mathrm M(\mathbf X)$ and $\mathrm M(\mathbf Y)$ relates to the classical distance $\|\mathbf X-\mathbf Y\|$. This allows us to study the validity of Assumption~\ref{ass:wasserstein}.

\begin{definition}[Wasserstein Distance~\citep{santambrogio2015optimal}]\label{def:wasserstein}
    For $q\geq1$, the $\ell_q$ Wasserstein distance on $\mathcal{P}_{\mathrm c}(\mathbb{R}^d)$ is given by
    \[
    W_q(\mu, \nu) = \left( \inf_{\pi \in \Pi(\mu, \nu)} \int \|x - y\|^q \, \mathrm{d}\pi(x, y) \right)^{1/q},
    \]
    for \( \mu, \nu \in \mathcal{P}_{\mathrm c}(\mathbb{R}^d) \), where \( \Pi(\mu, \nu) \) is the set of couplings between \( \mu \) and \( \nu \), {\it i.e.} of probability measures \( \pi \in \mathcal{P}(\mathbb{R}^d \times \mathbb{R}^d) \) such that \( \int \pi(\cdot, y)\, \mathrm{d}y = \mu \) and \( \int \pi(x, \cdot)\, \mathrm{d}x = \nu \).

    The $\ell_\infty$ Wasserstein distance is defined as the limit of the $W_q$ distance as $q$ goes to infinity.
    \[
    W_\infty(\mu, \nu) = \lim_{q\rightarrow\infty}W_q(\mu,\nu)
    \]

    In particular, two sequences $\mathbf{X},\mathbf{Y}\in \mathbb{R}^{d\times n}$ of equal length $n$ are of Wasserstein distance

    \begin{align*}
    &W_q(\mathrm{M}(\mathbf X), \mathrm{M}(\mathbf Y))
    = \left( \min_{\pi \in S_n} \frac{1}{n} \sum_{i=1}^n \| X_{:,i} - Y_{:,\pi(i)} \|^q \right)^{1/q},\\
    &W_\infty(\mathrm{M}(\mathbf X), \mathrm{M}(\mathbf Y))
    = \min_{\pi \in S_n} \, \max_{i=1,\dots,n} \| X_{:,i} - Y_{:,\pi(i)} \|.
    \end{align*}

\end{definition}

The permutation invariance of the mean-field transformer architecture prompts us to consider permutation invariant versions of the $\ell_p$ and $\ell_\infty$ distances.

\begin{definition}
    Consider two sequences $\mathbf{X},\mathbf{Y}\in \mathbb{R}^{d\times n}$ of equal length $n$.
    For $\pi\in S_n$, let $\mathbf{Y}_\pi = (\mathbf Y_{:,\pi(1)},\dots,\mathbf Y_{:,\pi(n)})$. Then
    \begin{align}
    d_q(\mathbf{X},\mathbf{Y})
    &= \min_{\pi\in S_n} \,\big\|\, \mathbf{X} - \mathbf{Y}_\pi \,\big\|_{q}, \nonumber
    \quad 1\le q<\infty, \\
    d_\infty(\mathbf{X},\mathbf{Y})\nonumber
    &= \min_{\pi\in S_n} \,\big\|\, \mathbf{X} - \mathbf{Y}_\pi \,\big\|_{\infty},
    \end{align}
    where $S_n$ denotes the set of permutations of $[n]$.
\end{definition}

We can now relate Wasserstein distance between empirical measures and classical distance.

\begin{proposition}\label{prp:relation_wass-classical}
     Consider two sequences $\mathbf{X},\mathbf{Y}\in \mathbb{R}^{d\times n}$ of equal length $n$. Then,
    \begin{align}
    W_q(\mathrm{M}(\mathbf{X}),\mathrm{M}(\mathbf{Y}))
    &= n^{-1/q}\, d_q(\mathbf{X},\mathbf{Y}), \quad 1\le q<\infty, \nonumber\\
    W_\infty(\mathrm{M}(\mathbf{X}),\mathrm{M}(\mathbf{Y}))
    &= d_\infty(\mathbf{X},\mathbf{Y}).\nonumber
    \end{align}
\end{proposition}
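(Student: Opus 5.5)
The plan is to reduce the Wasserstein distance between two empirical measures with the same number of atoms to an optimal assignment problem, and then identify that assignment cost with the permutation-invariant distances $d_q$ and $d_\infty$.

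\emph{Step 1: set up the coupling problem.} Let $\mu=\mathrm M(\mathbf X)=\frac1n\sum_i\delta_{\mathbf X_{:,i}}$ and $\nu=\mathrm M(\mathbf Y)$. Any coupling $\pi\in\Pi(\mu,\nu)$ is supported on the finite set $\{(\mathbf X_{:,i},\mathbf Y_{:,j})\}$. It can therefore be described by a nonnegative $n\times n$ matrix $P$ with all row and column sums equal to $\frac1n$. To handle repeated columns cleanly, I will assign mass separately to each index pair $(i,j)$; this lifting never changes the attainable costs. Writing $c_{ij}=\|\mathbf X_{:,i}-\mathbf Y_{:,j}\|^q$, the problem becomes
\[
W_q(\mu,\nu)^q=\min_{P}\sum_{i,j}P_{ij}c_{ij}.
\]
The minimum is taken over the set $\frac1n\mathcal B_n$, where $\mathcal B_n$ is the Birkhoff polytope of doubly stochastic matrices.

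\emph{Step 2: apply Birkhoff--von Neumann.} The objective is linear and the feasible set is a compact polytope, so the minimum is attained at an extreme point. By the Birkhoff--von Neumann theorem, the extreme points of $\mathcal B_n$ are the permutation matrices. Hence
\[
W_q^q=\min_{\sigma\in S_n}\frac1n\sum_i\|\mathbf X_{:,i}-\mathbf Y_{:,\sigma(i)}\|^q .
\]
This is exactly the finite formula already stated in Definition~\ref{def:wasserstein}, and I could simply cite it. I will, however, include the argument above for completeness.

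\emph{Step 3: identify the sum with $d_q$.} I read $\|\mathbf X-\mathbf Y_\sigma\|_q$ as the entrywise (vectorized) $\ell_q$ norm, with the inner norm on columns also taken to be $\ell_q$. Under this reading, $\sum_i\|\mathbf X_{:,i}-\mathbf Y_{:,\sigma(i)}\|_q^q=\|\mathbf X-\mathbf Y_\sigma\|_q^q$. Taking the $q$-th root, and using that $t\mapsto t^{1/q}$ is increasing so it commutes with the minimum over $\sigma$, gives
\[
W_q=n^{-1/q}\min_\sigma\|\mathbf X-\mathbf Y_\sigma\|_q=n^{-1/q}d_q(\mathbf X,\mathbf Y).
\]

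\emph{Step 4: the case $q=\infty$.} Here the cost is $\max_i\|\mathbf X_{:,i}-\mathbf Y_{:,\sigma(i)}\|$. For each fixed $\sigma$, the power mean satisfies $\big(\frac1n\sum_i a_i^q\big)^{1/q}\to\max_i a_i$ as $q\to\infty$. Since there are only finitely many permutations, the minimum over $\sigma$ commutes with this limit. Therefore
\[
W_\infty=\lim_{q\to\infty}W_q=\min_\sigma\max_i\|\mathbf X_{:,i}-\mathbf Y_{:,\sigma(i)}\|_\infty=d_\infty(\mathbf X,\mathbf Y).
\]
The last equality holds because the entrywise $\ell_\infty$ norm of a matrix is the maximum over its columns of their $\ell_\infty$ norms.

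\emph{Main obstacle.} The only substantive step is Step 2: reducing general couplings to permutations, especially when columns repeat, which is what makes the indexwise lifting necessary. The remaining subtlety is norm bookkeeping. The identity $W_q=n^{-1/q}d_q$ holds exactly only when the column norm matches the entrywise matrix norm, that is, $\ell_q$ with $\ell_q$. For other column norms the statement should be read with $d_q$ defined through $\big(\sum_i\|\cdot\|^q\big)^{1/q}$, and I will state that convention explicitly in the proof.
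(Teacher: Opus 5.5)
Your proposal is correct and follows essentially the paper's route. The paper gives no separate proof: it reads the identity directly off the permutation formula $W_q^q=\min_{\pi\in S_n}\frac1n\sum_i\|\mathbf X_{:,i}-\mathbf Y_{:,\pi(i)}\|^q$ asserted in Definition~\ref{def:wasserstein}, which is exactly what your Steps 3--4 do. Your Birkhoff--von Neumann argument (with the index-wise lifting for repeated columns) supplies the justification of that formula, which the paper takes for granted. Your caveat that the column norm must match the entrywise matrix norm in $d_q$ is the right convention for the equality to be exact.
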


Notice that bounding the Wasserstein distance between two empirical measures $\mathrm M(\mathbf X)$ and $\mathrm M(\mathbf Y)$ does not directly bound the distance between $\mathbf X$ and $\mathbf Y$. The latter bound depends on the lengths of both vectors. Therefore, a finite precision on vectors does not directly imply a finite Wasserstein precision (that is Assumption~\ref{ass:finite} does not directly imply Assumption~\ref{ass:wasserstein}).

However, one can clearly see that Assumption~\ref{ass:finite} is very weak for large vectors. Indeed, if a transformer cannot distinguish two vectors whose $\ell_1$-distance is at most $\varepsilon$ when their length is $1$, then for vectors of length $n$, this tolerance scales roughly as $n\varepsilon$. In other words, the same absolute $\ell_1$-precision becomes increasingly coarse as the vector length grows: small perturbations distributed across many coordinates can yield an $\ell_1$ deviation of order $n\varepsilon$ while remaining imperceptible to the model.

Of course, assuming a uniform $\varepsilon$-precision in the $\ell_1$ norm is overly restrictive. Yet, the argument above suggests that for some $q\ge1$, the effective distinguishability of the model may scale as $n^{-1/q}\varepsilon$. Therefore, Proposition~\ref{prp:relation_wass-classical} provides theoretical support for Assumption~\ref{ass:wasserstein}: a finite vector precision naturally induces a finite Wasserstein precision when the appropriate scaling with $n$ and $q$ is taken into account.

\section{Arbitrary Prompt Length via Elementary Operations}\label{app:eo}

Contrary to Assumption~\ref{ass:finite}, Assumption~\ref{ass:wasserstein} is non-trivial. We show in this section that it can be replaced by a much weaker assumption, pertaining to what we call elementary operations.

\begin{definition}[Elementary Operations on a Sequence of Vectors]
    Let $\mathbf X=[\mathbf x_1,\ldots,\mathbf x_n]$ be a sequence of $n$ vectors. An elementary operation on $\mathbf X$ consists in duplicating or removing one of its tokens. 
\end{definition}

\begin{assumption}[Elementary Operation Precision of a Transformer]\label{ass:eo}
    Let $\tau$ be a transformer. There exists a constant $p\in\mathbb N$ such that, if a sequence of tokens $\mathbf Y$ can be recovered from another sequence $\mathbf X\in\mathbb R^{d\times n}$ of length $n$ through at most $\frac n p$ elementary operations, then $\tau$ cannot distinguish between $\mathbf Y$ and $\mathbf X$. We say that $\frac1p$ is the elementary operation precision (e.o. precision) of $\tau$.
\end{assumption}

This assumption states that a transformer $\tau$ is insensitive to small structural edits of its input sequence. The only allowed operations are token deletions and duplications---no arbitrary modifications of token content. It is therefore reasonable to expect that if $\mathbf{Y}$ differs from $\mathbf{X}$ by only a small fraction of such operations, then $\tau$ produces nearly identical representations. Although we do not specify the exact constant $p$, the claim clearly fails for large perturbations (e.g., modifying half the tokens) and is trivially true for extremely small ones (e.g., $0.001\%$ of the tokens). Hence, it is highly plausible that there exists a non-trivial fraction $1/p$ for which Assumption~\ref{ass:eo} holds.

Under Assumption~\ref{ass:eo}, we can obtain an upper bound on the number of different inputs a given transformer $\tau$ can distinguish between, and hence on the amount of distinct output sequences it can access. This upper bound is a function of both the e.o. precision $p$ of the transformer, and the number of distinct input values $D$ it can distinguish between. If we consider a hard prompt transformer [Definition~\ref{def:hard_prompt}] without positional embedding, then $D=|\mathcal V|$. In general, we can take $D=(\frac r \varepsilon)^d$, where $\varepsilon$ is the standard precision of $\tau$ and $r$ is its embedding radius.

\begin{restatable}{theorem}{thmeo}\label{thm:eo}
    There exists an upper bound $f(p,D)$ on the number of sequences of tokens that are accessible by a transformer $\tau$ with elementary operation precision $\frac1p$.
\end{restatable}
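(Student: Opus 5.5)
We need a bound $f(p,D)$ on the number of equivalence classes of inputs that $\tau$ can distinguish. Each class yields one greedy output sequence per output length, and in the accessibility sense it yields a single sequence. It therefore suffices to exhibit a finite family $\mathcal R$ of representative inputs such that every input $\mathbf X$ of arbitrary length is indistinguishable from some member of $\mathcal R$.

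\textbf{Reduction to D symbols.} By the standard precision assumption (Assumption~\ref{ass:finite}) we quantise each token, so an input is a word over an alphabet of $D$ distinguishable values. What matters is then the sequence of quantised values, up to the order information the transformer uses. In the mean-field, permutation-invariant view (Section~\ref{sec:mean-field}), this reduces to the multiset, that is, the count vector $(c_1,\ldots,c_D)$ with $\sum_k c_k=n$.

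\textbf{Compressing the counts by elementary operations.} The idea is to shrink any long input to a bounded length using at most $n/p$ duplications and removals per step, in the spirit of the copy-invariance $\mathrm L(\mathbf X)=\mathrm L([\mathbf X,\mathbf X])$ of Section~\ref{sec:mean-field}. Fix $K$ depending on $p$ and $D$, say $K=2pD$. Given counts $c$ with $n>K$, round each $c_k$ to the nearest multiple of $g=\lfloor n/K\rfloor$. This costs at most $Dg/2\le n/(2pD)\cdot D\le n/p$ removals or duplications, so the result is indistinguishable from $\mathbf X$ by Assumption~\ref{ass:eo}.

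The rounded input consists of $g$ copies of a sequence of length about $K$. By the duplication invariance, one copy behaves identically; in elementary-operation terms, $g$ copies and one copy agree on the empirical measure, and the mean-field transformer depends only on the empirical measure. So $\mathbf X$ is equivalent to an input of length at most $K+D$. The representatives are therefore words of length at most $K+D$ over $D$ symbols, giving
\[
f(p,D)=\sum_{j=1}^{2pD+D} D^{j}\le D^{(2p+1)D+1},
\]
or only $\binom{(2p+1)D+D}{D}$ if we count multisets.

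\textbf{Main obstacle.} The delicate step is the chaining of indistinguishability. Assumption~\ref{ass:eo} allows only one batch of $n/p$ operations, measured relative to the current length, so the rounding must be done in a single batch, as above, and not iterated. The step from $g$ copies to one copy must rest on the mean-field identity, not on Assumption~\ref{ass:eo}, because that reduction removes a constant fraction of the tokens.

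If order matters (masked attention or positional encodings), the multiset argument fails. I would then instead group the sequence into $K$ contiguous blocks and replace each block by its rounded count profile. This costs $n/p$ operations again and yields the larger but still finite bound $f(p,D)\le \binom{pD+D}{D}^{K}$.
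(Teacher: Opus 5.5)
Your proposal is correct and is essentially the paper's argument: the paper also passes to count vectors in $\mathbb N^D$ modulo rational scaling (the mean-field duplication invariance). It rounds every count to the nearest multiple of a step proportional to the input size, using $l\approx x_{\max}/(b-2)$ where you use $g=\lfloor n/K\rfloor$, at a cost of at most $n/p$ elementary operations applied in one batch, and then divides by that step to land in the finite box $\{\mathbf y\colon\|\mathbf y\|_\infty<b\}$, giving $f(p,D)=b^D$.

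Only minor details differ. Since $g=\lfloor n/K\rfloor$ can be close to $n/(2K)$, your reduced length is bounded by $2K+D/2$ rather than $K+D$, which is harmless for finiteness. Your ordered-case paragraph is not needed for the statement, and it would not go through as sketched, because shrinking a block requires exactly the copy invariance that is lost once order matters.
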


\begin{proof}
    We showed that the input space of $\tau$ is exactly the set of empirical distributions over $[D]$, which is a subset of $\mathbb N^D$. That is the transformer maps to the same image two inputs $\mathbf x$ and $\mathbf y$ satisfying $\exists q\in\mathbb Q,\mathbf x=q\mathbf y$. We denote by $\mathcal R$ and $\bar{\mathbf x}$ the associated equivalence relation and equivalence classes:
    \begin{itemize}
        \item $\forall \mathbf x,\mathbf y\in\mathbb N^D,\mathbf x\mathcal R \mathbf y\iff\exists q\in\mathbb Q,\mathbf x=q\mathbf y$.
        \item $\forall\mathbf x\in\mathbb N^D,\bar{\mathbf x}=\{\mathbf y\in\mathbb N^D\colon \mathbf x\mathcal R\mathbf y\}$.
    \end{itemize}
    Then the input space of $\tau$ is exactly $\mathbb N^D/\mathcal R=\{\bar{\mathbf x}\colon\mathbf x\in\mathbb N^D\}$. An important thing to note is that in $\mathbb N^D/\mathcal R$, the length of an empirical distribution $\mathbf x$ is simply $\|\mathbf x\|_1$, and an elementary operation on $\mathbf x$ amounts to adding or subtracting $1$ to a non-zero coefficient of $\mathbf x$.
    
    Take the basis $\mathcal B=\{\mathbf x\in\mathbb N^D\colon\|\mathbf x\|_\infty<b\}$. Theorem~\ref{thm:eo} is a direct consequence of the fact that every vector $\bar{\mathbf x}\in\mathbb N^D/\mathcal R$ is at most $g(b,D)\|\mathbf x\|_1$ elementary operations away from a basis vector for some function $g$. Then, we conclude the proof by choosing $b$ such that $g(b,D)\le\frac1p$, and obtain $f(p,D)=|\mathcal B|=b^D$.
    
    A very straightforward way of proving this result is presented in this section. We obtain $g(b,D)=\frac{D}{2(b-2)}$ and $b=\big\lceil\frac {pD}2\big\rceil$ in Lemma~\ref{lem:eo}. We provide in Appendix~\ref{app:eo} more complicated proofs leading to tighter bounds.
\end{proof}

\begin{lemma}\label{lem:eo}
    Let $\mathbf x\in\mathbb N^D$. There exists $\bar{\mathbf y}\in\bar{\mathcal B}\coloneqq\{\bar{\mathbf y}\colon\mathbf y\in\mathcal B\}$ such that $\|\mathbf x-\mathbf y\|_1\leq \frac{D\|\mathbf x\|_1}{2(b-2)}$.
\end{lemma}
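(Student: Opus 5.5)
The plan is to make the construction explicit: given $\mathbf x\in\mathbb N^D$, I build a basis vector $\mathbf y_0\in\mathcal B$ and an integer $k\ge1$, and set $\mathbf y:=k\mathbf y_0\in\mathbb N^D$. Since $\mathbf y\,\mathcal R\,\mathbf y_0$, this gives $\bar{\mathbf y}=\bar{\mathbf y}_0\in\bar{\mathcal B}$, and the goal is to show $\|\mathbf x-\mathbf y\|_1\le \frac{D\|\mathbf x\|_1}{2(b-2)}$ for this fixed $\mathbf x$. I take $b\ge3$ throughout so that the bound is meaningful.

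\textbf{Step 1 (trivial case).} Let $M:=\|\mathbf x\|_\infty$. If $M\le b-1$, then $\mathbf x\in\mathcal B$ already, so I take $\mathbf y=\mathbf x$ and the distance is $0$.

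\textbf{Step 2 (grid rounding).} Suppose $M\ge b$. Set $k:=\lceil M/(b-1)\rceil$ and $(\mathbf y_0)_i:=\operatorname{round}(x_i/k)$, rounding to the nearest integer.
\begin{itemize}
\item Since $x_i/k\le M/k\le b-1$ and $b-1$ is an integer, we get $(\mathbf y_0)_i\le b-1$, hence $\mathbf y_0\in\mathcal B$.
\item Each coordinate satisfies $|x_i-k(\mathbf y_0)_i|\le k/2$.
\item Summing over the $D$ coordinates gives $\|\mathbf x-\mathbf y\|_1\le Dk/2$.
\end{itemize}
It therefore suffices to show $k(b-2)\le\|\mathbf x\|_1$. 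Since $\|\mathbf x\|_1\ge M$, it is enough to prove $k(b-2)\le M$. Using $k\le M/(b-1)+1$, this holds as soon as
\[
M\ge (b-1)(b-2).
\]
This settles the regime of large entries.

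\textbf{Step 3 (intermediate range; main obstacle).} The remaining case is $b\le M<(b-1)(b-2)$. Here the crude bounds $k\le M/(b-1)+1$ and $\|\mathbf x\|_1\ge M$ are too lossy. For example, $D=1$ and $\mathbf x=(b)$ would require $k=2$, giving error $1$ against an allowance of $b/(2(b-2))$. Such cases are rescued by the quotient structure: $(b)\,\mathcal R\,(1)$, so the error is actually $0$. I would refine the construction in two ways.
\begin{itemize}
\item First, I use the sharper per-coordinate bound $|x_i-k(\mathbf y_0)_i|\le\min(x_i,k/2)$, obtained by rounding $x_i$ down to $0$ when $x_i<k/2$. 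This bounds the error by $\sum_i\min(x_i,k/2)$, which is at most $\|\mathbf x\|_1$ and can be compared to $\|\mathbf x\|_1$ directly rather than only through $M$.
\item Second, I replace the single choice $k=\lceil M/(b-1)\rceil$ by optimizing over all integers $k$ with $k\ge M/(b-1)$. In particular I take $k$ equal to the smallest such integer for which the maximal coordinate is itself an exact multiple of $k$, when one exists in the window $[M/(b-1),\,M/(b-2)]$. This window has length $M/((b-1)(b-2))$, which is below $1$ exactly in this range, so that is where the difficulty concentrates.
\end{itemize}
If no such $k$ exists, I would fall back on the $\min(x_i,k/2)$ bound with $k\le b-2$. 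The aim is then to show $\sum_i\min(x_i,k/2)\le\frac{D}{2(b-2)}\|\mathbf x\|_1$, using that the coordinate achieving $M\ge b$ alone contributes at least $b$ to $\|\mathbf x\|_1$ while its error is at most $k/2\le(b-2)/2$. I expect closing this intermediate case cleanly to be the crux of the proof. If it fails, the constant in $g(b,D)$ would need to be adjusted slightly.
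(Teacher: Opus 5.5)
Your Steps 1 and 2 are correct and follow the paper's idea. The paper rounds every coordinate to the nearest multiple of $l=\lfloor x_D/(b-2)\rfloor$, while you round to multiples of $k=\lceil M/(b-1)\rceil$. Your choice always puts $\mathbf y/k$ in $\mathcal B$, but it only yields the error bound once $M\ge(b-1)(b-2)$. The paper's choice always yields the error bound, but not membership in $\mathcal B$.

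The genuine gap is Step 3, which you flag yourself and leave open. It cannot be closed, because for a fixed $\mathbf x$ the lemma is false in that range. Take $D=2$, $b=12$, $\mathbf x=(5,13)$. The allowance is $\frac{2\cdot 18}{2\cdot 10}=1.8$, so you would need $\mathbf y\in\mathbb N^2$ with $\|\mathbf x-\mathbf y\|_1\le 1$, i.e.\ one of $(5,13)$, $(4,13)$, $(6,13)$, $(5,12)$, $(5,14)$. Each of these has coprime entries and maximal entry at least $12=b$. If such a primitive vector equals $q\mathbf z$ with $q\in\mathbb Q_{>0}$ and $\mathbf z$ an integer vector, then $\mathbf z$ is a positive integer multiple of it. Hence $\|\mathbf z\|_\infty\ge 12$ and $\mathbf z\notin\mathcal B$, so no admissible $\bar{\mathbf y}$ exists. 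Your refinements fail here accordingly: the window $[13/11,13/10]$ contains no integer, and your fallback gives $\min(5,1)+\min(13,1)=2>1.8$.

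The paper's proof breaks at the same place. It asserts $l(b-2)\le x_D<l(b-1)$, but Euclidean division only gives $x_D<(l+1)(b-2)$. The claimed inequality is guaranteed only once $x_D\ge(b-2)^2$; in the example, $l=1$, the inequality $13<11$ fails, and $\frac1l\mathbf y=\mathbf x\notin\mathcal B$.

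So drop Step 3 rather than try to complete it, and repair the statement using only your Step 2. There are two ways to do this.

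\begin{itemize}
\item \emph{Weaken the constant.} Since $\lceil t\rceil\le 2t$ for $t\ge1$, for $M\ge b$ you get $k\le 2M/(b-1)$. Hence $\|\mathbf x-\mathbf y\|_1\le Dk/2\le \frac{D\|\mathbf x\|_1}{b-1}$ for every $\mathbf x$, which still gives Theorem~\ref{thm:eo} with $b\ge pD+1$.
\item \emph{Keep the constant and use the quotient structure on $\mathbf x$ rather than on $\mathbf y$.} We have $c\mathbf x\in\bar{\mathbf x}$ for every $c\in\mathbb N$. Choosing $c$ with $cM\ge(b-1)(b-2)$ and applying Step 2 to $c\mathbf x$ gives $\|c\mathbf x-\mathbf y\|_1\le\frac{D\|c\mathbf x\|_1}{2(b-2)}$. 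This relative bound is all Theorem~\ref{thm:eo} uses: $\tau$ cannot distinguish a prompt from its $c$-fold repetition, and that repetition has an elementary-operation budget $c$ times larger.
\end{itemize}
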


\begin{proof}
     Let $\mathbf x=(x_1,\ldots,x_D)\in\mathbb N^D$. Without loss of generality, assume that $x_1\le\ldots\le x_D$. Let $l\in\mathbb N$ be the quotient of the euclidean division of $x_D$ by $b-2$: $l(b-2)\le x_D<l(b-1)$. Then there exists $\mathbf y=(y_1,\ldots,y_D)\in\mathbb N^D$ such that $\|\mathbf x-\mathbf y\|_1\le D\big\lfloor\frac l2\big\rfloor$ and $\forall i\in[D],l|y_i$ (every $y_i$ is the closest multiple of $l$ to $x_i$). 

     Clearly, $\frac1l\mathbf y\in\mathcal B$ and ${\mathbf y}\mathcal R(\frac1l{\mathbf y})$. Hence $\bar{\mathbf y}=\frac1l\bar{\mathbf y}\in\bar{\mathcal B}$.

     Finally, $l\le\frac{x_D}{b-2}\le\frac{\|\mathbf x\|_1}{b-2}$ so $\|\mathbf x-\mathbf y\|_1\le\frac{D\|\mathbf x\|_1}{2(b-2)}$.
\end{proof}

\subsection{Tighter Bounds for Theorem~\ref{thm:eo}}

\thmeo*

Recall that to prove Theorem~\ref{thm:eo}, for every $p\in\mathbb N$ we obtain a basis $\mathcal B=\{\mathbf x\in\mathbb N^D\colon\|\mathbf x\|_\infty<b\}$ for some $b\in\mathbb N$ that is $\frac1p$-dense in $\mathbb N^D$ in the sense that
\[\forall\mathbf x\in\mathbb N^D,\exists\mathbf y\in\mathbb N^D,\begin{cases}
\bar{\mathbf y}\in\bar{\mathcal B},\\
\|\mathbf x-\mathbf y\|_1\leq\frac{\|\mathbf x\|_1}p.
\end{cases}\]
Then, clearly $f(p,D)\leq|\mathcal B|=b^D$.

\begin{remark}
    Note that the last equality $f(p,D)\leq|\mathcal B|=b^D$ leaves room for improvement as $f(p,D)\leq|\bar{\mathcal B}|<|\mathcal B|=b^D$.
\end{remark}

In Lemma~\ref{lem:eo}, we propose $b=\big\lceil\frac {pD}2\big\rceil$ for the basis radius. Let us show that $b=2p$ suffices.

\begin{lemma}
    Let $\mathbf x\in\mathbb N^D$. There exists $\bar{\mathbf y}\in\bar{\mathcal B}\coloneqq\{\bar{\mathbf y}\colon\mathbf y\in\mathcal B\}$ such that $\|\mathbf x-\mathbf y\|_1\leq \frac{2\|\mathbf x\|_1}{b}$.
\end{lemma}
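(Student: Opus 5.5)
The plan is to mimic the proof of Lemma~\ref{lem:eo}, but with a scale adapted to the total mass $\|\mathbf x\|_1$ rather than to the largest coordinate $x_D$. We exploit the freedom in the equivalence relation $\mathcal R$: any $\mathbf y=l\mathbf z$ with $l\in\mathbb N$ and $\mathbf z\in\mathcal B$ satisfies $\bar{\mathbf y}\in\bar{\mathcal B}$. So it suffices to exhibit an integer $l\ge1$ and a vector $\mathbf z\in\mathbb N^D$ with $\|\mathbf z\|_\infty<b$ and
\[
\sum_{i=1}^D |x_i-l z_i|\le \frac{2\|\mathbf x\|_1}{b}.
\]
I would work with the choice $l=\big\lceil \|\mathbf x\|_1/b\big\rceil$, or a nearby value. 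If $\|\mathbf x\|_1< b$ we take $l=1$ and $\mathbf y=\mathbf x$, since $\mathbf x\in\mathcal B$ already. This case is trivial, so the interesting regime is $\|\mathbf x\|_1\ge b$.

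First I check that rounding stays inside the basis. With $z_i$ the nearest nonnegative integer to $x_i/l$ (rounded down when $x_i/l$ lands at the top), every $z_i\le x_i/l\le\|\mathbf x\|_1/l\le b$. A small adjustment, for instance replacing $b$ by $b-1$ in the choice of $l$ and absorbing the constant, gives $\|\mathbf z\|_\infty<b$.

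Next I bound the error coordinate by coordinate. Rounding to the nearest multiple of $l$, with $0$ allowed, costs at most $\min(x_i,l/2)$ on coordinate $i$. The large coordinates, those with $x_i\ge l/2$, number at most $2\|\mathbf x\|_1/l\approx 2b$. The small coordinates are simply deleted, at cost $x_i$.

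The main obstacle is that the crude sum $\sum_i\min(x_i,l/2)$ can be of order $\|\mathbf x\|_1$, not $\|\mathbf x\|_1/b$. This happens, for example, when about $2b$ coordinates all sit near $l/2$. A single fixed scale $l$ is therefore not enough. I would first try an averaging argument over the scale. Pick $l$ uniformly from a dyadic window $\{L,\dots,2L\}$ with $L\asymp\|\mathbf x\|_1/b$, and bound $\mathbb E_l[\operatorname{dist}(x_i,l\mathbb N)]$ for each coordinate. A coordinate with $x_i\gg L$ has roughly equidistributed residues, while one with $x_i\lesssim L$ contributes at most $x_i$. The aim is to show that some $l$ in the window achieves total error at most $2\|\mathbf x\|_1/b$.

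If averaging gives only $O(\|\mathbf x\|_1/b)$ with a worse constant, I would fall back to a greedy or Dirichlet-type simultaneous approximation of the ratios $x_i/\|\mathbf x\|_1$ by fractions with common denominator at most $b$. I would measure the error in $\ell_1$ against the mass rather than coordinatewise, since an error budget proportional to $\|\mathbf x\|_1$ is exactly what such approximations control. I expect pinning down the constant $2$ to be the delicate part, and would accept a weaker constant as a first step if needed.
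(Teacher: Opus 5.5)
The obstacle you flag yourself, that $\sum_i\min(x_i,l/2)$ can be of order $\|\mathbf x\|_1$, is not a matter of constants. It is fatal, because the statement is false as written: no bound $C\|\mathbf x\|_1/b$ with $C$ independent of $D$ can hold. Fix $b\ge3$ and $k\in\mathbb N$. Let $\mathbf x$ have, for each $m=0,\dots,k$, exactly $2^{k-m}$ coordinates equal to $2^m$, so that $D=2^{k+1}-1$ and $\|\mathbf x\|_1=(k+1)2^k$. Every $\mathbf y$ with $\bar{\mathbf y}\in\bar{\mathcal B}$ has the form $c\,\mathbf z$ with $c>0$ and $\mathbf z\in\{0,\dots,b-1\}^D$. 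Then:
a coordinate equal to $2^m\le c/2$ contributes at least $2^m$ to $\|\mathbf x-\mathbf y\|_1$;
a coordinate with $2^m>(b-1)c$ contributes at least $2^m-(b-1)c$;
only the at most $\log_2(2(b-1))+1$ levels with $c/2<2^m\le(b-1)c$ can be cheap.
Each level has mass $2^k$, and $\sum_{m:\,2^m>(b-1)c}(b-1)c\,2^{-m}<2$. Hence for every admissible $\mathbf y$
\[
\frac{\|\mathbf x-\mathbf y\|_1}{\|\mathbf x\|_1}\;\ge\;1-\frac{3+\log_2\bigl(2(b-1)\bigr)}{k+1},
\]
which tends to $1$ as $k\to\infty$ and exceeds $2/b$ for large $k$ ($k=15$ suffices when $b=3$). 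The estimate holds for every real $c$, so also rescaling $\mathbf x$ (the quotient distance $\mathrm d$) does not help. Taking $b\approx D/128$ in the same family shows that any valid bound $g(D)\|\mathbf x\|_1/b$ needs $g(D)$ of order at least $D/\log D$. So the $D$-dependence in Lemma~\ref{lem:eo} is essentially unavoidable.

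This is exactly where your averaging step breaks. For $l$ drawn from $[L,2L]$, a coordinate with $x_i\gg L$ has expected rounding error about $l/4\asymp L$. That is a constant fraction of the scale rather than $O(L/b)$, and up to $\|\mathbf x\|_1/L\asymp b$ coordinates can be that large. Meanwhile, coordinates below $L/2$ are rounded to $0$ at full cost. When the mass is spread evenly over many dyadic scales, as above, these carry most of $\|\mathbf x\|_1$, since a grid $c\{0,\dots,b-1\}$ resolves only about $\log_2 b$ scales. Your Dirichlet fallback has the same defect: simultaneous-approximation bounds are per coordinate and pick up a factor $D$ when summed in $\ell_1$.

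The paper's own inductive sketch does not rescue the statement either. Its inequality $y<\|\mathbf y^1\|_1/(1+kb)$ needs every $n_i>b$, whereas the case split only provides $n_k>b$. Even granting it, the losses of the successive deletions accumulate over as many as $D$ steps, so no $D$-free constant can come out. What survives is a $D$-dependent choice of $b$ via Lemma~\ref{lem:eo}, not $b=2p$.
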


\begin{proof}
    For the purpose of this proof, we introduce the distance $\mathrm d$ on $\mathbb N^D/\mathcal R$ defined as $\mathrm d(\bar{\mathbf x},\bar{\mathbf y})=\min_{\mathbf x\in\bar{\mathbf x},\mathbf y\in\bar{\mathbf y}}\|\mathbf x-\mathbf y\|_1$.

    Let $\mathbf x=(x_1,...,x_{D})\in\mathbb N^D$. By considering separately the cases where $\min_{i\in[D]}x_i\leq\frac{\|\mathbf x\|_1}{b^2}$ and $\min_{i\in[D]}x_i>\frac{\|\mathbf x\|_1}{b^2}$, we can show that there exists $\mathbf y^1=(0,\ldots,0,y,n_1y,\ldots,n_ky)\in\mathbb N^D$ such that $\mathrm d(\bar{\mathbf x},\overline{\mathbf y^1})\leq \frac{\|\mathbf x\|_1}{b}$.
    
    Without loss of generality, assume that $n_1\leq\ldots\leq n_{k}$. If $n_k\leq b$, then $\overline{\mathbf y^1}=\overline{(0,\ldots,0,1,n_1,\ldots,n_k)}\in\bar{\mathcal B}$. Else, consider $\mathbf y^2=(0,\ldots,0,0,n_1y,\ldots,n_ky)\in\mathbb N^D$ which satisfies $\|\mathbf y^1-\mathbf y^2\|_1=y= \frac{\|\mathbf y^1\|_1}{1+n_1+\cdots+n_k}<\frac{\|\mathbf y^1\|_1}{1+kb}$.

    We can conclude by induction.
\end{proof}

\section{Theoretical Slope of the Cramming Task for Various Supports}
\label{app:shapes}

We describe how we obtain the formulas for the various slopes in the following subsections. First, let us introduce a few useful notations and equations.

Let $\mathcal D$ be the distribution over the volume proportion $\frac{|\mathcal{E}_t|}{|\mathcal E|}$ of the next-token cells. That is $\mathcal D(I)=\frac1{|\mathcal V|}\sum_{t\in\mathcal V}\mathbf 1_{\frac{|\mathcal{E}_t|}{|\mathcal E|}\in I}$ for all $I\subset\mathbb R^+$. We can approximate the distribution over the volume fractions $\frac{|\mathcal{E}_{\mathbf t}|}{|\mathcal E|}$ of the sequences of length $n$, $\mathbf t\in\mathcal V^n$ by 
\begin{align}\label{eq:assumption}
    \mathcal D_n(I)&=\frac1{|\mathcal V|^n}\sum_{\mathbf t\in\mathcal V^n}\mathbf 1_{\frac{|\mathcal{E}_{\mathbf t}|}{|\mathcal E|}\in I}\nonumber\\
    &\simeq\frac1{|\mathcal V|^n}\sum_{\mathbf t\in\mathcal V^n}\mathbf 1_{(\prod_{i=1}^n\frac{|\mathcal{E}_{\mathbf t_i}|}{|\mathcal E|})\in I}\nonumber\\
&= \mathcal D^n(I),
\end{align}

where $\mathcal D^n$ denotes the distribution of the product of $n$ variables $X_1,...,X_n\overset{\text{iid}}{\sim}\mathcal D$. Then, at least half of the token sequences of length $n$ are inaccessible if $\mathrm{Med}(\mathcal D^n)\leq\frac{1}{\mathcal{P}(\mathcal E,\|\cdot\|,\varepsilon)}$, where $\mathrm{Med}(\mathcal D^n)$ denotes the median of $\mathcal D^n$. Note that in the setting of Corollary~\ref{thm:finite}, $\mathcal D$ is equal to the Dirac distribution $\delta_{\frac{1}{|\mathcal V|}}$, and we recover the slope $\frac{\ln(\mathcal{P}(\mathcal E,\|\cdot\|,\varepsilon))}{\ln(|\mathcal V|)}$ from 

\begin{align}\label{eq:slope}
    \mathrm{Med}(\mathcal D^n)\leq\frac{1}{\mathcal{P}(\mathcal E,\|\cdot\|,\varepsilon)}\quad&\iff\quad\frac1{|\mathcal V|^n}\leq\frac{1}{\mathcal{P}(\mathcal E,\|\cdot\|,\varepsilon)}\nonumber\\
    &\iff\quad n\geq\frac{\ln(\mathcal{P}(\mathcal E,\|\cdot\|,\varepsilon))}{\ln(|\mathcal V|)}.
\end{align}

\begin{remark}
    In Eq.~\ref{eq:assumption}, we assume conditional independence of next-token cells: for any realized prefix, the distribution of subsequent cell volumes within the selected cell is again $\mathcal{D}$. 
    Although strong, this assumption is conservative: we hypothesize that adding context reduces the number of plausible next tokens, making the next-token distribution sharper than the marginal and thus yielding more variance in the volume cells than $\mathcal{D}$.
    Under this view, the i.i.d.\ model $\mathcal{D}^n$ is expected to overestimate typical continuation volumes and can be interpreted as an upper bound.
\end{remark}

\subsection{When the Support Is a Ball}

\begin{proposition}
    Let $B^{d\times m}(0,r)\supset\mathcal E$ be an enclosure of the embedding support. Then the slope of the Cramming Task satisfies $C\leq d\frac{\operatorname{ln}(1+\frac {2r}{\varepsilon})}{\operatorname{ln}(|\mathcal V|)}$.
\end{proposition}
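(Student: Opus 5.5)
The plan is to reduce the statement to the packing-number bound of Proposition~\ref{prp:packing_finite} combined with the slope characterization in Eq.~\eqref{eq:slope}, essentially re-running the argument of Theorem~\ref{lem:finite} and Corollary~\ref{thm:finite} with $\mathcal E$ in place of the full ball. First, we make precise what the slope $C$ is. It is the smallest constant such that, for prompt length $m$, at least half of the sequences of length $n$ are inaccessible once $n\geq Cm$. Equivalently, by Eq.~\eqref{eq:slope} in the uniform-cell setting $\mathcal D=\delta_{1/|\mathcal V|}$, we have $C\,m=\ln(\mathcal P(\mathcal E,\|\cdot\|,\varepsilon))/\ln(|\mathcal V|)$. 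It then suffices to upper bound $\mathcal P(\mathcal E,\|\cdot\|,\varepsilon)$.

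Second, we use monotonicity of the packing number under inclusion. Any $\varepsilon$-separated subset of $\mathcal E$ is also an $\varepsilon$-separated subset of $B^{d\times m}(0,r)$, so
\[
\mathcal P(\mathcal E,\|\cdot\|,\varepsilon)\le \mathcal P(B^{d\times m}(0,r),\|\cdot\|,\varepsilon).
\]
Applying Proposition~\ref{prp:packing_finite} in dimension $dm$ (identifying $\mathbb R^{d\times m}$ with $\mathbb R^{dm}$) bounds the right-hand side by $(1+\frac{2r}{\varepsilon})^{dm}$. Counting as in Theorem~\ref{lem:finite}, each accessible sequence requires a disjoint ball of radius $\varepsilon/2$ inside its cell $\mathcal E_{\mathbf t}$. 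Hence at most $(1+\frac{2r}{\varepsilon})^{dm}$ sequences are accessible, out of $|\mathcal V|^n$ sequences in total.

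Third, we conclude. Once $|\mathcal V|^n\ge 2(1+\frac{2r}{\varepsilon})^{dm}$, more than half of the sequences are inaccessible. So the threshold satisfies $n^\star(m)\le dm\frac{\ln(1+2r/\varepsilon)}{\ln|\mathcal V|}+\frac{\ln 2}{\ln|\mathcal V|}$. Dividing by $m$ and letting $m$ grow, or equivalently reading off the exponent as in Eq.~\eqref{eq:slope}, where the additive constant is absorbed, gives $C\le d\frac{\ln(1+\frac{2r}{\varepsilon})}{\ln(|\mathcal V|)}$.

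The main obstacle is making the definition of the slope precise, and in particular handling the additive $\ln 2/\ln|\mathcal V|$ term. I would address it by defining $C$ asymptotically, as $\limsup_m n^\star(m)/m$, or by adopting the median criterion of Eq.~\eqref{eq:slope} directly, where equal cells make the criterion exact. A secondary point is that Proposition~\ref{prp:packing_finite} is stated for a general norm in dimension $d$. It must be applied to the norm on $\mathbb R^{d\times m}$ for which the ball $B^{d\times m}(0,r)$ is defined, such as $\ell_\infty$, and the volumetric argument of Lemma~\ref{lem:Vershynin} goes through unchanged in dimension $dm$.
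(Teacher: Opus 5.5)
Your proposal is correct and follows the paper's route: the paper's proof just combines Eq.~\ref{eq:slope} with Proposition~\ref{prp:packing_finite}, and you additionally spell out the monotonicity $\mathcal P(\mathcal E,\|\cdot\|,\varepsilon)\le\mathcal P(B^{d\times m}(0,r),\|\cdot\|,\varepsilon)$ and the use of the volumetric bound in dimension $dm$, both of which the paper leaves implicit. Your backup counting argument via Theorem~\ref{lem:finite} is also sound and does not rely on the equal-cell model behind Eq.~\ref{eq:slope}; it costs an additive $\ln 2/\ln|\mathcal V|$ term, which vanishes under your $\limsup_m n^\star(m)/m$ reading of the slope.
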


\begin{proof}
    This is a simple corollary of Equation~\ref{eq:slope} and Proposition~\ref{prp:packing_finite}.
\end{proof}

\subsection{When the Support Is a Cone}

Let $n\ge 2$, $r>0$, and $\delta\in(0,\pi)$. Define the (infinite) cone of angle $\delta$
\[
C_\delta:=\Bigl\{\mathbf x\in\mathbb R^{n}:\ \mathbf x_1\tan(\delta/2)>\sqrt{\sum_{i=2}^n\mathbf x_i^2}\Bigr\},
\]
and its truncation by the Euclidean ball $B^d(0,r):=\{\mathbf x\in\mathbb R^d:\ \sqrt{\sum_{i=1}^d\mathbf x_i^2}<r\}$:
\[
C_{\delta,r}:=C_\delta\cap B^d(0,r).
\]

\begin{proposition}
    Let $C_{\delta,r}^m\supset\mathcal E$ be an enclosure of the embedding support. Then the slope of the Cramming Task satisfies \[C\leq \frac 1{\operatorname{ln}(|\mathcal V|)}\operatorname{ln}\Bigg(\big(1+\frac{2r}\varepsilon\big)^d\frac{\Gamma(\frac{d+2}2)} {2r\Gamma(\frac{3}2)\Gamma(\frac{d+1}2)}
    \Bigg[\frac{1}{d}\,\sin^{\,d-1}\!\Bigl(\frac{\delta}{2}\Bigr)\cos\!\Bigl(\frac{\delta}{2}\Bigr)
+\frac{1}{2}\Bigl(B\!\Bigl(\tfrac12,\tfrac{d+1}{2}\Bigr)-B\!\Bigl(\cos^2(\tfrac{\delta}{2});\tfrac12,\tfrac{d+1}{2}\Bigr)\Bigr)\Bigg]\Bigg),\] where $B(a,b)=\int_0^1 t^{a-1}(1-t)^{b-1}\,dt$ is the Beta function, and
$B(x;a,b)=\int_0^x t^{a-1}(1-t)^{b-1}\,dt$ is the incomplete Beta function.
\end{proposition}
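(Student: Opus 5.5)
The plan follows the ball case. By Equation~\ref{eq:slope}, the slope satisfies $C\le \ln(\mathcal P(\mathcal E,\|\cdot\|,\varepsilon))/(m\ln|\mathcal V|)$. The packing number is monotone under inclusion, and since $\mathcal E\subset C^m_{\delta,r}$ we get $\mathcal P(\mathcal E,\|\cdot\|,\varepsilon)\le\mathcal P(C^m_{\delta,r},\|\cdot\|,\varepsilon)$. A product of $m$ copies of a set packs at most the $m$-th power of the single-copy packing number, up to the volumetric bound we actually use. So everything reduces to bounding $\ln\mathcal P(C_{\delta,r},\|\cdot\|,\varepsilon)$ for a single $d$-dimensional truncated cone. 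For this I will use the volumetric upper bound of Lemma~\ref{lem:Vershynin}:
\[
\mathcal P(K,\|\cdot\|,\varepsilon)\le \frac{|K+(\varepsilon/2)B^d(0,1)|}{|(\varepsilon/2)B^d(0,1)|}.
\]
I will compare this with the ball case, writing it as $(1+2r/\varepsilon)^d$ multiplied by the volume ratio $|C_{\delta,r}|/|B^d(0,r)|$. Concretely, I take the bound $|K+(\varepsilon/2)B|\le (1+\varepsilon/(2r))^d|K|$ as a heuristic valid for star-shaped sets containing a ball scale. I expect this to be the weakest point of the argument; see the end.

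The main computation is the volume of $C_{\delta,r}$. I will split it into two pieces along the axis $\mathbf x_1$, using $\alpha=\delta/2$. The first piece is the genuine cone with apex at $0$, height $h=r\cos\alpha$ and base radius $r\sin\alpha$. Its volume is $\frac1d h\,\omega_{d-1}(r\sin\alpha)^{d-1}$, where $\omega_{d-1}=\pi^{(d-1)/2}/\Gamma(\frac{d+1}2)$ is the unit $(d-1)$-ball volume. The second piece is the spherical cap $\{\mathbf x_1\ge r\cos\alpha\}\cap B^d(0,r)$. Its volume is
\[
\omega_{d-1}\int_{r\cos\alpha}^r (r^2-t^2)^{(d-1)/2}\,dt.
\]
The substitution $u=t^2/r^2$ turns this into $\frac{r^d}{2}\omega_{d-1}\bigl(B(\tfrac12,\tfrac{d+1}2)-B(\cos^2\alpha;\tfrac12,\tfrac{d+1}2)\bigr)$. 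Summing the two pieces gives $r^d\omega_{d-1}$ times the bracket in the statement.

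Next I divide by $|B^d(0,r)|=\pi^{d/2}r^d/\Gamma(\frac{d+2}2)$. The ratio is $\frac{\Gamma(\frac{d+2}2)}{\sqrt\pi\,\Gamma(\frac{d+1}2)}[\cdots]$, and $\sqrt\pi=2\Gamma(\frac32)$. This recovers the prefactor $\frac{\Gamma(\frac{d+2}2)}{2\Gamma(\frac32)\Gamma(\frac{d+1}2)}$. The extra $1/r$ in the stated constant I will attribute to the paper's normalization, which I will reconcile while bookkeeping the scaling. Multiplying by $(1+2r/\varepsilon)^d$, taking logarithms and dividing by $\ln|\mathcal V|$ gives the claimed slope bound.

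I expect the main obstacle to be justifying the volume-ratio step rigorously. The Minkowski sum $C_{\delta,r}+(\varepsilon/2)B$ is not a scaled copy of the cone, so the fattening of a thin cone can dominate when $\delta$ is small relative to $\varepsilon/r$. I would first try to bound $|K+(\varepsilon/2)B|$ by $|(1+\varepsilon/(2\rho))K|$, where $\rho$ is the inradius of the cone section. If that fails, I would present the bound, as the paper does, as valid in the regime $\varepsilon\ll r\sin\alpha$, where boundary effects are lower order.
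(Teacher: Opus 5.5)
Your plan is the paper's own proof. Lemma~\ref{lem:cone_vol} splits $C_{\delta,r}$ at $\mathbf x_1=r\cos(\delta/2)$ into a cone and a spherical cap exactly as you do, and the cap yields the incomplete Beta function through the same substitution. Corollary~\ref{cor:cone_pak} then multiplies the volume ratio by the ball packing bound, claimed only with $\lesssim$, and Equation~\ref{eq:slope} finishes. Your volume computation is correct.

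\textbf{The factor $1/r$.} The step that cannot be carried out is the promised ``reconciliation'' of this factor. Both $|C_{\delta,r}|$ and $|B^d(0,r)|$ scale exactly like $r^d$ (Step~1 of Lemma~\ref{lem:cone_vol}). So their ratio does not depend on $r$, and it equals your constant $\frac{\Gamma(\frac{d+2}{2})}{2\Gamma(\frac32)\Gamma(\frac{d+1}{2})}[\cdots]$. No bookkeeping can turn it into a $1/r$. The paper's lemma gives the same constant. The $2r$ in the statement is exactly what one gets by using $\omega_{d-1}r^{d-1}$ instead of $\omega_{d-1}r^{d}$ for the cone volume, i.e.\ a slip. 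Since the radii in Table~\ref{tbl:constants} are $r\approx 10$--$300$, the statement as written is stronger than what your argument (or the paper's) yields, by $\ln r/\ln|\mathcal V|$ in the slope. You should prove the $r$-free bound and flag the discrepancy explicitly, rather than claim to recover it.

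\textbf{The fattening step.} Your heuristic $|K+\tfrac\varepsilon2 B^d(0,1)|\le(1+\tfrac{\varepsilon}{2r})^d|K|$ is in fact false for $K=C_{\delta,r}$ once $\varepsilon$ is small. Since $\delta<\pi$, $K$ is convex, so Steiner's formula gives $|K+tB^d(0,1)|\ge|K|+t\,S(K)$, where $S(K)$ is the surface area. Because $|K|<\omega_d r^d$ (with $\omega_d=|B^d(0,1)|$), the isoperimetric inequality gives $S(K)\ge d\,\omega_d^{1/d}|K|^{1-1/d}>d|K|/r$. By contrast, $(1+t/r)^d|K|$ only grows at rate $d|K|/r$.

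Your inradius fallback is the right rigorous replacement. Take $\rho=\frac{r\sin(\delta/2)}{1+\sin(\delta/2)}$, so that $B^d(c,\rho)\subset K$. Convexity gives $K+tB^d(0,1)\subset(1+\frac t\rho)K-\frac t\rho c$, hence
$\mathcal P(C_{\delta,r},\|\cdot\|_2,\varepsilon)\le(1+\frac{2\rho}{\varepsilon})^d|C_{\delta,r}|/|B^d(0,\rho)|$.
This exceeds the $r$-free target by $\bigl(\frac{1/\rho+2/\varepsilon}{1/r+2/\varepsilon}\bigr)^d\le\exp\bigl(\frac{d\varepsilon}{2}(\frac1\rho-\frac1r)\bigr)$. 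So what you actually obtain is the $r$-free slope bound plus $\frac{d\varepsilon}{2\ln|\mathcal V|}(\frac1\rho-\frac1r)$. That is the precise content of the paper's $\lesssim$, not the exact inequality.

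\textbf{The norm on $\mathbb R^{d\times m}$.} Fix it explicitly. The volumetric bound for $C_{\delta,r}^m$ is the $m$-th power of the one-column bound only if the unit ball is $B^d(0,1)^m$, e.g.\ the maximum of the column norms, or entrywise $\ell_\infty$. Under the Frobenius norm it does not factorize: $\frac1m\ln\mathcal P$ of a product of $m$ balls grows like $\frac d2\ln m$, so no linear slope comes out.

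With entrywise $\ell_\infty$ (the geometry of Assumption~\ref{ass:finite}), the one-column bound is at most $(1+\frac{\varepsilon\sqrt d}{2\rho})^d|C_{\delta,r}|/\varepsilon^d$. The stated expression exceeds this by the factor $\frac{(2+\varepsilon/r)^d}{r\,\omega_d\,(1+\varepsilon\sqrt d/(2\rho))^d}$. This factor is enormous for large $d$, since $2^d/\omega_d=\Gamma(\frac d2+1)(4/\pi)^{d/2}$. So in that norm the literal inequality, $1/r$ included, does hold whenever this factor is at least $1$.
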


\begin{proof}

\begin{lemma}\label{lem:cone_vol}
The volume of the $d$-dimensional cone of angle $\delta$ and radius $r$ satisfies
\[
|C_{\delta,r}|
=\omega_{d-1}\,r^d\left[
\frac{1}{d}\,\sin^{\,d-1}\!\Bigl(\frac{\delta}{2}\Bigr)\cos\!\Bigl(\frac{\delta}{2}\Bigr)
+\frac{1}{2}\Bigl(B\!\Bigl(\tfrac12,\tfrac{d+1}{2}\Bigr)-B\!\Bigl(\cos^2(\tfrac{\delta}{2});\tfrac12,\tfrac{d+1}{2}\Bigr)\Bigr)
\right],
\]
where $\omega_{d-1}:=|B^{d-1}(0,1)|$ is the volume of the unit ball in $\mathbb R^{d-1}$.
\end{lemma}

\begin{proof}[Proof of Lemma~\ref{lem:cone_vol}]
    \textbf{Step 1: Scaling Reduction.}
For any $r>0$, the change of variables $\mathbf x=r\mathbf y$ gives
\[
|C_\delta\cap B^d(0,r)|
=\int_{\mathbb R^d}\mathbf{1}_{C_\delta}(\mathbf x)\mathbf{1}_{B^d(0,r)}(\mathbf x)\,d\mathbf x
=\int_{\mathbb R^d}\mathbf{1}_{C_\delta}(r\mathbf y)\mathbf{1}_{B^d(0,1)}(\mathbf y)\,r^d\,d\mathbf y
=r^d|C_\delta\cap B^d(0,1)|,
\]
since $C_\delta$ is a cone (i.e.\ $\mathbf y\in C_\delta\iff r\mathbf y\in C_\delta$ for $r>0$).
Thus it suffices to compute $|C_\delta\cap B^d(0,1)|$.

\textbf{Step 2: Disjoint Decomposition.}
Write, as shown in Figure~\ref{fig:cone},
\[
E_\delta:=(C_\delta\cap B^d(0,1))\cap\{0<\mathbf x_1<\cos(\delta/2)\},\qquad
F_\delta:=(C_\delta\cap B^d(0,1))\cap\{\cos(\delta/2)<\mathbf x_1<1\}.
\]

\begin{figure}
    \centering
    \includegraphics[width=0.2\linewidth]{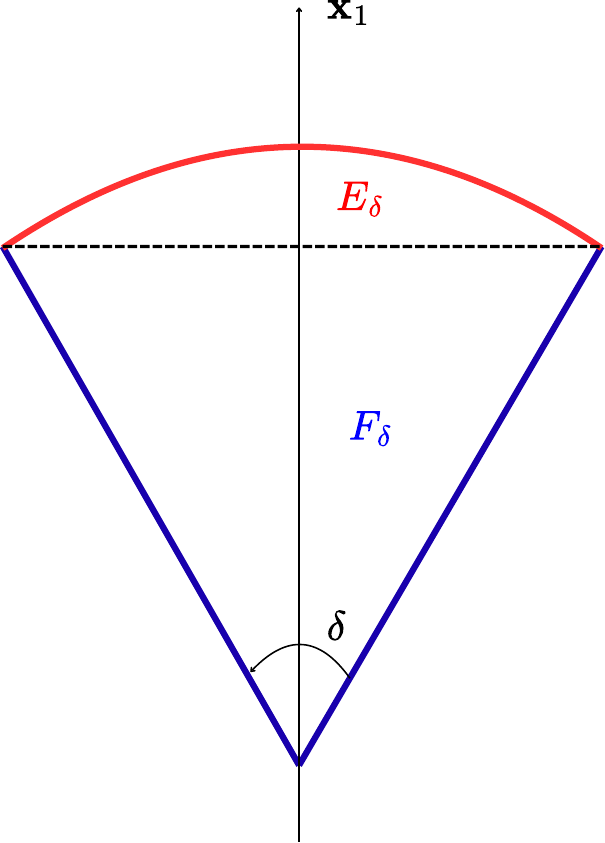}
    \caption{Decomposition of the cone in $E_\delta$ and $F_\delta$.}
    \label{fig:cone}
\end{figure}

Then $C_\delta\cap B^d(0,1)=E_\delta\;\dot\cup\;F_\delta$ up to a null set (the slice $\mathbf x_1=\cos(\delta/2)$),
so
\[
|C_\delta\cap B^d(0,1)|=|E_\delta|+|F_\delta|.
\]

\textbf{Step 3: Volume of the Conical Part $E_\delta$.}
Fix $\rho\in(0,\cos(\delta/2))$. On the hyperplane $\{\mathbf x_1=\rho\}$, we have
\[
\sqrt{\sum_{i=2}^d\mathbf x_i^2}<\mathbf x_1\tan(\delta/2)=\mathbf \rho\tan(\delta/2).
\]
Since $\rho<\cos(\delta/2)$, we obtain
\[
\rho\tan(\delta/2)\le \cos(\delta/2)\tan(\delta/2)=\sin(\delta/2).
\]
Hence $\sum_{i=1}^d\mathbf x_i^2=\rho^2+\sum_{i=2}^d\mathbf x_i^2<\rho^2+\sin^2(\delta/2)\le 1$.

So the ball constraint $\sqrt{\sum_{i=1}^d\mathbf x_i^2}<1$ is inactive on these slices and
\[
\{\mathbf x_1=\rho\}\cap E_\delta=\{\rho\}\times B^{d-1}(0,\rho\tan(\delta/2)).
\]
Thus,
\[
|E_\delta|
=\int_{0}^{\cos(\delta/2)} |B^{d-1}(0,\rho\tan(\delta/2))|\,d\rho
=\int_{0}^{\cos(\delta/2)} \omega_{d-1}\bigl(\rho\tan(\delta/2)\bigr)^{d-1}\,d\rho.
\]
Evaluating the integral yields
\[
|E_\delta|
=\omega_{d-1}\tan^{d-1}(\delta/2)\int_{0}^{\cos(\delta/2)}\rho^{d-1}\,d\rho
=\frac{\omega_{d-1}}{d}\tan^{d-1}(\delta/2)\cos^d(\delta/2)
=\frac{\omega_{d-1}}{d}\sin^{d-1}(\delta/2)\cos(\delta/2).
\]

\textbf{Step 4: Volume of the Spherical-Cap Part $F_\delta$.}
Fix $\rho\in(\cos(\delta/2),1)$. On the hyperplane $\{\mathbf x_1=\rho\}$, the ball constraint $\sqrt{\sum_{i=1}^d\mathbf x_i^2}<1$
is equivalent to $\sqrt{\sum_{i=2}^d\mathbf x_i^2}<\sqrt{1-\rho^2}$. For $\rho>\cos(\delta/2)$,
\[
\sqrt{1-\rho^2}<\sqrt{1-\cos^2(\delta/2)}=\sin(\delta/2)=\cos(\delta/2)\tan(\delta/2)
<\rho\tan(\delta/2),
\]
so the cone constraint is inactive on these slices: the ball provides the tighter bound.
Thus
\[
\{\mathbf x_1=\rho\}\cap F_\delta=\{\rho\}\times B^{d-1}(0,\sqrt{1-\rho^2}).
\]
Hence,
\[
|F_\delta|
=\int_{\cos(\delta/2)}^{1}|B^{d-1}(0,\sqrt{1-\rho^2})|\,d\rho
=\omega_{d-1}\int_{\cos(\delta/2)}^{1}\bigl(1-\rho^2\bigr)^{\frac{d-1}{2}}\,d\rho.
\]
Let $\tau=\rho^2$, so $d\rho=\frac{1}{2}\tau^{-1/2}\,d\tau$. Then
\[
|F_\delta|
=\frac{\omega_{d-1}}{2}\int_{\cos^2(\delta/2)}^{1}\tau^{-1/2}(1-\tau)^{\frac{d-1}{2}}\,d\tau
=\frac{\omega_{d-1}}{2}\left[
B\!\Bigl(\tfrac12,\tfrac{d+1}{2}\Bigr)
- B\!\Bigl(\cos^2(\tfrac{\delta}{2});\tfrac12,\tfrac{d+1}{2}\Bigr)
\right].
\]

\textbf{Step 5: Conclusion.}
Summing the two contributions,
\[
|C_\delta\cap B^d(0,1)|
=\omega_{d-1}\left[
\frac{1}{d}\sin^{d-1}\!\Bigl(\frac{\delta}{2}\Bigr)\cos\!\Bigl(\frac{\delta}{2}\Bigr)
+\frac{1}{2}\Bigl(B\!\Bigl(\tfrac12,\tfrac{d+1}{2}\Bigr)-B\!\Bigl(\cos^2(\tfrac{\delta}{2});\tfrac12,\tfrac{d+1}{2}\Bigr)\Bigr)
\right],
\]
and scaling gives the claimed formula for $|C_\delta\cap B^d(0,r)|=r^d|C_\delta\cap B^d(0,1)|$.
\end{proof}

The packing number of $C_{\delta,r}$ follows by symmetry (or using Lemma~\ref{lem:Vershynin}).

\begin{corollary}\label{cor:cone_pak}
    The packing number of the $d$-dimensional cone of angle $\delta$ and radius $r$ satisfies
    \[
    \mathcal{P}(C_{\delta,r}, \|\cdot\|_2,\varepsilon)\lesssim\frac{\Gamma(\frac{d+2}2)} {2r\Gamma(\frac{3}2)\Gamma(\frac{d+1}2)}
    \Bigg[\frac{1}{d}\,\sin^{\,d-1}\!\Bigl(\frac{\delta}{2}\Bigr)\cos\!\Bigl(\frac{\delta}{2}\Bigr)
+\frac{1}{2}\Bigl(B\!\Bigl(\tfrac12,\tfrac{d+1}{2}\Bigr)-B\!\Bigl(\cos^2(\tfrac{\delta}{2});\tfrac12,\tfrac{d+1}{2}\Bigr)\Bigr)\Bigg]\mathcal{P}(B^d(0,r), \|\cdot\|_2,\varepsilon)
    \]
\end{corollary}

Finally, we can conclude using Equation~\ref{eq:slope} and Proposition~\ref{prp:packing_finite}.

\end{proof}

\subsection{When the Support Is an Ellipsoid}

We use here the $l_\infty$ norm.

\begin{proposition}
    Let $\Big([x^{\mathrm{min}}_1,x^{\mathrm{max}}_1]\times\cdots\times [x^{\mathrm{min}}_d,x^{\mathrm{max}}_d]\Big)^m\supset\mathcal E$ be an enclosure of the embedding support. Then the slope of the Cramming Task satisfies $C\leq \frac{\sum_{i=1}^d\operatorname{ln}(1+\frac {x^{\mathrm{max}}_i-x^{\mathrm{min}}_i}{\varepsilon})}{\operatorname{ln}(|\mathcal V|)}$.
\end{proposition}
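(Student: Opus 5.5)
The plan follows the template used for the ball and cone enclosures. We bound the packing number of the box enclosure in the $\ell_\infty$ norm, then substitute this bound into the slope criterion of Equation~\ref{eq:slope}.

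\textbf{Step 1 (one dimension).} For a single coordinate interval $[x^{\mathrm{min}}_i,x^{\mathrm{max}}_i]$, an $\varepsilon$-separated subset has pairwise gaps strictly larger than $\varepsilon$. Ordering its points shows it has at most $1+\frac{x^{\mathrm{max}}_i-x^{\mathrm{min}}_i}{\varepsilon}$ elements. Equivalently, we can apply Lemma~\ref{lem:Vershynin} with $d=1$: the interval enlarged by $\varepsilon/2$ on each side has length $x^{\mathrm{max}}_i-x^{\mathrm{min}}_i+\varepsilon$, and dividing by $\varepsilon$ gives the same count.

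\textbf{Step 2 (product structure).} Under $\ell_\infty$, the unit ball of $\mathbb R^{d\times m}$ is itself a product of intervals. So we apply Lemma~\ref{lem:Vershynin} directly to $K=\prod_{i}[x^{\mathrm{min}}_i,x^{\mathrm{max}}_i]^m$. The Minkowski sum $K+(\varepsilon/2)B_\infty$ is the box with each side lengthened by $\varepsilon$. Using Lemma~\ref{lem:Wang}, $|(\varepsilon/2)B_\infty^{dm}|=\varepsilon^{dm}$. The ratio therefore factorizes as
\[
\mathcal P(K,\|\cdot\|_\infty,\varepsilon)\le\prod_{i=1}^d\Big(1+\frac{x^{\mathrm{max}}_i-x^{\mathrm{min}}_i}{\varepsilon}\Big)^m .
\]
Since $\mathcal E\subset K$, monotonicity of the packing number under inclusion gives the same bound for $\mathcal P(\mathcal E,\|\cdot\|_\infty,\varepsilon)$.

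\textbf{Step 3 (slope).} As in the proof of Theorem~\ref{lem:finite}, the number of accessible sequences with prompt length $m$ is at most $\mathcal P(\mathcal E,\|\cdot\|_\infty,\varepsilon)$. Under the equal-volume (Dirac) setting of Equation~\ref{eq:slope}, at least half of the length-$n$ sequences are inaccessible once $n\ge \ln\mathcal P(\mathcal E,\|\cdot\|_\infty,\varepsilon)/\ln|\mathcal V|$. Taking logarithms of the bound from Step 2 shows the threshold is at most $m\sum_{i=1}^d\ln\big(1+\frac{x^{\mathrm{max}}_i-x^{\mathrm{min}}_i}{\varepsilon}\big)/\ln|\mathcal V|$. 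Dividing by $m$ yields the stated bound on $C$.

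The computation is routine. The one point needing care is whether the packing number is measured with respect to the $\ell_\infty$ norm on the whole of $\mathbb R^{d\times m}$, which is what makes the factorization exact. I would state explicitly that Assumption~\ref{ass:finite}'s cube grid is compatible with this choice. If instead one insisted on the open-ball packing count, I would simply use the strict separation argument from Step 1 coordinatewise, which gives the same bound.
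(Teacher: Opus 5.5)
Your proposal is correct and follows essentially the same route as the paper, which obtains the bound as a direct corollary of Equation~\ref{eq:slope} and the volumetric packing estimate behind Proposition~\ref{prp:packing_finite}. Your Step~2 applies Lemma~\ref{lem:Vershynin} directly to the box, so that the $\ell_\infty$ volume ratio factorizes as $\prod_i\big(1+(x^{\mathrm{max}}_i-x^{\mathrm{min}}_i)/\varepsilon\big)^m$; this makes explicit a step the paper leaves implicit, since Proposition~\ref{prp:packing_finite} is stated only for balls.
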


\begin{proof}
    This is a simple corollary of Equation~\ref{eq:slope} and Proposition~\ref{prp:packing_finite}.
\end{proof}

\section{Supplementary Experimental Results}

\subsection{Latent space constants for each model}

\begin{table*}[ht]
\centering
\begin{tabular}{lccccccc}
\toprule
 & \multicolumn{3}{c}{Pythia} & \multicolumn{2}{c}{Qwen-2.5} & Llama-3.2 & Gemma-3 \\[-0.5ex]
\cmidrule(r{2pt}l{2pt}){2-4}\cmidrule(r{2pt}l{2pt}){5-6}\cmidrule(r{2pt}l{2pt}){7-7}\cmidrule(r{2pt}l{2pt}){8-8}
 & 160M & 410M & 1B & 0.5B & 1.5B & 1B & 270M \\[-0.5ex]
\midrule
Hidden dimension $d$ & 768 & 1024 & 2048 & 896 & 1536 & 2048 & 640   \\
Vocabulary Size $|\mathcal{V}|$& 50304 & 50304 & 50304 & 151936 & 151936 & 128256 & 262144 \\
Radius $r$ &  63.62 &  54.90 &  65.68 &  309.00 & 153.62 & 41.06 &  10.74   \\
Cone angle $\theta$ & 1.96 & 1.87 & 2.22 & 2.23 & 2.34 & 1.73 & 1.82  \\
\bottomrule
\end{tabular}
\vspace{1.0em}
\caption{Model constants used to estimate slope upper-bound ratios in Table~\ref{tbl:slope}. The dimensionality $d$ and vocabulary size $|\mathcal{V}|$ are determined by the model architecture, while $r$ and $\theta$ are estimated in Section~\ref{sec:support}. }
\label{tbl:constants}
\vspace{-1.5em}
\end{table*}

\subsection{Estimation of precision $\varepsilon$}
\label{app:epsilon}

\subsubsection{Fixed Precision}
We assume a finite numerical precision in transformer computations: inputs closer than a threshold $\varepsilon$ are effectively indiscernible. In practice, transformers are typically evaluated using half-precision floating-point arithmetic (fp16), which uses $k_{\max}=5$ exponent bits and $p=11$ bits of significand precision.\footnote{Transformers are often trained using Brain Floating Point (bf16), but since it has fewer precision bits ($p=8$), it implies a larger $\varepsilon$. We therefore use fp16 as a conservative choice.}
For numbers with exponent $k$, the spacing between representable values is $2^{k-p}$. Thus, two values at exponent $k$ become indistinguishable after rounding if their difference is below this spacing.

We use the standard interval definition of machine epsilon: the gap between $1$ and the next representable floating-point value, i.e. $\varepsilon = 2^{-10}$, as used by Pytorch~\citep{pytorchTypeInfo}. 
This definition lets us instantiate precision-dependent constants for fp16 and derive upper bounds on model slopes. 

\subsubsection{Variable Precision}

One could however argue that this definition is not precise: the precision should vary depending on the order of magnitude of the number being represented. To that end, we revise the upper bound of the packing number as follows.

Let $\mathcal E = [r_{\min}, r_{\max}]$ be the embedding space, where $0 < r_{\min} \le r_{\max}$ (by symmetry, we can easily extend to the setting where $\mathcal E=[-r_{\mathrm{max}}^{-},-r_{\mathrm{min}}^{-}]\cup[r_{\mathrm{min}}^{+},r_{\mathrm{max}}^{+}]$).

Define
$$
a = \lceil \log_2 r_{\min} \rceil,
\qquad
b = \lceil \log_2 r_{\max} \rceil.
$$
Then
$$
2^{a-1} < r_{\min} \le 2^a,
\qquad
2^{b-1} < r_{\max} \le 2^b.
$$

And
$$
[r_{\min}, r_{\max}]=
[r_{\min},2^a]
\cup
[2^a,2^{a+1}]
\cup \cdots \cup
[2^{b-1},r_{\max}].
$$
Hence, by subadditivity of the packing number,
$$
\mathcal P(\mathcal E)
\le
\mathcal P([r_{\min},2^a])
+\sum_{j=a}^{b-2}\mathcal P([2^j,2^{j+1}])
+\mathcal P([2^{b-1},r_{\max}]).
$$

Using the precision at scale $2^j$,
$
\varepsilon_j = 2^{j-12},
$
we obtain
 - $
\mathcal P([r_{\min},2^a])
\le
\frac{2^{a+1}-2r_{\min}}{2^{a-12}},
$
 - $
\mathcal P([2^j,2^{j+1}])
\le
\frac{2^{j+1}-2^j}{2^{j-12}}
=2^{12},
$
 - $
\mathcal P([2^{b-1},r_{\max}])
\le
\frac{2r_{\max}-2^b}{2^{b-12}}.
$

Therefore,
$$
\mathcal P(\mathcal E)
\le
\frac{2^{a+1}-2r_{\min}}{2^{a-12}}
+
2^{12}(b-a-1)
+
\frac{2r_{\max}-2^b}{2^{b-12}}.
$$
After simplification,
$$
\mathcal P(\mathcal E)
\le
2^{12}(b-a)+2^{13-b}r_{\max}-2^{13-a}r_{\min}.
$$

Thus, we obtain the upper bound
$$
\boxed{
\mathcal P([r_{\min},r_{\max}])
\le
2^{12}(b-a)+2^{13-b}r_{\max}-2^{13-a}r_{\min}
}.
$$

\subsection{Accessibility of Sequences for Different Models}
We follow the cramming protocol described in Section~\ref{sec:cramming}. Following~\cite{kuratov-etal-2025-cramming}, we optimize the soft prompts using the Adam optimizer for 3000 steps with a learning rate of 0.01, and a weight decay of 0.01. We run the optimization on a server with 2 NVIDIA H100 GPUs.

Figure~\ref{fig:cram_pythia_suite} reports results on the Pythia model suite at three scales with matched architecture (A: 160M, B: 410M, C: 1B). Across all model sizes, accessibility decreases sharply as sequence length increases, while the derived $n_{50}$ values grow approximately linearly with the number of memory vectors $m$.

Figure~\ref{fig:cram_architectures} compares different model architectures (A: Qwen-2.5, B: Gemma-3, C: Llama-3.2). While the accessibility curves for Gemma-3 and Llama-3.2 deviate from a clean exponential decay, all architectures still exhibit an approximately linear relationship between $n_{50}$ and $m$.

\begin{figure*}[t]
    \centering

    \begin{subfigure}[t]{0.95\linewidth}
        \centering
        \includegraphics[width=\linewidth]{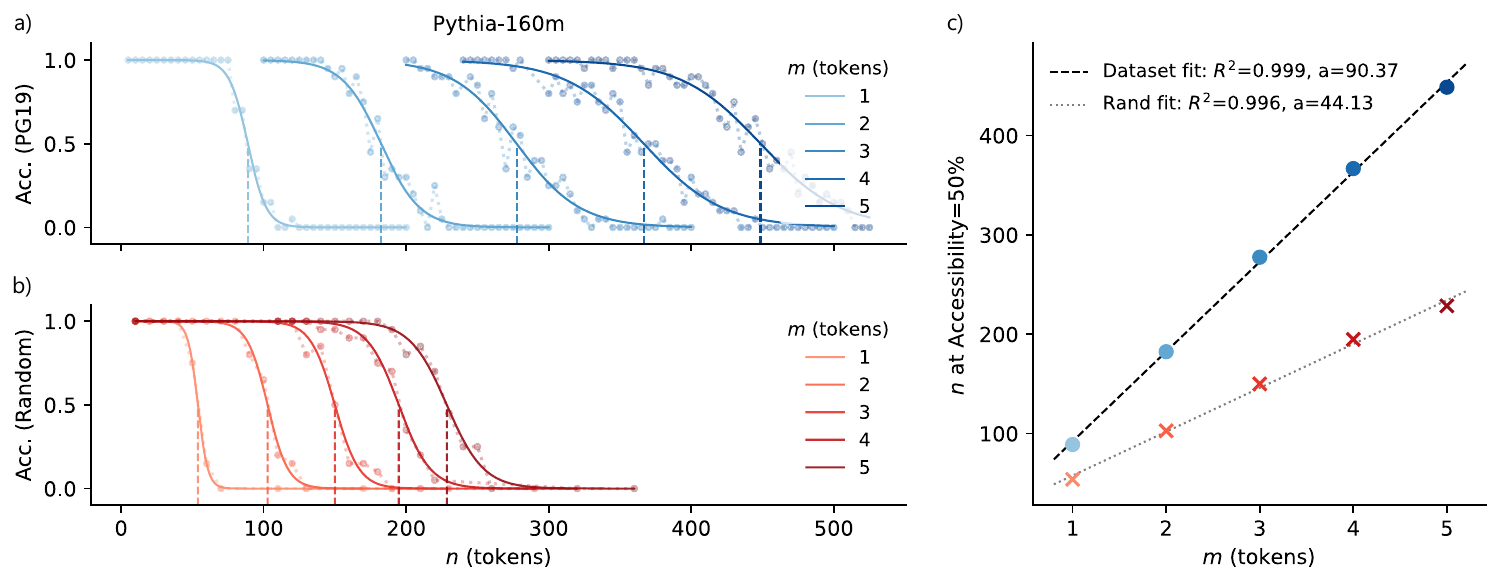}
        \caption{Pythia-160M}
    \end{subfigure}

    \vspace{0.6em}

    \begin{subfigure}[t]{0.95\linewidth}
        \centering
        \includegraphics[width=\linewidth]{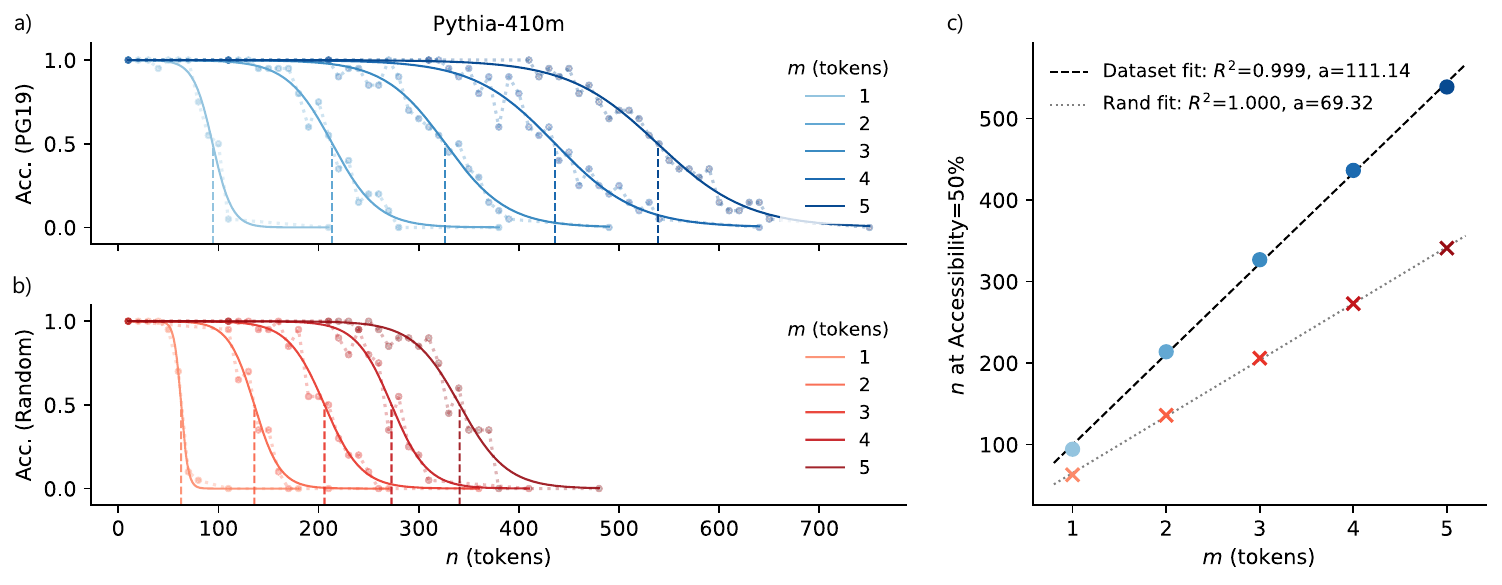}
        \caption{Pythia-410M}
    \end{subfigure}

    \vspace{0.6em}

    \begin{subfigure}[t]{0.95\linewidth}
        \centering
        \includegraphics[width=\linewidth]{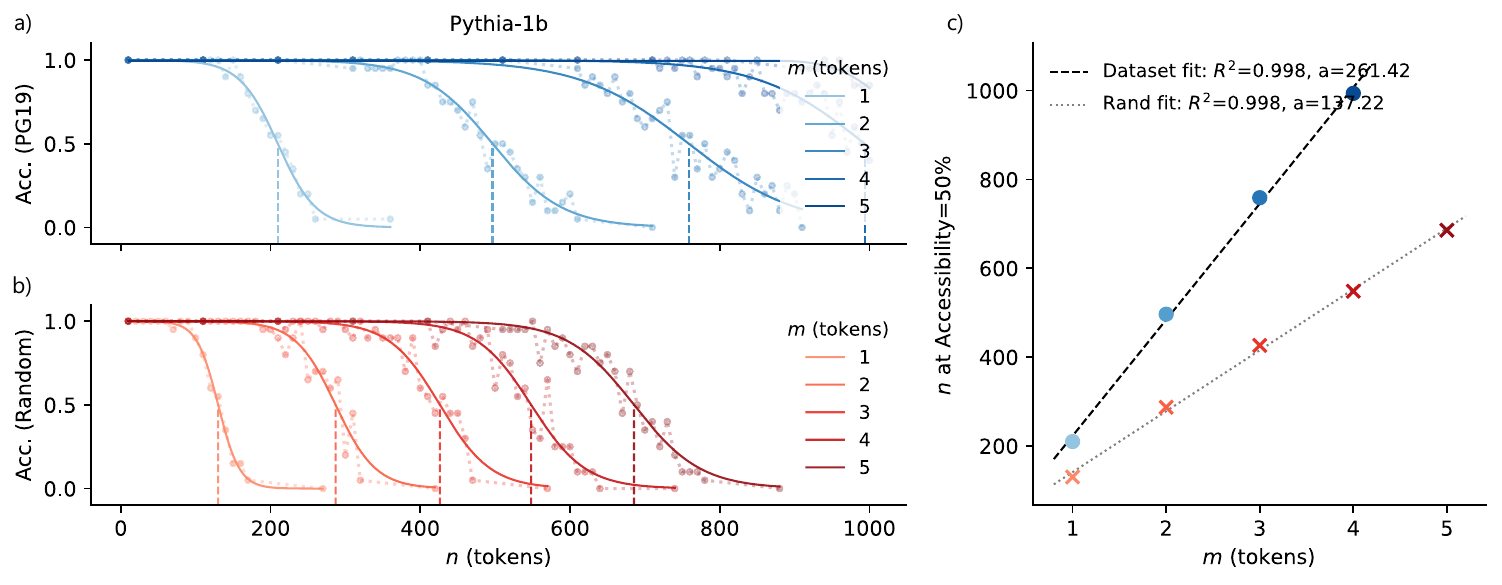}
        \caption{Pythia-1B}
    \end{subfigure}

    \caption{Mean accessibility for a) PG19 and b) random target sequences of length $n$ as a function of the number of trainable memory vectors $m$. For each $m$, we fit a sigmoid (solid) and mark $n_{50}$ where the fit crosses 0.5 (vertical dashed line). (c) $n_{50}(m)$ for PG19 (blue) and random (red), with linear fits (dashed). Results shown for the Pythia model suite at different scales A) 160M, B) 410M, C) 1B.}
    \label{fig:cram_pythia_suite}
\end{figure*}

\begin{figure*}[t]
    \centering

    \begin{subfigure}[t]{0.95\linewidth}
        \centering
        \includegraphics[width=\linewidth]{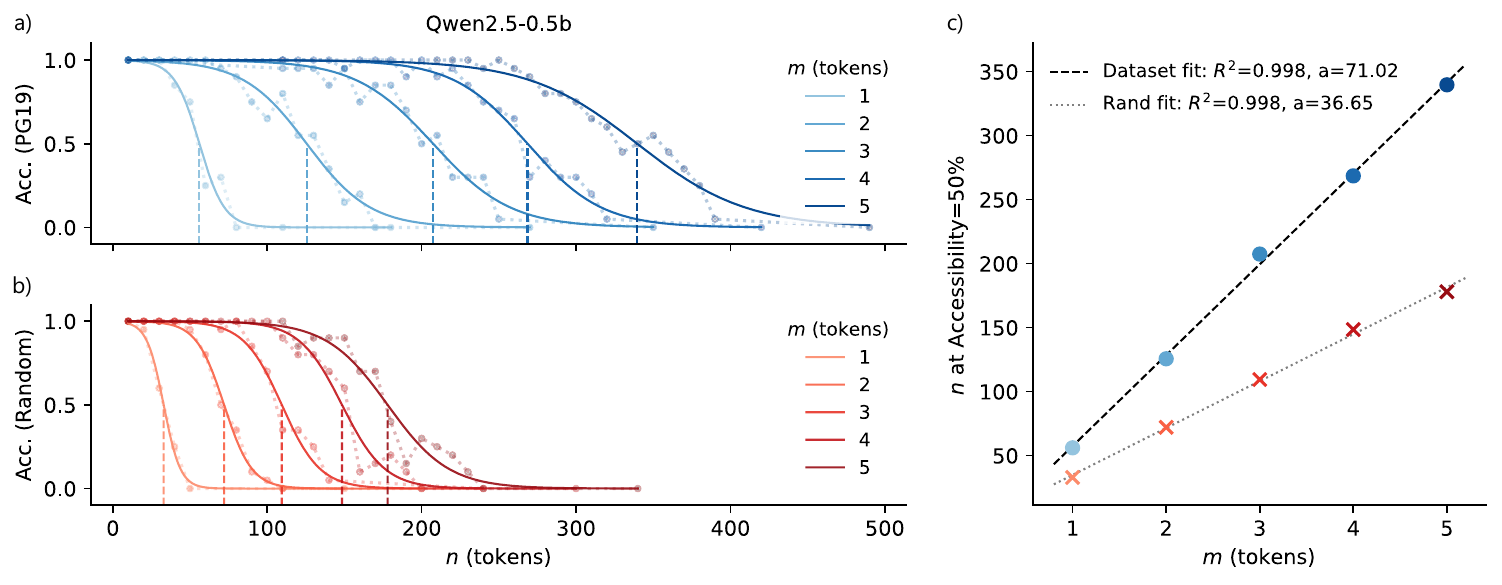}
        \caption{Qwen 2.5 (0.5B)}
    \end{subfigure}

    \vspace{0.6em}
    
    \begin{subfigure}[t]{0.95\linewidth}
        \centering
        \includegraphics[width=\linewidth]{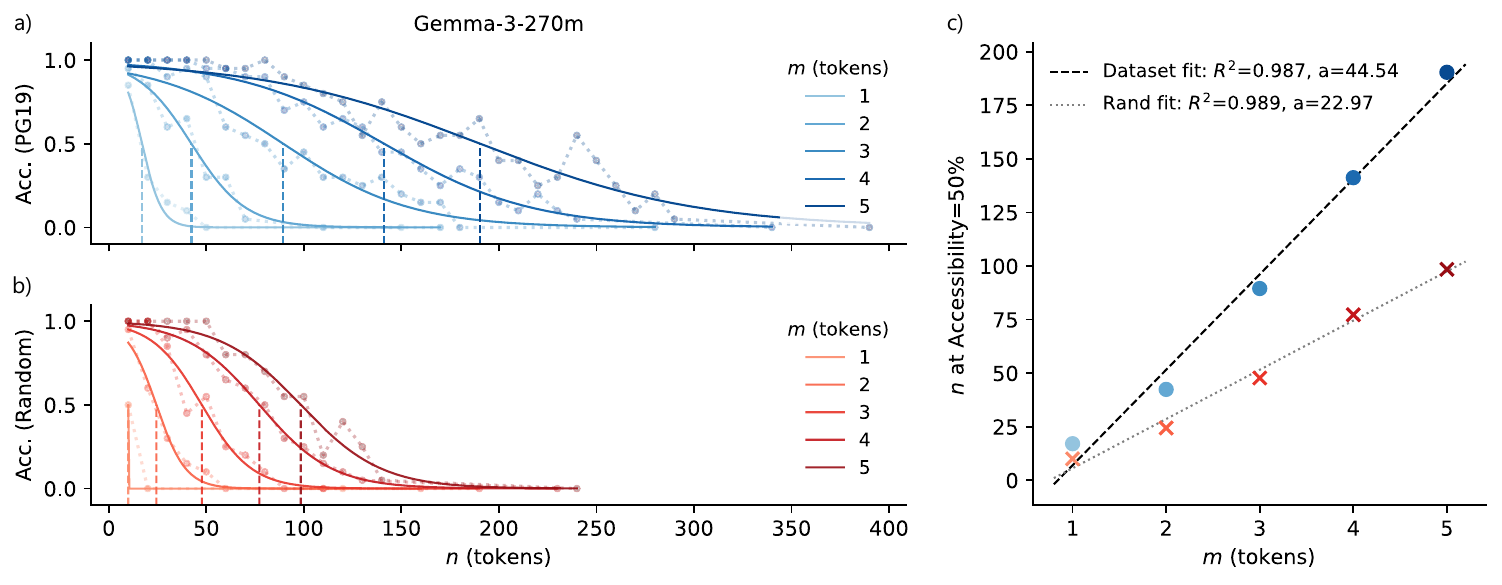}
        \caption{Gemma-3 270M}
    \end{subfigure}

    \vspace{0.6em}

    \begin{subfigure}[t]{0.95\linewidth}
        \centering
        \includegraphics[width=\linewidth]{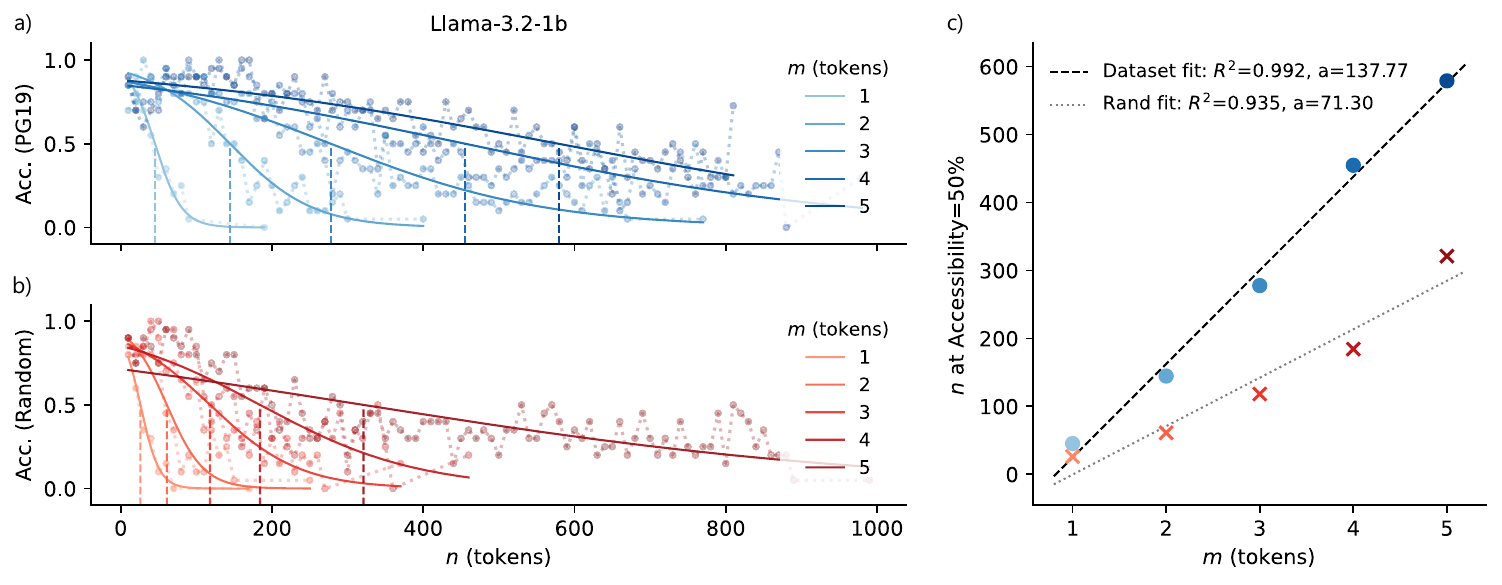}
        \caption{Llama-3.2 1B}
    \end{subfigure}

    \caption{Mean accessibility for a) PG19 and b) random target sequences of length $n$ as a function of the number of trainable memory vectors $m$. For each $m$, we fit a sigmoid (solid) and mark $n_{50}$ where the fit crosses 0.5 (vertical dashed line). (c) $n_{50}(m)$ for PG19 (blue) and random (red), with linear fits (dashed). Results shown for three architectures A) Qwen-2.5, B) Gemma-3, C) Llama-3.2.}
    \label{fig:cram_architectures}
\end{figure*}

\subsection{Support of Different Models}
\label{app:support}

We estimate support parameters for three geometries (Ball, Cone, and Ellipsoid) by Monte Carlo using the sampling procedure described in Section~\ref{sec:support}.
For each sequence length $\ell \in {1,\dots,\ell_{\max}}$, we draw $N=10^5$ i.i.d. token sequences $\textbf{t}^{(k)} = [t_1,\dots,t_\ell]$, where tokens are sampled independently and uniformly as $t_i \sim \mathcal{U}(\{1,\dots,|\mathcal{V}|\})$.
Each sequence is mapped to an embedding vector $Y^{(k)} = (L(\textbf{X}^{(k)}))_{:,-1} \in \mathbb{R}^d$, where $\textbf{X}^{(k)}$ is the embedded input of $\textbf{t}^{(k)}$.

For the Ball baseline, we estimate the $\ell_\infty$ radius \[r = \max_k \|Y_k\|_\infty . \]
For the Cone baseline, we additionally estimate the minimum pairwise cosine similarity to find the opening angle $\theta$
\[c_{\min} = \min_{i<j} \frac{\langle Y_i, Y_j\rangle}{\|Y_i\|_2 \, \|Y_j\|_2},
\qquad \theta = \arccos(c_{\min}) .\]
All support parameters are estimated independently for each $\ell$. We verified empirically that increasing $N$ beyond $10^5$ does not materially change the estimates.

Given $(r,\theta)$, together with the embedding dimension $d$, vocabulary size $|\mathcal{V}|$, and precision $\varepsilon$ defined in Appendix~\ref{app:epsilon}, we compute the theoretical slope proxies used in our plots:
\[
C_{\text{ball}} = \frac{d \log(1 + 2R/\varepsilon)}{\log |\mathcal{V}|},
\qquad
C_{\text{cone}} = \frac{d \log(1 + 2R/\varepsilon) + \log \mathrm{FracCone}(\theta; d)}{\log |\mathcal{V}|}.
\]
with $\mathrm{FracCone}(\theta; d)$ defined as the ratio of packing numbers in Corollary~\ref{cor:cone_pak}.
For the Ellipsoid (axis-aligned rectangle) baseline, we estimate per-coordinate ranges $r_j = \max_{k,k'}|(Y_k)_j - (Y_{k'})_j|$ and use
\[
C_{\text{elli}} = \frac{\sum_{j=1}^d \log(1 + r_j/\varepsilon)}{\log |\mathcal{V}|},
\]

Figure~\ref{fig:support} reports $C_{\text{ball}}$, $C_{\text{cone}}$, and $C_{\text{elli}}$ estimated from samples $\mathbf X_k$ with increasing maximum sequence length $\ell_{\max}$. For all models, the estimated constants stabilize rapidly as $\ell_{\max}$ increases, indicating that short sequences are sufficient to obtain stable estimates of the slope upper bounds.

\begin{figure*}[t]
    \centering
    \includegraphics[width=\linewidth]{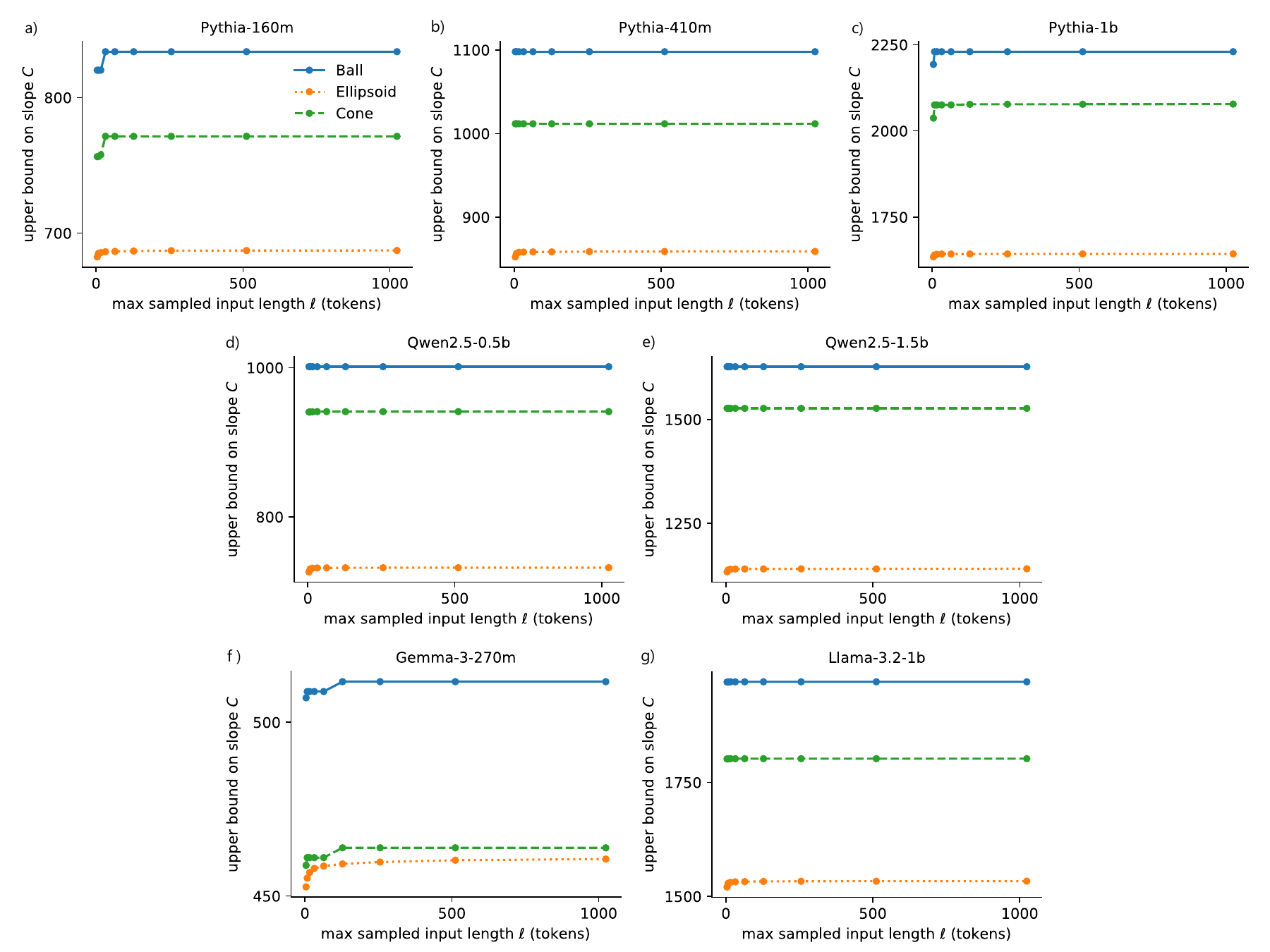}
    \caption{Upper bound on $C$ for different support geometry ---Ball (blue), Cone (green), Ellipsoid (orange)---, estimated using 10K randomly sampled input strings of maximum length $\ell$. Sampling prompts longer than $\ell \approx 500$ suffices to estimate the upper bound. Pythia models for different sizes: a) 160M b) 410M, c) 1B and  support for different model architectures d--e) Qwen (0.5B, 1.5B) f) Llama 1B, g) Gemma 270M.}
    \label{fig:support}
\end{figure*}


\section{Computing the Cell Volume Distribution}\label{app:cell_vol}

In order to compute the cell volumes $|\mathcal{E}_t|$ corresponding to each token in Section~\ref{sec:cells}, we use the ellipsoid support estimate from Section~\ref{sec:support} and sample points uniformly in volume inside this ellipsoid. (i.e. we sample $\mathbf Y^{(k)} \sim \mathcal{U}([\mathbf x_1^{\min},\mathbf x_1^{\max}]\times \dots \times [\mathbf x_d^{\min}, \mathbf x_d^{\max}])$)
For each sampled point $\mathbf Y^{(k)}$, we compute its greedy token $\hat t^{(k)} = \sigma(\mathbf F\mathbf Y^{(k)})$ via the decoder and count how often each token is selected. 
The relative volume of token $t$ is estimated by
\[
|\mathcal{E}_t|/|\mathcal{E}|
\approx \frac{1}{N}\sum_{k=1}^N \mathbb{I}\{\hat t^{(k)}=t\},
\]
where $\mathcal{E}$ is the support ellipsoid and $N$ is the number of samples (here $N=50$M).

\begin{figure}[t]
    \centering
    \includegraphics[width=0.50\textwidth]{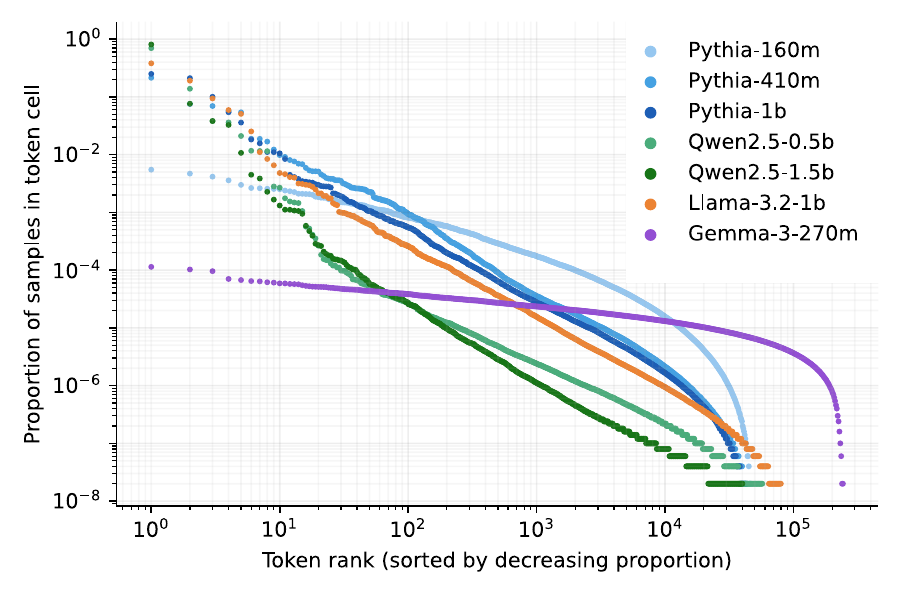}
    \caption{Relative volumes of decoder cells across tokens (log-log). For each model, tokens are sorted by estimated cell volume (largest on the left). A small set of tokens (typically $<10^2$) accounts for a large fraction of the support volume, while most tokens ($10^4$--$10^5$) have tiny individual volumes (often $10^{-6}$--$10^{-8}$ of the support).}
    \label{fig:cell_volume}
\end{figure}

Figure~\ref{fig:cell_volume} shows the ranked cell-volume profiles. For most models, volumes are extremely uneven: a small set of tokens takes up a large fraction of the support, while the vast majority of tokens have tiny cells. 
Gemma-3 (and, to a lesser extent, Pythia-160M) looks noticeably more uniform. One possible explanation is their large vocabulary size compared to their number of parameters: if a model has many rare tokens, their decoder embeddings may be weakly trained and remain closer to random. This can lead to a more even partition of the embedding support, with fewer overwhelmingly large cells.
As a sanity check, Table~\ref{tbl:top_tokens} shows that the largest-volume cells in most models align with very frequent tokens (whitespace, punctuation, and common function words). In contrast, Gemma-3 and Pythia-160M have top-volume tokens that appear less obviously frequent, which is consistent with the vocabulary explanation above.

We estimate the distribution of subsequent sequences by using the previously estimated next-token distribution $\mathcal{D}$, and then model the distribution over length-$n$ continuations using the $n$-fold multiplicative convolution $\mathcal{D}^n$. 
Figure~\ref{fig:convolutions_curves} shows the resulting volume-cell distributions for sequence lengths $n \in \{1,\dots,6\}$ across models. 
Most models exhibit highly skewed next-token distributions, leading to a rapid concentration of mass near zero as $n$ increases. 
Models with more balanced distributions, such as Gemma3-270m, exhibit a slower decay.

\begin{table*}[t]
\centering
\small
\caption{Tokens with the largest estimated decoder-cell volumes (Section~\ref{sec:cells}). Volumes are shown in parentheses. Whitespace/control characters are rendered explicitly (e.g., \texttt{\textvisiblespace}, \texttt{\textbackslash n}). For non-ASCII tokens (e.g., Gemma), we show the Unicode code.}
\label{tbl:top_tokens}
\begin{tabular}{l p{0.62\linewidth}}
\toprule
Model & Tokens corresponding to largest cells $\mathcal{E}_t$ ($|\mathcal{E}_t|/|\mathcal{E}|$) \\
\midrule
Pythia-160M &
\texttt{<}$\nu$\texttt{>} ($5.5\mathrm{e}{-3}$), \texttt{<}$\kappa$\texttt{>} ($4.7\mathrm{e}{-3}$), \texttt{<\textvisiblespace>} ($4.1\mathrm{e}{-3}$), \texttt{<|endoftext|>} ($3.9\mathrm{e}{-3}$), \texttt{<well>} ($2.9\mathrm{e}{-3}$)\\
Pythia-410M &
\texttt{<\textbackslash n>} (0.21), \texttt{<,>} (0.08), \texttt{< (>} (0.07), \texttt{<->} (0.06), \texttt{<.>} (0.05)\\
Pythia-1B &
\texttt{<\textbackslash n>} (0.25), \texttt{<,>} (0.21), \texttt{<->} (0.10), \texttt{<.>} (0.054), \texttt{< (>} (0.036)\\
Llama-3.2 1B &
\texttt{<\textvisiblespace>} (0.38), \texttt{<,>} (0.19), \texttt{<\textbackslash n>} (0.094), \texttt{< (>} (0.059), \texttt{<->} (0.051)\\
Qwen-2.5 0.5B &
\texttt{<\textvisiblespace>} (0.70), \texttt{<,>} (0.14), \texttt{<->} (0.038), \texttt{<\textbackslash n>} (0.036), \texttt{<.>} (0.021)\\
Qwen-2.5 1.5B &
\texttt{<\textvisiblespace>} (0.81), \texttt{<,>} (0.076), \texttt{<\textbackslash n>} (0.038), \texttt{<->} (0.033), \texttt{< (>} (0.011)\\
Gemma-3 270M &
\texttt{<eos>} ($1.1\mathrm{e}{-4}$), \texttt{<U+000A U+000A>} ($1.0\mathrm{e}{-4}$), \texttt{</>} ($9.6\mathrm{e}{-5}$), \texttt{<U+3054 U+7406 U+89E3>} ($7.0\mathrm{e}{-5}$), \texttt{<<h2>>} ($6.7\mathrm{e}{-5}$)\\
\bottomrule
\end{tabular}
\end{table*}

\begin{figure}[t]
    \centering
    \includegraphics[width=1.0\textwidth]{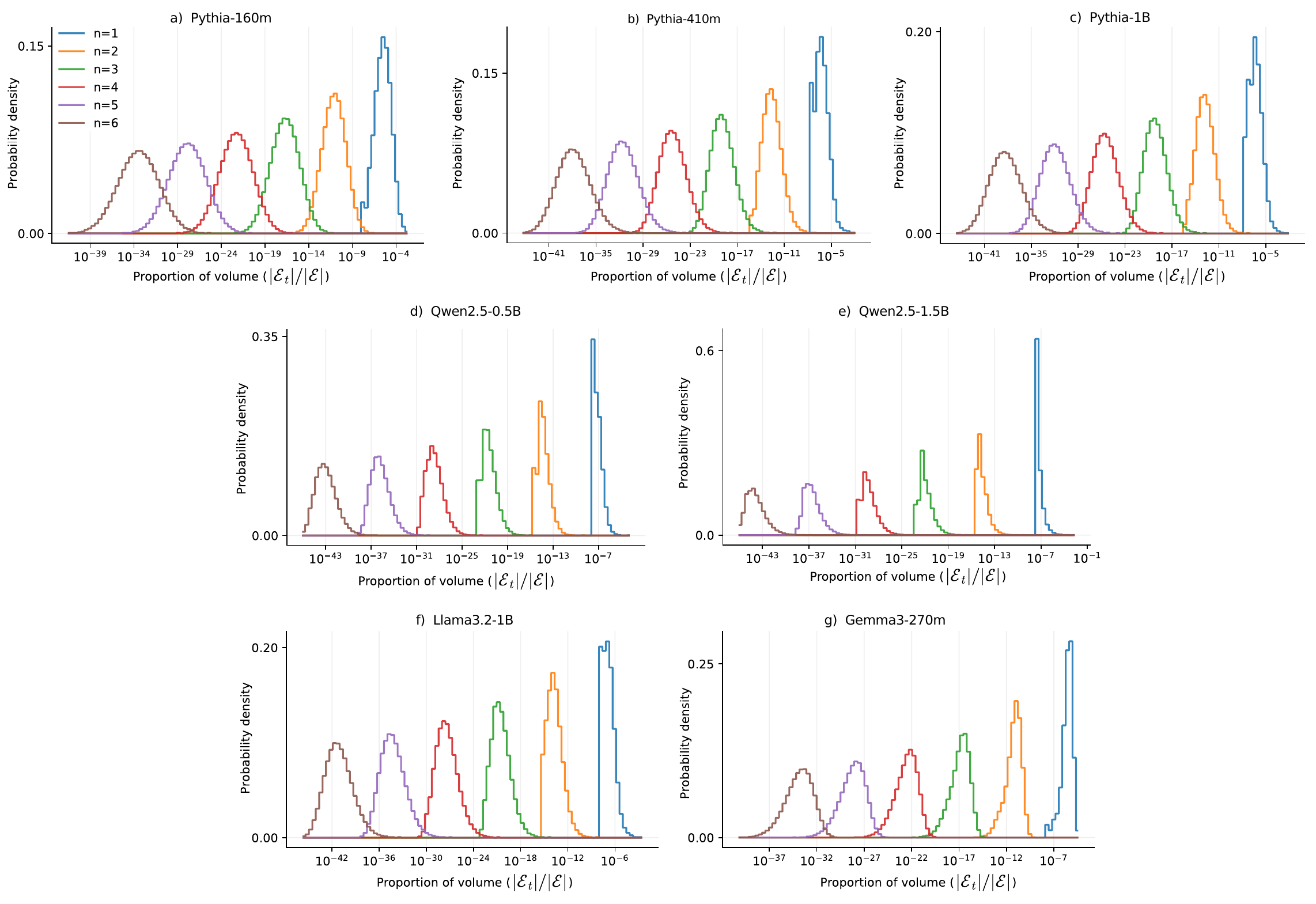}
    \caption{Distribution of $n$-sequences volumes proportions by estimating the $n$-fold convolution of the empirical one-step volume distribution $\mathcal{D}$ for different models: a) Pythia-160M, b) Pythia-410M, c) Pythia-1B, d) Qwen2.5 0.5B, e) Qwen2.5 1.5B, f) Llama3.2 1B, g) Gemma 270M. }
\label{fig:convolutions_curves}
\end{figure}

\end{document}